\theoremstyle{plain}
\newtheorem{theorem}{Theorem}[section]
\newtheorem{proposition}[theorem]{Proposition}
\newtheorem{lemma}[theorem]{Lemma}
\newtheorem{corollary}[theorem]{Corollary}
\theoremstyle{definition}
\newtheorem{definition}[theorem]{Definition}
\theoremstyle{remark}
\icmltitlerunning{LLM Enhancers for GNNs: An Analysis from the Perspective of Causal Mechanism Identification}
\begin{document}

\twocolumn[
\icmltitle{LLM Enhancers for GNNs: An Analysis from the Perspective of Causal Mechanism Identification}

% It is OKAY to include author information, even for blind
% submissions: the style file will automatically remove it for you
% unless you've provided the [accepted] option to the icml2025
% package.

% List of affiliations: The first argument should be a (short)
% identifier you will use later to specify author affiliations
% Academic affiliations should list Department, University, City, Region, Country
% Industry affiliations should list Company, City, Region, Country

% You can specify symbols, otherwise they are numbered in order.
% Ideally, you should not use this facility. Affiliations will be numbered
% in order of appearance and this is the preferred way.
\icmlsetsymbol{equal}{*}

\begin{icmlauthorlist}
\icmlauthor{Hang Gao}{equal,1,2}
\icmlauthor{Wenxuan Huang}{equal,1,2,3}
\icmlauthor{Fengge Wu}{1,2,3}
\icmlauthor{Junsuo Zhao}{1,2,3}
\icmlauthor{Changwen Zheng}{1,2,3}
\icmlauthor{Huaping Liu}{4}
% \icmlauthor{Firstname6 Lastname6}{sch,yyy,comp}
% \icmlauthor{Firstname7 Lastname7}{comp}
% %\icmlauthor{}{sch}
% \icmlauthor{Firstname8 Lastname8}{sch}
% \icmlauthor{Firstname8 Lastname8}{yyy,comp}
%\icmlauthor{}{sch}
%\icmlauthor{}{sch}
\end{icmlauthorlist}

\icmlaffiliation{1}{Institute of Software, Chinese Academy of Sciences}
\icmlaffiliation{2}{National Key Laboratory of Space Integrated Information System}
\icmlaffiliation{3}{University of Chinese Academy of Sciences}
\icmlaffiliation{4}{Tsinghua University}

\icmlcorrespondingauthor{Fenge Wu}{fengge@iscas.ac.cn}
% \icmlcorrespondingauthor{Firstname2 Lastname2}{first2.last2@www.uk}

% You may provide any keywords that you
% find helpful for describing your paper; these are used to populate
% the "keywords" metadata in the PDF but will not be shown in the document
\icmlkeywords{Machine Learning, ICML}

\vskip 0.3in
]

% this must go after the closing bracket ] following \twocolumn[ ...

% This command actually creates the footnote in the first column
% listing the affiliations and the copyright notice.
% The command takes one argument, which is text to display at the start of the footnote.
% The \icmlEqualContribution command is standard text for equal contribution.
% Remove it (just {}) if you do not need this facility.

%\printAffiliationsAndNotice{}  % leave blank if no need to mention equal contribution
\printAffiliationsAndNotice{\icmlEqualContribution} % otherwise use the standard text.

\begin{abstract}
The use of large language models (LLMs) as feature enhancers to optimize node representations, which are then used as inputs for graph neural networks (GNNs), has shown significant potential in graph representation learning. However, the fundamental properties of this approach remain underexplored. To address this issue, we propose conducting a more in-depth analysis of this issue based on the interchange intervention method. First, we construct a synthetic graph dataset with controllable causal relationships, enabling precise manipulation of semantic relationships and causal modeling to provide data for analysis. Using this dataset, we conduct interchange interventions to examine the deeper properties of LLM enhancers and GNNs, uncovering their underlying logic and internal mechanisms. Building on the analytical results, we design a plug-and-play optimization module to improve the information transfer between LLM enhancers and GNNs. Experiments across multiple datasets and models validate the proposed module. Codes can be found in \url{https://github.com/WX4code/LLMEnhCausalMechanism}.
\end{abstract}

\section{Introduction}

With the rapid development of LLMs \cite{DBLP:conf/nips/BrownMRSKDNSSAA20,DBLP:conf/naacl/DevlinCLT19,DBLP:journals/corr/abs-2407-21783}, their semantic understanding and feature generation capabilities have demonstrated significant potential across various fields \cite{10.1093/bib/bbad493,DBLP:journals/aei/Mustapha25,DBLP:journals/csur/LiuPLLHZWKXY25}. In the domain of graph representation learning, recent studies have integrated LLMs with GNNs to enhance performance \cite{mao2024advancinggraphrepresentationlearning}. One category of methods employs LLMs as feature enhancers to optimize node representations \cite{DBLP:journals/sigkdd/ChenMLJWWWYFLT23,DBLP:conf/www/0001RTYC24}, which are then used as inputs for GNNs to build a unified model. Such methods leverage the pre-trained knowledge of LLMs to generate richer and more semantically coherent features, addressing the limitations of traditional GNNs while incorporating domain knowledge from LLMs into the features \cite{DBLP:journals/corr/abs-2310-09872}. They also demonstrate strong generalization capabilities in heterogeneous graph representation learning scenarios \cite{DBLP:conf/coling/LiHL0L25}. Numerous recent methods \cite{DBLP:journals/sigkdd/ChenMLJWWWYFLT23,DBLP:conf/nips/HuangRCK0LL23,liuone} have adopted this \textit{LLM-enhancer-plus-GNN} framework, achieving excellent results across various graph representation learning tasks.

However, despite their broad applications, there is a noticeable lack of dedicated research examining the fundamental framework of the LLM-enhancer-plus-GNN paradigm. The deeper properties and mechanisms underlying this framework remain largely unexplored. This paper aims to address this gap. Meanwhile, achieving this is far from straightforward. Since both the LLM enhancer and GNN are composed of neural networks, each is inherently challenging to model formally \cite{DBLP:conf/iclr/LuMHLN24}. When these two complex networks are combined, conducting a unified analysis becomes even more difficult. 

% While numerous methods currently utilize LLM enhancers to optimize GNNs, there is a noticeable lack of dedicated research examining the fundamental framework of the LLM-enhancer-plus-GNN paradigm. The deeper properties and mechanisms underlying this framework remain largely unexplored. This paper seeks to address this research gap. However, achieving this is far from straightforward. Both LLM enhancers and GNNs are composed of intricate neural network architectures, each of which is inherently challenging to model formally \cite{DBLP:conf/iclr/LuMHLN24}. When these two complex networks are integrated, conducting a unified and comprehensive analysis becomes even more difficult.

\begin{wrapfigure}{l}{0.23\textwidth} % 'l' for left, 0.4\textwidth for image width
    \centering
    \includegraphics[width=0.25\textwidth]{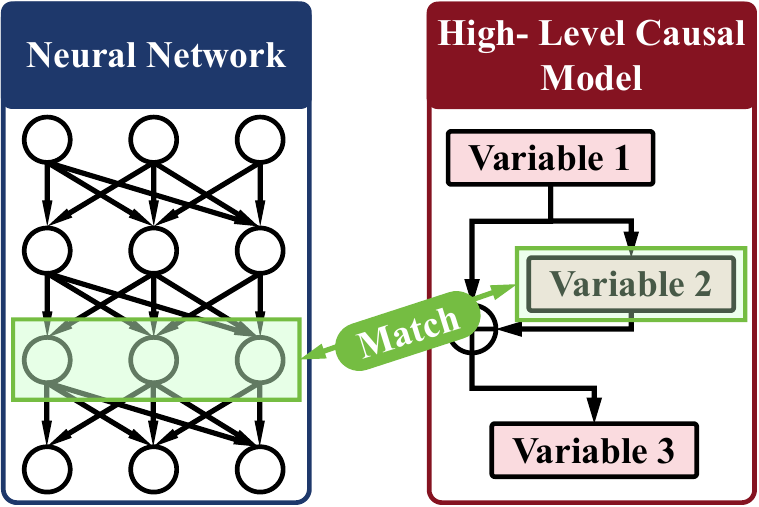} % Adjust image size slightly smaller than wrap width
    \vskip -0.15in
    \caption{A simple illustration of variable alignment achieved by interchange intervention.}
    \label{fig:motiv}
\end{wrapfigure}

To address this issue, we introduce the interchange intervention approach \cite{DBLP:conf/icml/WuDGZP23,DBLP:conf/acl/HuangWMP23} from causality theory \cite{DBLP:books/acm/22/Pearl22i} to perform the analysis. Specifically, we construct a novel synthetic graph dataset with controllable causal relationships. Here, causal relationships refer to the ground-truth cause-and-effect connections, which can be utilized to evaluate the model's ability to accurately capture and represent key underlying information. While synthetic graph datasets are often used to analyze the properties of graph representation learning algorithms \cite{DBLP:conf/iclr/WuWZ0C22}, our dataset is further enhanced to enable precise manipulation of complex semantic relationships. This feature makes the dataset more suitable for evaluating the performance and characteristics of the LLM-enhancer-plus-GNN models. Based on this dataset, we generate training and testing data. Using the predefined data synthesis process, we model the intrinsic causal relationships in the data as a high-order causal model, which accurately captures the complex relational structures within the dataset. Subsequently, we train and test the GNN model enhanced by the LLM enhancer using the synthetic data. Through the interchange intervention method, we systematically analyze the correspondence between the LLM-enhancer-plus-GNN model and the high-order causal model, aiming to uncover the internal logical structure of the black-box neural network. Figure \ref{fig:motiv} provides a simplified illustration of the objectives we aim to achieve. Furthermore, we theoretically validate the reliability of this analytical approach and derive a series of important findings. 

Additionally, based on our findings, we identify areas for improvement in the information transfer between the LLM enhancer and GNN. To address this, we developed a plug-and-play optimization module designed to better assist the LLM enhancer in optimizing GNN performance. The effectiveness of this module has been validated across multiple datasets and various models. Our contributions are summarized as follows:
\begin{itemize}
    \item We construct a synthetic graph dataset with controllable causal relationships, capable of simulating complex semantic associations within graphs.
    \item We design an analysis framework based on interchange intervention and demonstrate its effectiveness both theoretically and experimentally. Additionally, we also conduct a comprehensive study on the LLM-enhancer-plus-GNN paradigm, yielding a series of findings.
    \item We propose a novel optimization module to enhance the information transfer between the LLM enhancer and the GNN, validating its effectiveness through experiments on multiple datasets and models.
\end{itemize}

\section{Related Works}

\paragraph{Causal Mechanism Identification within Neural Networks.}
Research on explaining and understanding deep learning models has been ongoing. For this purpose, some studies attempt to uncover interpretable causal mechanisms within neural networks \cite{DBLP:conf/blackboxnlp/GeigerRP20,DBLP:conf/nips/GeigerLIP21} and training methods for inducing such interpretable mechanisms \cite{DBLP:conf/icml/GeigerWLRKIGP22,DBLP:conf/acl/HuangWMP23}. These methods can be classified into iterative nullspace projection \cite{DBLP:conf/acl/RavfogelEGTG20,DBLP:journals/tacl/ElazarRJG21,DBLP:journals/tacl/LoveringP22}, causal mediation analysis \cite{DBLP:conf/nips/MengBAB22,DBLP:conf/nips/VigGBQNSS20}, and causal effect estimation \cite{DBLP:conf/nips/AbrahamDFGGPRW22,DBLP:journals/corr/abs-2207-14251,DBLP:conf/nips/WuGIPG23}. We utilize the causal mechanism identification approach for the specific analysis of the LLM-enhancer-plus-GNN paradigm.

\paragraph{Enhancing GNNs with LLM.}
Using LLM for initial node feature processing followed by GNN to explore inter-node relationships has become a popular research direction \cite{mao2024advancinggraphrepresentationlearning,DBLP:journals/corr/abs-2310-04560, DBLP:journals/sigkdd/ChenMLJWWWYFLT23,DBLP:conf/nips/HuangRCK0LL23,liuone,DBLP:conf/www/0001RTYC24,DBLP:conf/sigir/Tang00SSCY024}. Among the methods within this field, some combine prompt learning for graph data enhancement \cite{liuone, DBLP:conf/iclr/HeB0PLH24, DBLP:conf/sigir/Tang00SSCY024}, others emphasize the use of LLMs for class-level information and application usages \cite{DBLP:journals/corr/abs-2310-09872,DBLP:conf/www/RenWXSCWY024,DBLP:journals/corr/abs-2307-15780}. Our research aims to conduct a thorough evaluation of the overall framework of these methods. Further related works can be found in \textbf{Appendix} \ref{apx:related works}.

\section{Analysis}
% 我们的目的是研究在使用了LLM作为增强器的情况下，GNN建模能力的变化。然而，为了搞清楚GNN对于数据中存在的关系建模到了何种地步，我们首先需要先明确数据集中存在哪些关系。然而，想完全确定一般的图数据集中存在的因果关系是十分困难的，为此，我们专门构建了一个内部因果关系可控的数据集，之后再基于该数据集开展分析。

\subsection{CCSG Dataset}

Our goal is to investigate the LLM-enhancer-plus-GNN paradigm, studying its capabilities in data relationship modeling. To accurately assess such capabilities, it is essential to first identify the relationships present in the dataset. Given the challenges of determining causal relationships in general graph datasets, we have developed a semantically rich graph dataset with controllable internal causal relationships, referred to as the \textit{\textbf{C}ontrolled \textbf{C}ausal-\textbf{S}emantic \textbf{G}raph} (CCSG) dataset.

\begin{table*}[ht]\scriptsize
    \setlength{\extrarowheight}{-5pt} 
	\setlength{\tabcolsep}{3pt}
 	\caption{Comparative analysis of our dataset with other similar datasets. The term ``Total Combinations'' refers to the maximum possible number of combinations attainable when all available graphical elements are employed and juxtaposed in pairs.}
	\label{tab:nb}
 \vskip -0.15in
	\begin{center}
		\begin{tabular}{lccccccc}
			% \Xline{2\arrayrulewidth}\rule{-3pt}{10pt}
            \toprule
			\multirow{2}*{Dataset}    & Adjustable Node & Semantic Node  & Adjustable  & Multi-order Relationship & Controllable & Semantic Aware   & Total   \\
            & Attributes & Attributes   & Edges  & Adjustment &Causal Relationship   &Relationship Manipulation &  Combinations ($\uparrow$)  \\
			%\hline
			\hline\rule{-2pt}{7pt}

            \text{Citeseer}  & \multirow{2}*{\textcolor{red}{$\times$}} & \multirow{2}*{\textcolor{green}{\checkmark}} & \multirow{2}*{\textcolor{red}{$\times$}} & \multirow{2}*{Fixed} & \multirow{2}*{\textcolor{red}{$\times$}}  & \multirow{2}*{\textcolor{red}{$\times$}} & \multirow{2}*{Fixed}\\
   			\cite{DBLP:conf/nips/KipfW16}  &  &  &  &  &  &  & \\

			\text{Wiki-Cs}  & \multirow{2}*{\textcolor{red}{$\times$}} & \multirow{2}*{\textcolor{green}{\checkmark}} & \multirow{2}*{\textcolor{red}{$\times$}} & \multirow{2}*{Fixed} & \multirow{2}*{\textcolor{red}{$\times$}}  & \multirow{2}*{\textcolor{red}{$\times$}} & \multirow{2}*{Fixed}\\
   			\cite{DBLP:journals/corr/abs-2007-02901}  &  &  &  &  &  &  & \\
   
			\text{Synthetic Graph} & \multirow{2}*{\textcolor{red}{$\times$}} & \multirow{2}*{\textcolor{red}{$\times$}} & \multirow{2}*{\textcolor{green}{\checkmark}} & Adjustable & \multirow{2}*{\textcolor{green}{\checkmark}}  & \multirow{2}*{\textcolor{red}{$\times$}} & \multirow{2}*{25}\\
   			\cite{DBLP:conf/nips/YingBYZL19}  &  &  &  & First-order Relationship &  &  & \\
   
   			\text{Spurious-Motif }  & \multirow{2}*{\textcolor{red}{$\times$}} & \multirow{2}*{\textcolor{red}{$\times$}} & \multirow{2}*{\textcolor{green}{\checkmark}} & Adjustable & \multirow{2}*{\textcolor{green}{\checkmark}}  & \multirow{2}*{\textcolor{red}{$\times$}} & \multirow{2}*{36}\\
   			\cite{DBLP:conf/iclr/WuWZ0C22}  &  &  &  & First-order Relationship &  &  & \\

             \text{CRCG }  & \multirow{2}*{\textcolor{green}{\checkmark}} & \multirow{2}*{\textcolor{red}{$\times$}} & \multirow{2}*{\textcolor{green}{\checkmark}} & Adjustable & \multirow{2}*{\textcolor{green}{\checkmark}}  & \multirow{2}*{\textcolor{red}{$\times$}} & \multirow{2}*{3750}\\
   			\cite{DBLP:conf/aaai/GaoYLSJWZ024}  &  &  &  & First-order Relationship  &  &  & \\
            \hline \rule{-2pt}{7pt}
			\multirow{2}*{CCSG}  & \multirow{2}*{\textcolor{green}{\checkmark}} & \multirow{2}*{\textcolor{green}{\checkmark}} & \multirow{2}*{\textcolor{green}{\checkmark}} & Adjustable  & \multirow{2}*{\textcolor{green}{\checkmark}}  & \multirow{2}*{\textcolor{green}{\checkmark}} & \multirow{2}*{226400}\\ % 5 structure type x 5660 node attributes x 3 attribute modified type x 3 feature correlation
   
   			 &  &  &  & Multi-order Relationship   &  &  & \\
			\bottomrule 
		\end{tabular}
	\end{center}
    \vskip -0.18in
\end{table*}

% 结构和节点信息都很重要，节点信息来自wiki-pedia??????
The CCSG dataset is constructed based on Wikipedia entries, incorporating various types of causal relationships that we have defined. Table \ref{tab:nb} presents a comparison between the CCSG dataset and other similar datasets. This includes attributed graph datasets (Citeseer, Wiki-CS), which contain semantic node attributes, as well as synthetic graph datasets (Synthetic Graph, Spurious-Motif, CRCG), which are manually generated datasets that are used for analyzing the properties of GNNs. We provide a detailed introduction below.

\subsubsection{Node attributes}

The node attributes of the CCSG dataset include manually generated features and 5,660 Wikipedia entries, organized into three main categories and fifteen subcategories, ensuring diverse information. This dataset forms the foundation for graph construction of CCSG dataset, with detailed descriptions provided in \textbf{Appendix} \ref{apx:details of the ccsg dataset}.

\subsubsection{Graph Construction}
For graph construction, the CCSG dataset ensures controllable generation of node features, connections, and topological structures. Data generation is controlled in four aspects: \textbf{1) Node features}, the semantic information of node features can be actively controlled based on the collected data. \textbf{2) Node correlation}, node feature correlations are adjusted based on the collected categories, subclasses, and reference relationships. \textbf{3) Topological structure}, diverse topologies are introduced and linked to label data to analyze the impact of graph structure on outcomes. Such controllable graph construction enable the injection of predefined and will formulated causal relationships to be added to the data.

\subsection{Interchange Intervention Based Evaluation}
\label{sec:m}

We analyze the LLM-enhancer-plus-GNN paradigm by evaluating its ability to model predefined causal relationships. To do this, we apply the interchange intervention method \cite{geiger2022inducing} from causal inference. This method treats the neural network as a low-level model and the dataset's causal relationships as a high-level causal model. By modifying the neural network's internal feature representations and comparing them to changes in the high-level causal model, it evaluates the correspondence between the two. This process enables us to explore the underlying mechanisms of the model in greater depth.

\subsubsection{Evaluation Method Outline}

Based on the CCSG dataset, we first construct the data and a high-level causal model, denoted as \( h(\cdot) \). The model \( h(\cdot) \) represents the underlying causal relationships between the data and the corresponding labels, and it outputs the ground truth label based on the input \( G \). Our objective is to employ an interchange intervention to identify which hidden variables in the low-level neural network model \( f(\cdot) \)—specifically, an LLM enhancer-plus-GNN model—correspond to the variables in \( h(\cdot) \).

Specifically, we select two different graph samples: $G^{\text{orig}}$ and $G^{\text{diff}}$. We first input $G^{\text{orig}}$ into the model $h(\cdot)$. Then, we select a variable $Z^h$ within $h(\cdot)$ and replace its value with the value it would take if $G^{\text{diff}}$ were the input. This produces a new output for $h(\cdot)$ where the input remains $G^{\text{orig}}$, but the variable $Z^h$ reflects the state it would have under $G^{\text{diff}}$. We denote the resulting output as $\text{INTINT}(h, G^{\text{orig}}, G^{\text{diff}}, Z^h)$, where $\text{INTINT}(\cdot)$ denotes the conducted interchange intervention operation.

Subsequently, we perform interchange intervention on $f(\cdot)$ using both $G^{\text{orig}}$ and $G^{\text{diff}}$, which is denoted as $\text{INTINT}(f, G^{\text{orig}}, G^{\text{diff}}, Z^f)$. Here, $Z^f$ refers to an internal variable within $f(\cdot)$. Intuitively, if \( Z^f \) is the hidden layer variable corresponding to \( Z^h \) in the neural network model, then \( \text{INTINT}(h, G^{\text{orig}}, G^{\text{diff}}, Z^h) \) and \( \text{INTINT}(f, G^{\text{orig}}, G^{\text{diff}}, Z^f) \) should be equal. We compute interchange intervention loss $\mathcal{L}_{\text{II}}$ to measure the discrepancy between them:
\begin{align}
    \mathcal{L}_{\text{II}} =& \frac{1}{\mathcal{G}^{2}}\sum_{G^{\text{orig}} \in \mathcal{G}} \sum_{G^{\text{diff}} \in \mathcal{G}} \mathcal{D}\Big( \text{INTINV}\big(h, G^{\text{orig}}, 
    G^{\text{diff}}, Z^{h}\big), 
    \nonumber\\
    &\text{INTINV}\big(f, G^{\text{orig}}, G^{\text{diff}}, Z^{f}\big) \Big),
    \label{eq:LII}
\end{align}
where $\mathcal{D}(\cdot)$ represents the metric used to measure the difference between $\text{INTINV}\big(h, G^{\text{orig}}, G^{\text{diff}}, Z^{h}\big)$ and $ \text{INTINV}\big(f, G^{\text{orig}}, G^{\text{diff}}, Z^{f}\big)$, e.g., for classification tasks, $\mathcal{D}(\cdot)$ can be the cross-entropy loss. The set $\mathcal{G}$ denotes the dataset being utilized. We search for the optimal $Z^{f}$ that minimizes $\mathcal{L}_{\text{II}}$, which would ensure $Z^{f}$ in $f(\cdot)$ best aligns with $Z^{h}$ in $h(\cdot)$. See Section \ref{sec:theory} for justifications. In this way, we can establish the correspondence between the low-level neural network and the high-level causal model.

\subsubsection{Running Example}

We present a simplified example to illustrate the application of the interchange intervention-based analysis method. As shown in Figure \ref{fig:runningexample}, the process is divided into four steps. Below, we provide a detailed step-by-step explanation of the process.

\begin{figure*}
	\centering
	\includegraphics[width=1\textwidth]{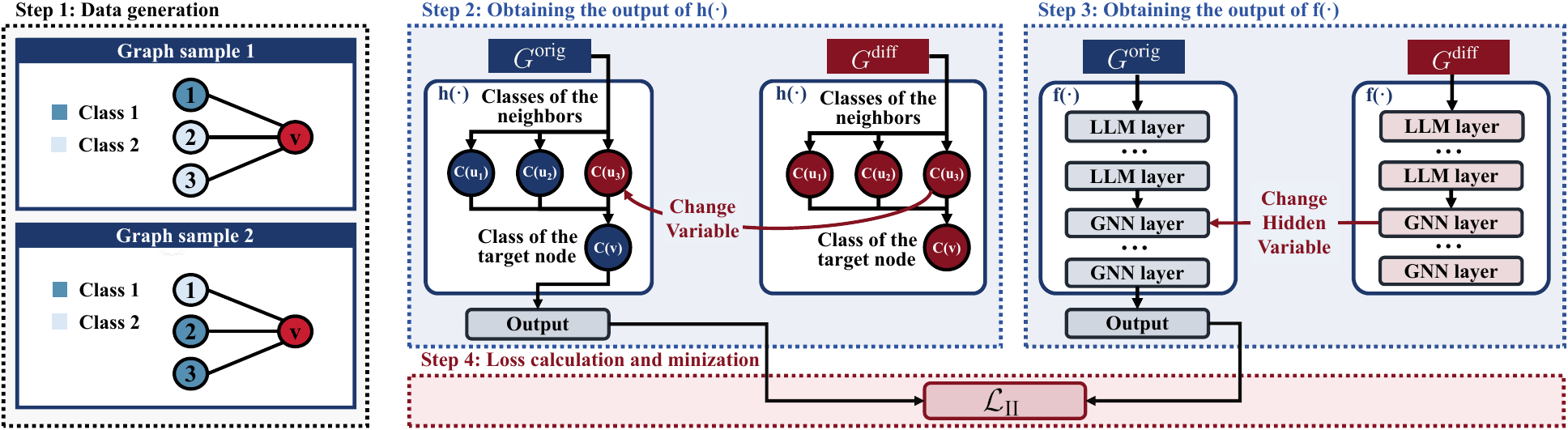} \\ 
    \vskip -0.1in
	\caption{The framework of the running example.}
        \vskip -0.1in
	\label{fig:runningexample}
\end{figure*}

\textbf{Step 1: Data Generation.} In this step, as shown in Figure \ref{fig:runningexample}, we construct only two attributed graph samples, $G_{1}$ and $G_{2}$. In practice, the number of samples would be significantly larger. The task is to classify node $v$ depicted in the figure. During data generation, we also define the high-level causal model $h(\cdot)$ for this example. Specifically, we manually establish that the class of node $v$ is determined by first identifying the classes of all its neighboring nodes and then selecting the most frequent class among them as the class of $v$. This logic can be formally expressed as $C(v) = m(\{C(u) \mid u \in \mathcal{N}(v)\})$, where $C(\cdot)$ represents the classes of nodes and $\mathcal{N}(v)$ denotes the set of neighbors of $v$, $m(\cdot)$ selects the most frequent class. Consequently, we have:
\begin{equation}
    h(G) = C(v) = m(\{C(u) \mid u \in \mathcal{N}(v)\}).
    \label{eq:cm}
\end{equation}
We then select $C(u_{3})$ within $h(\cdot)$ as $Z^{h}$ for performing the interchange intervention.

\textbf{Step 2: Calculating $\text{INTINV}(h, G^{\text{orig}}, G^{\text{diff}}, Z^{h})$.} We select $G_{1}$ as $G^{\text{orig}}$ and input it into $h(\cdot)$ to accurately compute the values of various variables and outcomes within the model. We then select $G_{2}$ as $G^{\text{diff}}$ and input it into $h(\cdot)$, performing another round of calculations for the variables and outcomes. As shown in Figure \ref{fig:runningexample}, Step 2, when the input is $G^{\text{orig}}$, we replace the value of $Z^{h}$, i.e., $C(u_{3})$, with its value obtained from $G^{\text{diff}}$ as input and compute the output. This final output represents $\text{INTINV}(h, G^{\text{orig}}, G^{\text{diff}}, Z^{h})$.

\textbf{Step 3: Calculating $\text{INTINV}(f, G^{\text{orig}}, G^{\text{diff}}, Z^{f})$.} We select $G_{1}$ as $G^{\text{orig}}$ and input it into $f(\cdot)$. Similarly, we select $G_{2}$ as $G^{\text{diff}}$ and input it into $f(\cdot)$. Since the optimal selection of $Z^{f}$ is not yet known, we randomly choose certain hidden layer variables as $Z^{f}$. We then take the values of $Z^{f}$ obtained when $G^{\text{diff}}$ is used as input and replace the corresponding values in the model $f(\cdot)$ when $G^{\text{orig}}$ is used as input. The model then outputs $\text{INTINV}(f, G^{\text{orig}}, G^{\text{diff}}, Z^{f})$.

\textbf{Step 4: Loss Calculation and Minimization.} Repeat Step 2 and 3 to acquire all $\text{INTINV}(h, G^{\text{orig}}, G^{\text{diff}}, Z^{h})$ and $\text{INTINV}(f, G^{\text{orig}}, G^{\text{diff}}, Z^{f})$ values, then utilize Equation \ref{eq:cm} to calculate the loss $\mathcal{L}_{\text{II}}$. We then repeat the process to locate optimal $Z^{f}$ that minimizes $\mathcal{L}_{\text{II}}$, finding the best alignment of $Z^{h}$ within $f(\cdot)$, so as to reveal part of the logical structure within $f(\cdot)$ to facilitate analysis. 
 
\subsubsection{Validation}
\label{sec:theory}
To conduct validation of our analysis method, we introduce the concept of \textit{total effect} from causal theory \cite{pearl2009causality}.

\begin{definition} 
\textit{Total effect} \cite{pearl2009causality} represents the overall causal impact of one variable $Z$ on another variable $Y$. It is denoted by $\text{TE}_{\bm{z},\bm{z'}}(Y)$, where $\bm{z}$ and $\bm{z'}$ are two specific values that $Z$ can assume. The total effect $\text{TE}_{\bm{z},\bm{z'}}(Y)$ quantifies the expected difference in the outcome $Y$ when $Z$ is set to $\bm{z}$ compared to when it is set to $\bm{z'}$.
\end{definition}

Therefore, we can formalize our objective more precisely: to find variables $Z^h$ in model $h(\cdot)$ and $Z^f$ in model $f(\cdot)$ such that $Z^h$ and $Z^f$ exhibit consistent total effects on the predicted outputs. As demonstrated in Equation \ref{eq:LII}, we aim to find the aforementioned variable $Z^f$ by minimizing a loss function $\mathcal{L}_{\text{II}}$. We use the following theorem to prove the validity of this approach.

\begin{theorem}
    Given a high-level causal model $h(\cdot)$ and a low-level neural network model $f(\cdot)$, both of which accurately map input graphs $G \in \mathcal{G}$ to outputs $Y \in \mathcal{Y}$, such that $Y = f(G) = h(G)$. Assume there exists a subset $Z^f$ of the intermediate variables in $f(\cdot)$ and a bijective mapping $\eta: Z^f \rightarrow Z^h$, where $Z^h$ represents certain variables in $h(\cdot)$. If there exists a $Z^f$ that minimizes the loss $\mathcal{L}_{\text{II}}$, we can conclude that the total effect $\text{TE}_{\bm{z}^f, \bm{z}^{f'}}(Y^f)$ of $f(\cdot)$ is equal to the total effect $\text{TE}_{\bm{z}^h, \bm{z}^{h'}}(Y^h)$ of $h(\cdot)$ in all cases. Here, $\bm{z}^h$ and $\bm{z}^f$ represent the values of $Z^h$ and $Z^f$ respectively, given the same input graph $G$. Similarly, $\bm{z}^{h'}$ and $\bm{z}^{f'}$ represent their values for a different input graph $G'$.
    \label{thm:m}
\end{theorem}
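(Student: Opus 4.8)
The plan is to exhibit each interchange intervention as a thinly disguised total-effect computation, after which the theorem follows by subtracting a common baseline. First I would rewrite $\text{INTINV}(h, G^{\text{orig}}, G^{\text{diff}}, Z^h)$ in the language of Pearl's do-operator. Feeding $G^{\text{orig}}$ into $h(\cdot)$ fixes the context and sets $Z^h$ to its natural value $\bm{z}^h$, while the subsequent substitution of the value $Z^h$ would take under $G^{\text{diff}}$ is exactly the intervention $\mathrm{do}(Z^h = \bm{z}^{h'})$. Hence $\text{INTINV}(h, G^{\text{orig}}, G^{\text{diff}}, Z^h)$ equals the output $Y^h$ under $\mathrm{do}(Z^h = \bm{z}^{h'})$ evaluated in the $G^{\text{orig}}$ context, and the analogous identity holds for $f(\cdot)$ with $Z^f$ and $\bm{z}^{f'}$.

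The key observation is that the clean forward pass is itself a null intervention: because $\bm{z}^h$ is the value $Z^h$ already assumes under $G^{\text{orig}}$, we have $h(G^{\text{orig}}) = Y^h \mid \mathrm{do}(Z^h = \bm{z}^h)$, and likewise $f(G^{\text{orig}}) = Y^f \mid \mathrm{do}(Z^f = \bm{z}^f)$. Since the models of the theorem are deterministic, the expectation in the definition of total effect collapses to a pointwise difference, giving
\[
    \text{TE}_{\bm{z}^h, \bm{z}^{h'}}(Y^h) = \text{INTINV}(h, G^{\text{orig}}, G^{\text{diff}}, Z^h) - h(G^{\text{orig}}),
\]
and symmetrically for $f(\cdot)$. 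Here the bijection $\eta: Z^f \rightarrow Z^h$ is what makes the two interventions comparable: it aligns the value spaces so that substituting $\bm{z}^{f'}$ in $f(\cdot)$ is the genuine counterpart of substituting $\bm{z}^{h'}$ in $h(\cdot)$.

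To conclude I would invoke the two hypotheses. Because $\mathcal{D}(\cdot)$ is a nonnegative discrepancy vanishing exactly when its arguments coincide, we have $\mathcal{L}_{\text{II}} \ge 0$; the assumed correspondence $\eta$ makes perfect alignment attainable, so the minimum value is $0$, and any minimizing $Z^f$ forces the termwise equality $\text{INTINV}(h, G^{\text{orig}}, G^{\text{diff}}, Z^h) = \text{INTINV}(f, G^{\text{orig}}, G^{\text{diff}}, Z^f)$ for every pair $(G^{\text{orig}}, G^{\text{diff}}) \in \mathcal{G} \times \mathcal{G}$. Combined with the accuracy hypothesis $f(G^{\text{orig}}) = h(G^{\text{orig}})$, the two baselines agree as well; subtracting the identical baseline from the identical interchange outputs yields $\text{TE}_{\bm{z}^f, \bm{z}^{f'}}(Y^f) = \text{TE}_{\bm{z}^h, \bm{z}^{h'}}(Y^h)$ for every choice of inputs, which is the claim.

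The step I expect to be the main obstacle is the first one — pinning down the precise sense in which interchange intervention coincides with a total effect, in particular justifying that the clean forward pass is a null intervention and that $\eta$ renders the intervened values of $f(\cdot)$ and $h(\cdot)$ genuinely comparable rather than merely sharing a label. Some care is also needed to argue that a loss-minimizer actually attains the infimum $0$ rather than settling at a positive local minimum; this rests on the assumed existence of the bijective correspondence $\eta$, which is precisely the structural hypothesis encoding that some hidden variable of $f(\cdot)$ can play the role of $Z^h$.
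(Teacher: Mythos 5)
Your proposal is correct in substance and shares the paper's first key step, but it replaces the paper's second step with a genuinely more direct argument. Like the paper (Lemma B.1 in the appendix), you establish that a minimizer of $\mathcal{L}_{\text{II}}$ forces the termwise equality $\text{INTINV}(f,G^{\text{orig}},G^{\text{diff}},Z^f)=\text{INTINV}(h,G^{\text{orig}},G^{\text{diff}},Z^h)$, with attainability supplied by the bijection $\eta$ (the paper sets $Z^f=\eta^{-1}(Z^h)$ and propagates it through $f^{\text{pre}}$ and $f^{\text{latter}}$ exactly as you sketch). Where you diverge is the bridge from interchange interventions to total effects: the paper expands $\text{TE}_{\bm{z}^f,\bm{z}^{f'}}(Y^f)$ via Pearl's natural direct and indirect effects, rewrites each term as an interchange intervention on $Z^f\cap\text{Pa}(Y^f)$ and $Z^f\setminus\text{Pa}(Y^f)$ over auxiliary graphs $(G^{\text{orig}})^{\bm{z}^f}$, $(G^{\text{diff}})^{\bm{z}^{f'}}$, and only then substitutes the $h$-side interventions; you instead observe that in a deterministic model the clean forward pass is a null intervention, so the total effect collapses pointwise to $\text{INTINV}$ minus the baseline, and the theorem follows by subtracting equal baselines (using $f(G)=h(G)$) from equal intervened outputs. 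Your route is more elementary and avoids the paper's parent-set bookkeeping and its use of the decomposition of $\text{TE}$ into direct and indirect effects; the paper's heavier route makes explicit how the effect splits across mediating paths, which is the machinery its Corollary 3.3 later reuses for the case where the bijection fails.

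One caveat: your claim that the minimum of $\mathcal{L}_{\text{II}}$ is $0$ assumes $\mathcal{D}(a,a)=0$ for all $a$. The paper does not assume this: for general $\mathcal{D}$ (e.g.\ cross-entropy) the attainable minimum is the self-distance sum $\sum\mathcal{D}\big(\text{INTINV}(h,\cdot,\cdot,Z^h),\text{INTINV}(h,\cdot,\cdot,Z^h)\big)$ (this is exactly Proposition 3.4), which vanishes only when $h$ outputs one-hot labels, as in the paper's experiments. Your argument survives under this weaker setting provided you add the strictness property that $\mathcal{D}(a,b)=\mathcal{D}(a,a)$ implies $a=b$, so that attaining the minimum still forces termwise equality; this property is also what the paper's own Lemma B.1 implicitly relies on, so the gap is shared rather than introduced by you.
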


The demonstration can be found in \textbf{Appendix} \ref{prf:m}. Theorem \ref{thm:m} establishes that, under the condition where the model \( f(\cdot) \) performs sufficiently well and a bijection exists between the internal variables of \( f(\cdot) \) and the internal variable \( Z^{h} \) of \( h(\cdot) \), minimizing \( \mathcal{L}_{\text{II}} \) yields a \( Z^{f} \) whose total effects on the model outputs are exactly the same as those of the corresponding \( Z^{h} \). Equal total effects indicate that the neurons modeling the variable \( Z^{h} \) have been precisely identified, thereby providing theoretical validation for the effectiveness of the proposed method. However, if the aforementioned bijection does not exist, how will the results change? The following corollary provides further analysis.
\begin{corollary}
    \textit{Even if the condition that there exists a subset $Z^{f}$ of the intermediate variables of $f(\cdot)$ satisfied $\eta:Z^{f} \rightarrow Z^{h}$ where $\eta$ is bijective does not hold, if $\text{INTINV}\big(f,G^{\text{orig}},G^{\text{diff}},Z^{f}\big) = \text{INTINV}\big(h,G^{\text{orig}},G^{\text{diff}},Z^{h}\big)$ holds, the conclusion given in Theorem \ref{thm:m} remains valid.}
    \label{cly:m}
\end{corollary}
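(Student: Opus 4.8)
The plan is to recognize that the bijection $\eta$ in Theorem~\ref{thm:m} enters its proof for a single reason: it certifies that the minimizer of $\mathcal{L}_{\text{II}}$ actually forces the two interchange-intervention outputs to agree, so that $\text{INTINV}(f,G^{\text{orig}},G^{\text{diff}},Z^{f}) = \text{INTINV}(h,G^{\text{orig}},G^{\text{diff}},Z^{h})$ for every pair of inputs. Since the corollary promotes exactly this equality to a hypothesis, I would re-enter the proof of Theorem~\ref{thm:m} at the step immediately after this equality is obtained and verify that nothing downstream uses $\eta$ or its bijectivity.

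The crux is the dictionary translating total effects into interchange-intervention outputs. For a deterministic model, the forward pass on $G^{\text{orig}}$ pins $Z^{h}$ to its natural value $\bm{z}^{h}$ and returns the baseline $h(G^{\text{orig}})$, while overwriting $Z^{h}$ with the value $\bm{z}^{h'}$ it assumes under $G^{\text{diff}}$ is precisely the $\mathrm{do}$-operation defining the total effect and returns $\text{INTINV}(h,G^{\text{orig}},G^{\text{diff}},Z^{h})$. Consequently
\[
\text{TE}_{\bm{z}^{h},\bm{z}^{h'}}(Y^{h}) = h(G^{\text{orig}}) - \text{INTINV}\big(h,G^{\text{orig}},G^{\text{diff}},Z^{h}\big),
\]
and the identical identity holds for $f$, with $Z^{h},\bm{z}^{h},\bm{z}^{h'}$ replaced by $Z^{f},\bm{z}^{f},\bm{z}^{f'}$.

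I would then close the argument by subtracting. The accuracy hypothesis $Y=f(G)=h(G)$ gives $f(G^{\text{orig}})=h(G^{\text{orig}})$, so the baseline terms match, and the corollary's hypothesis makes the intervened terms match; hence the two total effects coincide for this pair $(G^{\text{orig}},G^{\text{diff}})$. Because the hypothesis is posited for every such pair, the equality $\text{TE}_{\bm{z}^{f},\bm{z}^{f'}}(Y^{f}) = \text{TE}_{\bm{z}^{h},\bm{z}^{h'}}(Y^{h})$ holds in all cases; and if one reads the total effect as an expectation over inputs, the per-pair identity lifts immediately to the expectation.

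The main obstacle is pinning down this dictionary rigorously: one must confirm that the interchange intervention genuinely realizes $\mathrm{do}(Z=\bm{z}^{h'})$ while leaving the remainder of the computation governed by $G^{\text{orig}}$, and that the un-intervened pass is the correct $\bm{z}^{h}$-baseline. Once that correspondence is established, the remaining algebra is routine and, decisively, never invokes the dropped bijection---which is precisely the content of the corollary.
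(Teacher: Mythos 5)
Your proposal is correct, but it takes a more elementary route than the paper's own proof. The paper does not simply re-enter Theorem \ref{thm:m}: it first constructs an augmented set $\dot{Z}^{f} = \eta(Z^{h} \cup U)$, where $U$ is a minimal set of extra variables needed to determine $\dot{Z}^{f}$, and then case-splits. When $U \perp\!\!\!\perp Y^{f}$ it fixes $U$ to constants, obtaining $\dot{Z}^{f} = \eta'(Z^{h})$, and reruns the argument of Theorem \ref{thm:m}; when $U \not\perp\!\!\!\perp Y^{f}$ it invokes the hypothesis $\text{INTINV}(f,\cdot) = \text{INTINV}(h,\cdot)$ and re-traces the total-effect computation of Equations \ref{eq:TEpre} and \ref{eq:TEaft}. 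Your argument collapses the proof into the second case alone, and that is legitimate: the corollary's hypothesis is quantified over all input pairs and is exactly the consequence (Equation \ref{eq:intinveq}) that the bijection was used to secure in Lemma \ref{lma:eq}; downstream of that point bijectivity is never genuinely needed, since the value correspondence between $\bm{z}^{f}$ and $\bm{z}^{h}$ is mediated by shared input graphs (the paper itself replaces $\eta^{-1}(\bm{z}^{f})$ by $h^{Z^{h}}\big((G^{\text{orig}})^{\bm{z}^{f}}\big)$ in Equation \ref{eq:TEaft}). Your dictionary---that for a deterministic model the interchange intervention realizes the do-operation, so $\text{TE}_{\bm{z}^{h},\bm{z}^{h'}}(Y^{h})$ is the difference between $h(G^{\text{orig}})$ and $\text{INTINV}(h,G^{\text{orig}},G^{\text{diff}},Z^{h})$, and likewise for $f$---is also considerably simpler than the paper's decomposition into natural direct and indirect effects over $\text{Pa}(Y^{f})$, and the final subtraction using $f(G^{\text{orig}}) = h(G^{\text{orig}})$ for the baselines closes the argument cleanly. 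What your version gives up is the structural content of the paper's $U$-construction: the independent-$U$ branch shows the conclusion can be recovered by freezing nuisance variables even without invoking the interchange-intervention equality pointwise, i.e., it says something about when that hypothesis is attainable, not merely what it implies. What you gain is brevity and the explicit observation that once the bijection's only downstream consequence is promoted to a hypothesis, the bijection itself is dispensable---which is precisely the content of Corollary \ref{cly:m}.
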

The demonstration can be found in \textbf{Appendix} \ref{prf:cm}. Corollary \ref{cly:m} shows that even if the aforementioned bijection does not exist, the condition \( \text{INTINV}\big(f, G^{\text{orig}}, G^{\text{diff}}, Z^{f}\big) = \text{INTINV}\big(h, G^{\text{orig}}, G^{\text{diff}}, Z^{h}\big) \) ensures the existence of a \( Z^{f} \) that fully corresponds to \( Z^{h} \). In practical analysis, we can use the distance of \( \mathcal{L}_{\text{II}} \) from its possible minimum value to assess the degree of matching between \( Z^{f} \) and \( Z^{h} \).
\begin{proposition}
For a high-level causal model $h(\cdot)$ and a low-level neural network $f(\cdot)$, both mapping input graphs $G \in \mathcal{G}$ to outputs $Y \in \mathcal{Y}$ such that $Y = f(G) = h(G)$, if $Z^f$ within $f(\cdot)$ and $Z^h$ within $h(\cdot)$ minimize $\mathcal{L}_{\text{II}}$ to its optimal value $\mathcal{L}^{*}_{\text{II}}$, then $Z^f$ aligns best with $Z^h$ and has an identical total effect on the output prediction as $Z_h$. The minimal $\mathcal{L}^{*}_{\text{II}}$ is given by:
\begin{align}
\mathcal{L}^{*}_{\text{II}} = & \frac{1}{\mathcal{G}^{2}} \sum_{G^{\text{orig}} \in \mathcal{G}} \sum_{G^{\text{diff}} \in \mathcal{G}} \mathcal{D}\Big( \text{INTINV}\big(h, G^{\text{orig}}, G^{\text{diff}}, Z^h\big), 
\nonumber\\
&\text{INTINV}\big(h, G^{\text{orig}}, G^{\text{diff}}, Z^h\big)\Big).
\end{align}
\label{thm:proposition}
\end{proposition}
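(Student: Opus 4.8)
The plan is to obtain the minimal value as a pointwise lower bound on each summand of $\mathcal{L}_{\text{II}}$ that is simultaneously saturated by the optimal $Z^f$ furnished by Theorem~\ref{thm:m} and Corollary~\ref{cly:m}. First I would invoke the existence result: under the stated hypotheses (perfect modeling $Y=f(G)=h(G)$ together with the bijection $\eta$, or, failing that, the exact-match condition of Corollary~\ref{cly:m}), there is a choice of $Z^f$ for which the interchange-intervention outputs coincide for every ordered pair, i.e.\ $\text{INTINV}(f,G^{\text{orig}},G^{\text{diff}},Z^f)=\text{INTINV}(h,G^{\text{orig}},G^{\text{diff}},Z^h)$ for all $G^{\text{orig}},G^{\text{diff}}\in\mathcal{G}$. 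This is the candidate minimizer whose value I want to evaluate.

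Next I would establish the lower bound termwise. Fixing a pair $(G^{\text{orig}},G^{\text{diff}})$, write $\bm{a}=\text{INTINV}(h,G^{\text{orig}},G^{\text{diff}},Z^h)$ for the fixed high-level target and $\bm{b}=\text{INTINV}(f,G^{\text{orig}},G^{\text{diff}},Z^f)$ for the low-level output. The discrepancy $\mathcal{D}$ appearing in the loss \eqref{eq:LII} — for classification, the cross-entropy — is a strictly proper loss, so it is minimized in its second argument at $\bm{b}=\bm{a}$; concretely, Gibbs' inequality (equivalently, non-negativity of the induced Bregman/KL divergence) gives $\mathcal{D}(\bm{a},\bm{b})\ge\mathcal{D}(\bm{a},\bm{a})$ for every admissible $\bm{b}$, with equality iff $\bm{b}=\bm{a}$. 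Summing over all $\mathcal{G}^{2}$ ordered pairs and normalizing yields $\mathcal{L}_{\text{II}}\ge \frac{1}{\mathcal{G}^{2}}\sum_{G^{\text{orig}}\in\mathcal{G}}\sum_{G^{\text{diff}}\in\mathcal{G}}\mathcal{D}(\bm{a},\bm{a})$ for \emph{every} choice of $Z^f$, which is exactly the claimed $\mathcal{L}^{*}_{\text{II}}$ once $\bm{a}$ is written back in its full form with both arguments equal to $\text{INTINV}(h,G^{\text{orig}},G^{\text{diff}},Z^h)$.

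It then remains to show attainment and to read off the alignment conclusion. The $Z^f$ from the first step sets $\bm{b}=\bm{a}$ in every summand, saturating the inequality, so the infimum is achieved and $\min_{Z^f}\mathcal{L}_{\text{II}}=\mathcal{L}^{*}_{\text{II}}$. Because this same minimizer satisfies the exact-match hypothesis of Corollary~\ref{cly:m}, the corollary immediately delivers that $Z^f$ is the best alignment of $Z^h$ inside $f(\cdot)$ and that its total effect $\text{TE}_{\bm{z}^f,\bm{z}^{f'}}(Y^f)$ equals $\text{TE}_{\bm{z}^h,\bm{z}^{h'}}(Y^h)$, which closes the statement.

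I expect the main obstacle to be the termwise lower bound rather than the existence or total-effect parts, since those are inherited directly from Theorem~\ref{thm:m} and Corollary~\ref{cly:m}. Specifically, the argument must pin down which property of $\mathcal{D}$ is in force — the ``minimized at the truth'' (strictly proper) property — and must verify that the target value $\bm{a}$ actually lies in the feasible set of attainable low-level outputs, so that equality is not merely a bound but is realized. That feasibility is precisely what the existence clause of Theorem~\ref{thm:m}/Corollary~\ref{cly:m} supplies, and I would make the dependence explicit rather than assume $\mathcal{D}(\bm{a},\bm{a})=0$, which fails for cross-entropy; indeed the nonvanishing self-discrepancy $\mathcal{D}(\bm{a},\bm{a})$ is exactly the irreducible baseline that the proposition isolates as $\mathcal{L}^{*}_{\text{II}}$.
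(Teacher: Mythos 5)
Your proof is correct and takes essentially the same route as the paper's justification: the paper likewise obtains the termwise lower bound $\mathcal{D}(\bm{a},\bm{b})\ge\mathcal{D}(\bm{a},\bm{a})$ together with its attainment by a $Z^f$ realizing the interchange-intervention match (both established in Lemma~\ref{lma:eq} inside the proof of Theorem~\ref{thm:m}), and then invokes Corollary~\ref{cly:m} to convert that match into equality of total effects. Your write-up is merely more explicit about which property of $\mathcal{D}$ drives the bound (strict properness/Gibbs' inequality, with equality iff the arguments coincide) and about the fact that $\mathcal{D}(\bm{a},\bm{a})$ need not vanish for cross-entropy, points the paper leaves implicit by asserting the inequality simply because $\mathcal{D}$ ``denotes difference.''
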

Please refer to \textbf{Appendix} \ref{prf:just} for the justification. In our experiments, we use cross-entropy loss as \( \mathcal{D}(\cdot) \), and the label probabilities predicted by \( h(\cdot) \) are either 0 or 1. Therefore, in our experiments, \( \mathcal{L}^{*}_{\text{II}} = 0 \). Consequently, we directly use \( \mathcal{L}_{\text{II}} \) to represent the distance between \( \mathcal{L}_{\text{II}} \) and \( \mathcal{L}^{*}_{\text{II}} \).

\begin{figure}[h]
	\centering
    % 子图
    \subfigure[The results of node-level experiments where \( Z^{h} \) is set to the output variables \( \Psi_1 \), \( \Psi_2 \), and \( \Psi_3 \) of \( h^{\text{node,1}}(\cdot) \).]{
        \includegraphics[width=0.9\linewidth]{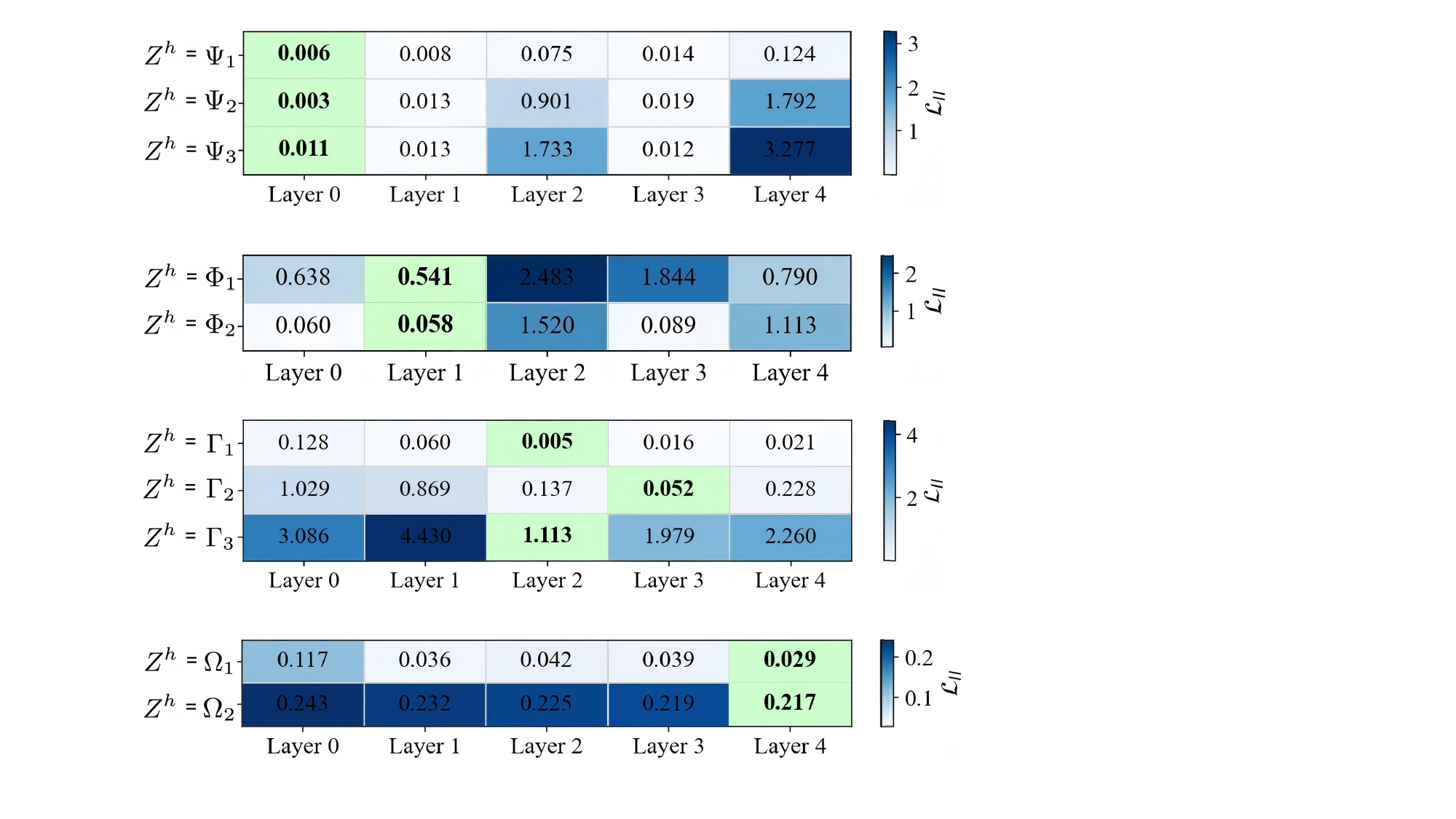}
        \label{fig:exp1-1} % 标签
    }
    % 子图
    \subfigure[The results of node-level experiments where \( Z^{h} \) is set to the output variables \( \Phi_1 \) and \( \Phi_2 \) of \( h^{\text{node,2}}(\cdot) \).]{
        \includegraphics[width=0.9\linewidth]{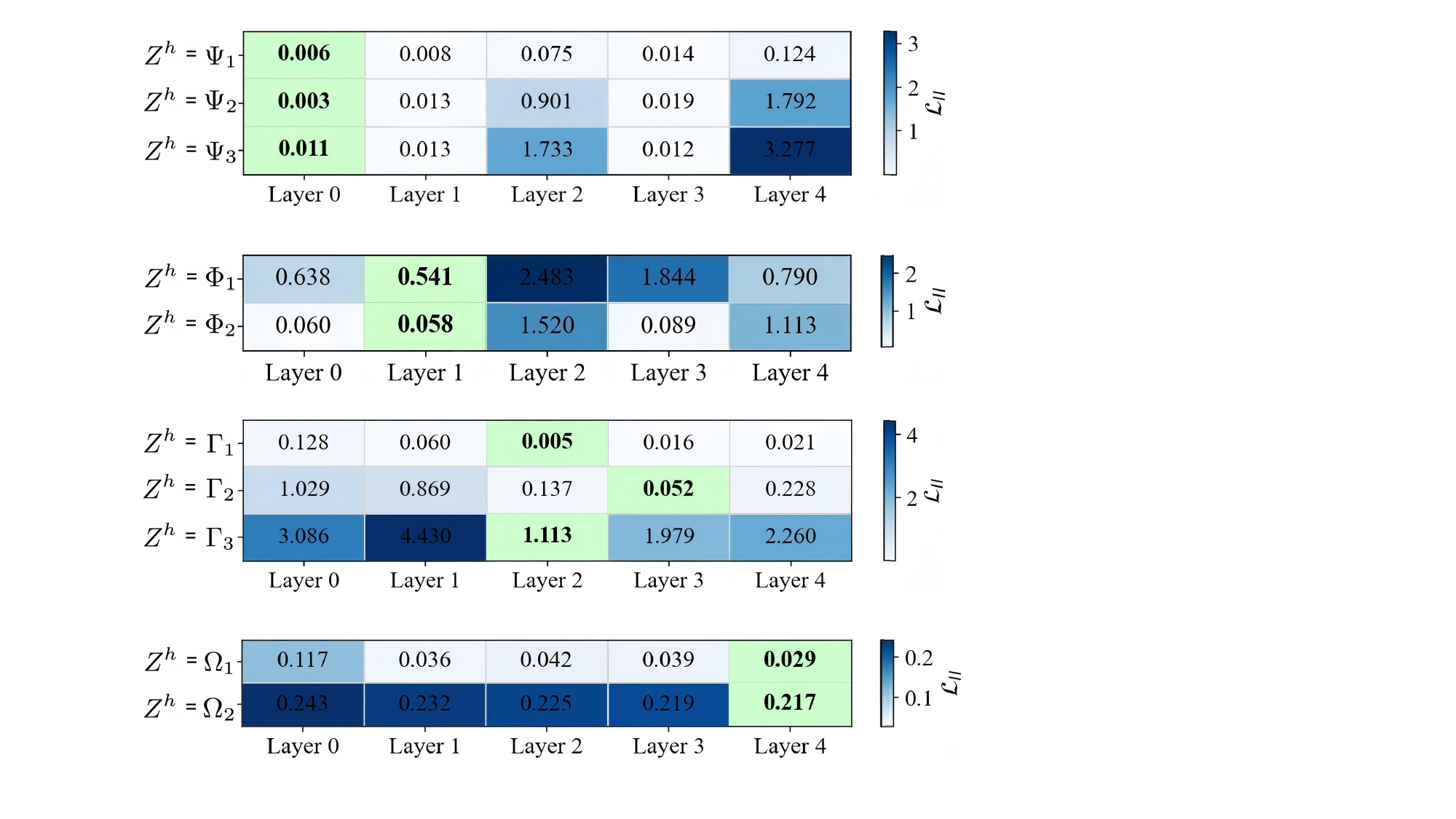}
        \label{fig:exp1-2} % 标签
    }
    \subfigure[The results of graph-level experiments where \( Z^{h} \) is set to the output variables \( \Gamma_1 \), \( \Gamma_2 \), and \( \Gamma_3 \) of \( h^{\text{graph,1}}(\cdot) \).]{
        \includegraphics[width=0.9\linewidth]{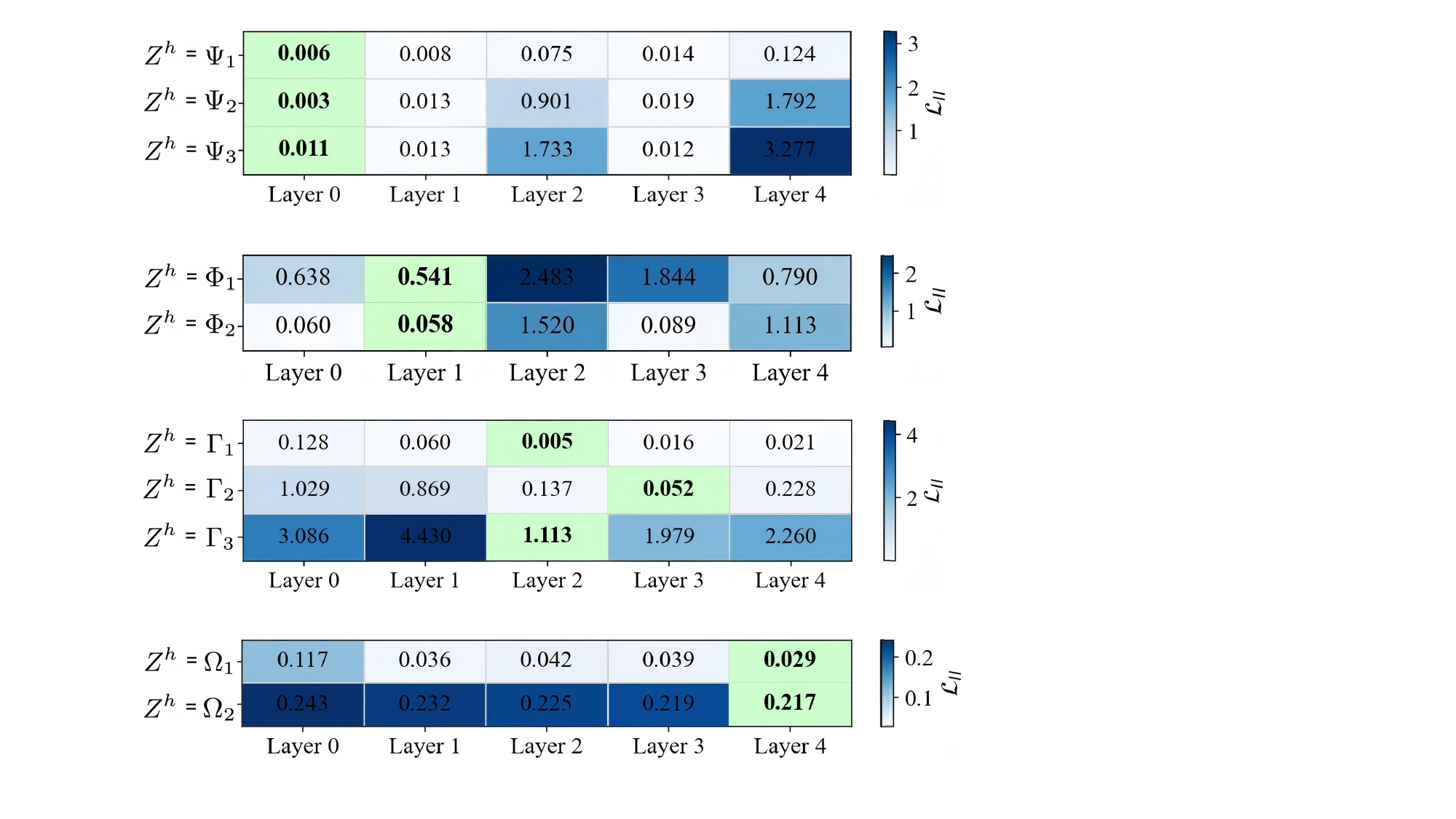}
        \label{fig:exp2-1} % 标签
    }
    % 子图
    \subfigure[The results of graph-level experiments where \( Z^{h} \) is set to the output variables \( \Omega_1 \) and \( \Omega_2 \) of \( h^{\text{graph,2}}(\cdot) \).]{
        \includegraphics[width=0.9\linewidth]{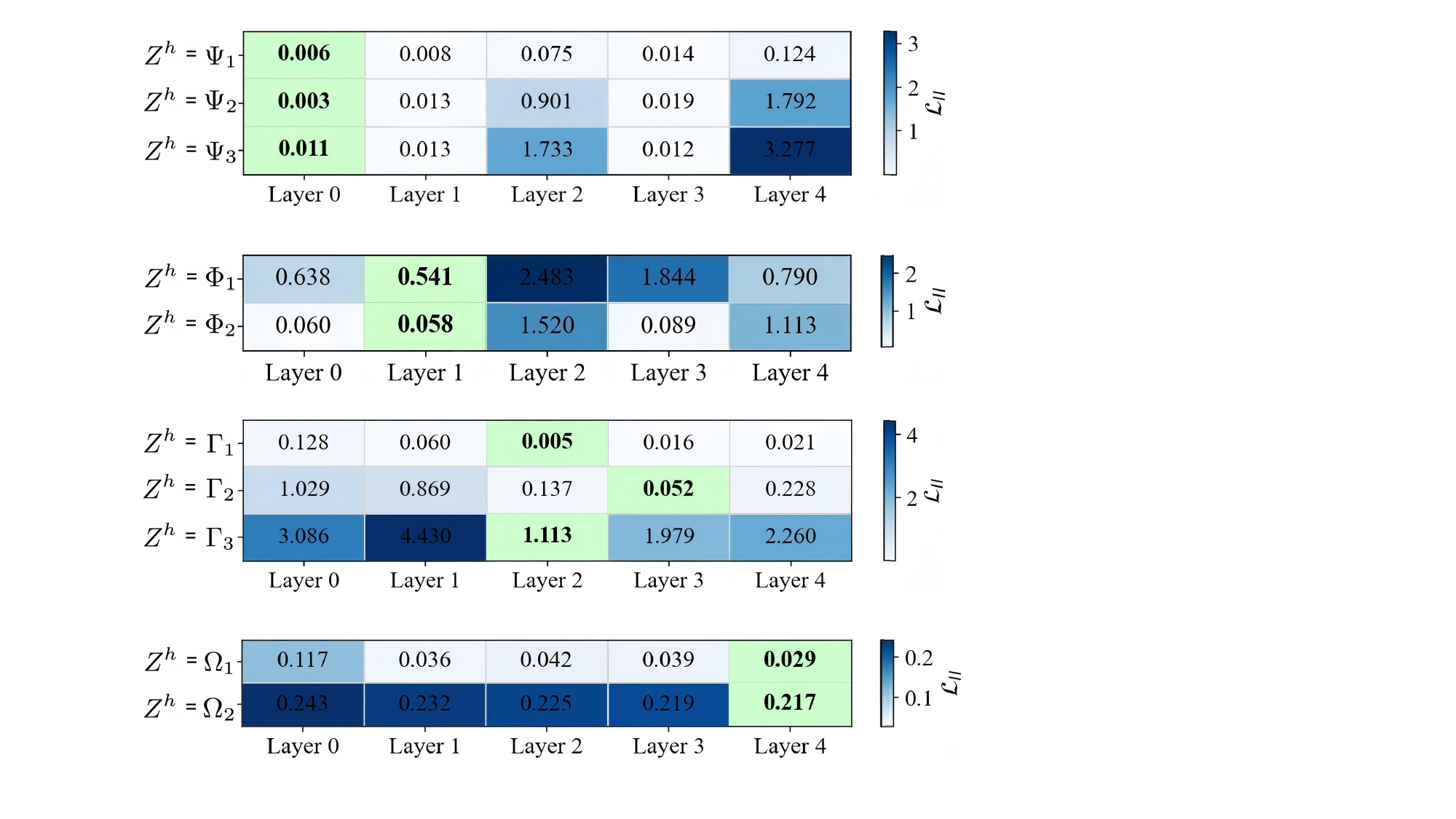}
        \label{fig:exp2-2} % 标签
    }
    \vskip -0.1in
	\caption{Analytical experiment results. In the figure, each small square represents the optimal value of \( \mathcal{L}_{\text{II}} \) for a specific GNN layer. Green squares indicate the minimum value of \( \mathcal{L}_{\text{II}} \) within the corresponding row. Details concerning the variables can be found in \textbf{Appendix} \ref{apx:details of the ccsg dataset}.}
    \vskip -0.1in
	\label{fig:exp12}
\end{figure}

\subsection{Analytical Experiments}

\begin{figure}[h]
	\centering
    \subfigure[The results from experiments where \( Z^{h} \) is set to the output variables of \( h^{\text{node,1}}(\cdot) \), with different number of GNN layers.]{
        \includegraphics[width=1\linewidth]{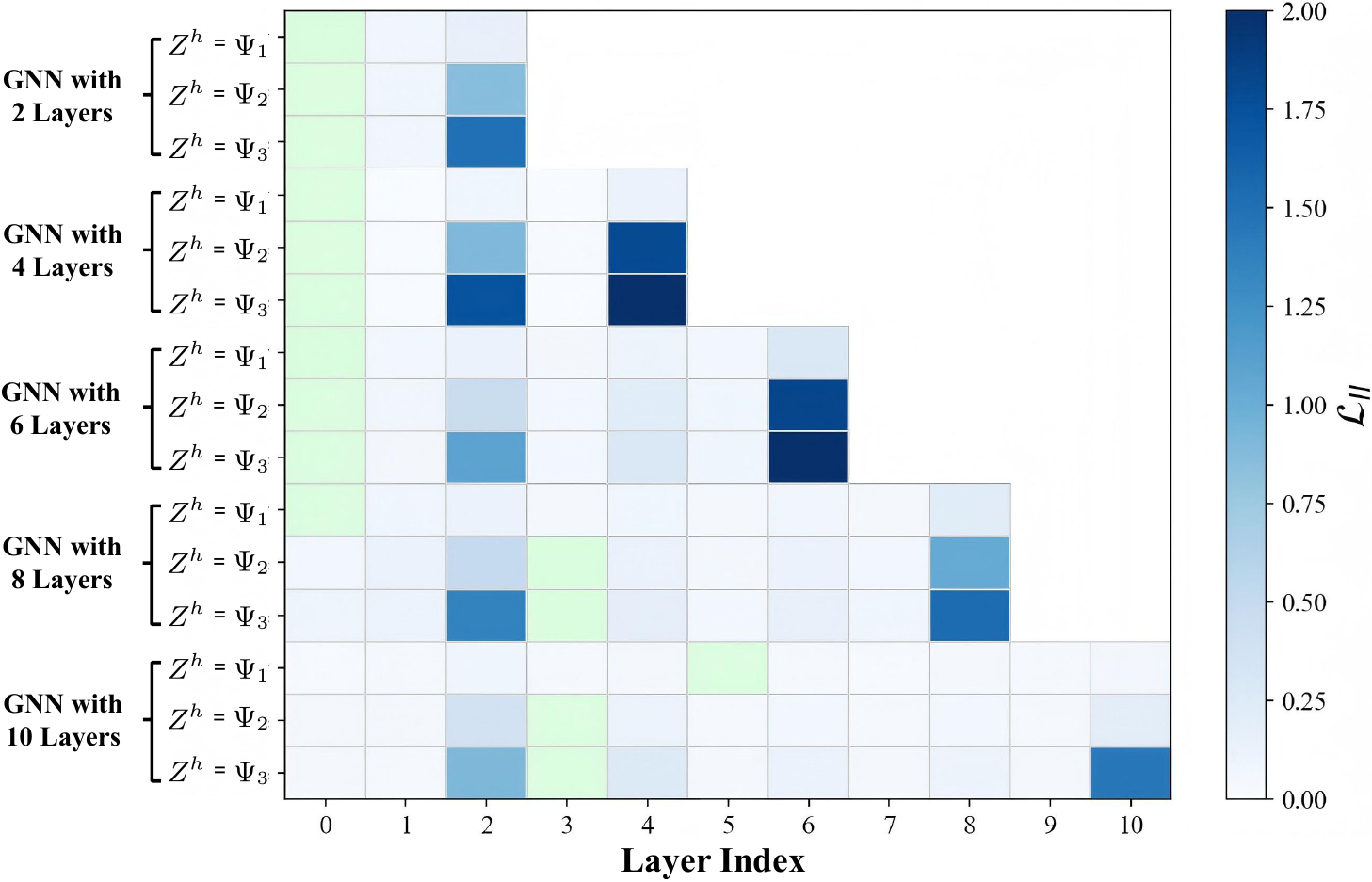}
        \label{fig:chart01} % 标签
    }
    % 子图
    \subfigure[The results from experiments where \( Z^{h} \) is set to the output variables of \( h^{\text{graph,1}}(\cdot) \), with different hidden dimension.]{
        \includegraphics[width=0.7\linewidth]{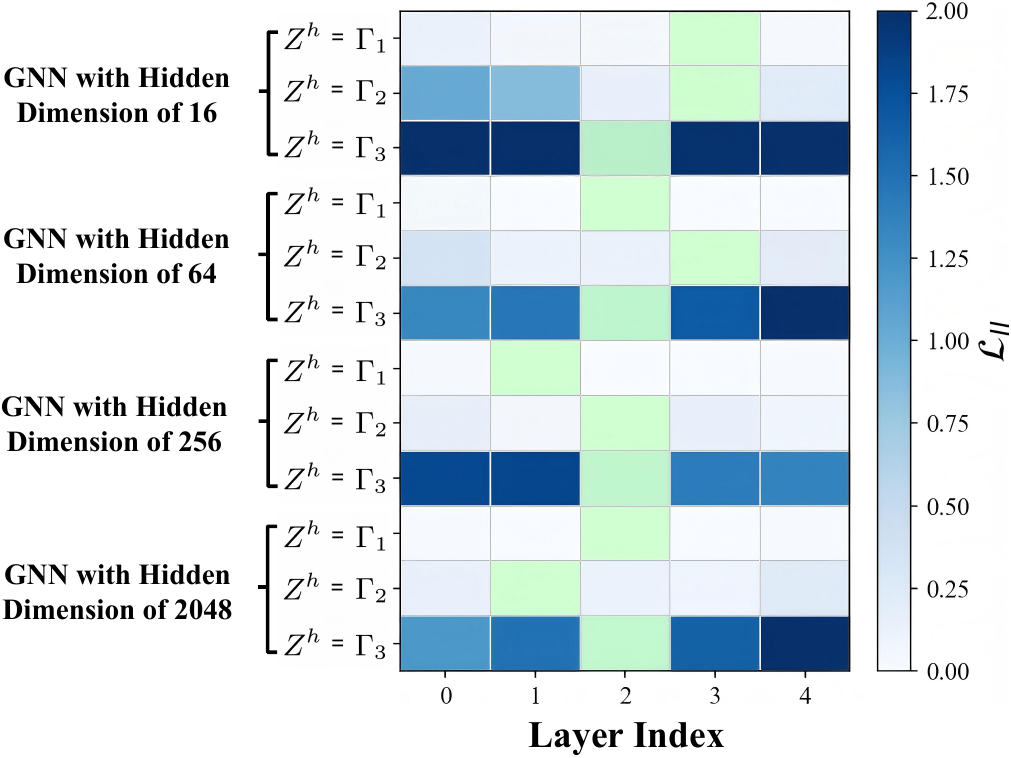}
        \label{fig:chart02} % 标签
    }
    \vskip -0.1in
    \caption{Experimental results under different GNN scales.}
    \vskip -0.15in
    \label{fig:gnnlayer-exp-node}
\end{figure}

\begin{figure*}[h]
	\centering

    % 子图
    \subfigure[Node-level task alignment results with different number of GNN Layers.]{
        \includegraphics[width=0.20\linewidth]{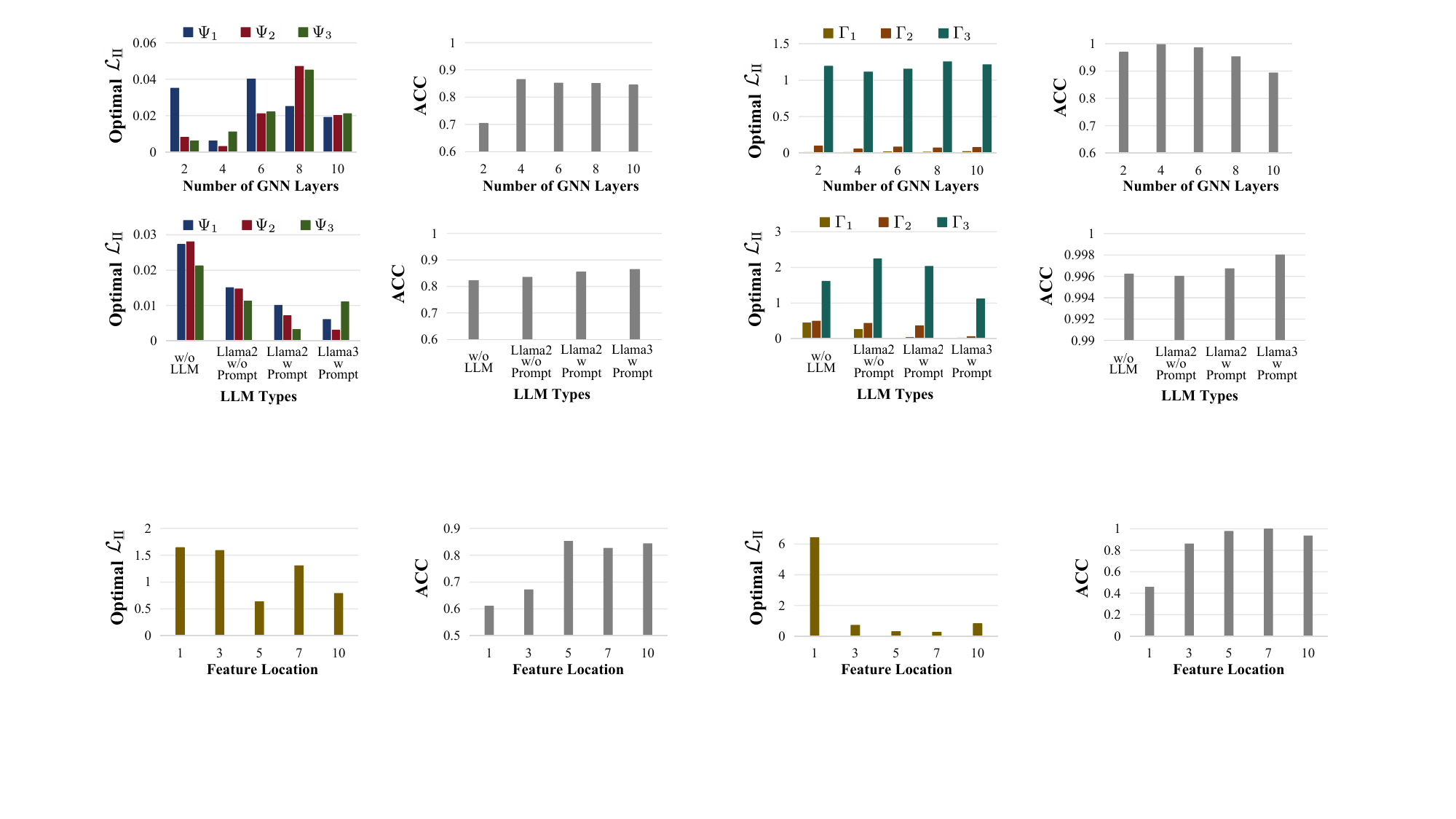}
        \label{fig:chart0123} % 标签
    }
    \hspace{1.5mm}
    % 子图
    \subfigure[Node-level task model accuracy with different number of GNN Layers.]{
        \includegraphics[width=0.20\linewidth]{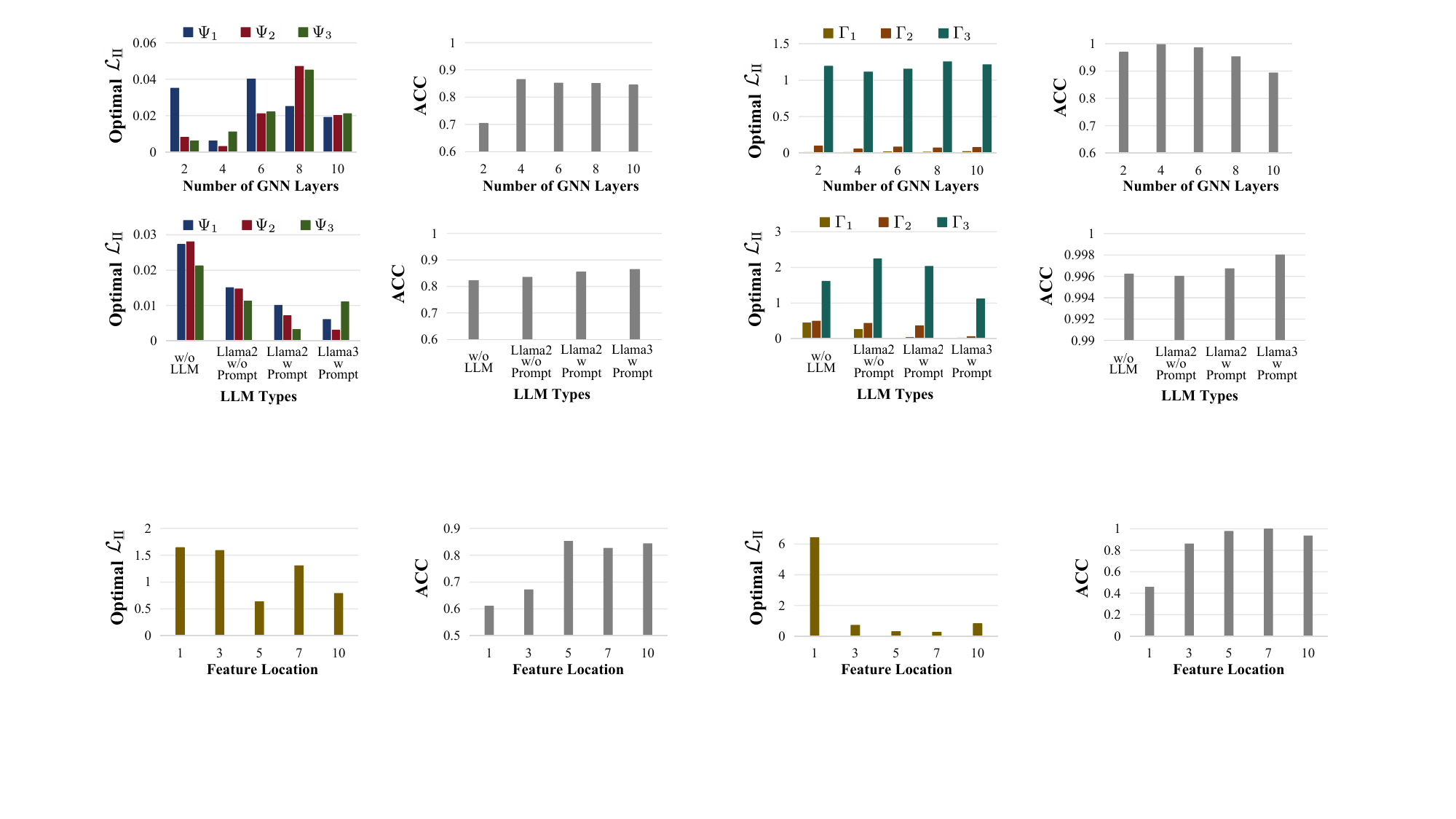}
        \label{fig:chart023} % 标签
    }
    \hspace{1.5mm}
    % 子图
    \subfigure[Graph-level task alignment results with different number of GNN Layers.]{
        \includegraphics[width=0.20\linewidth]{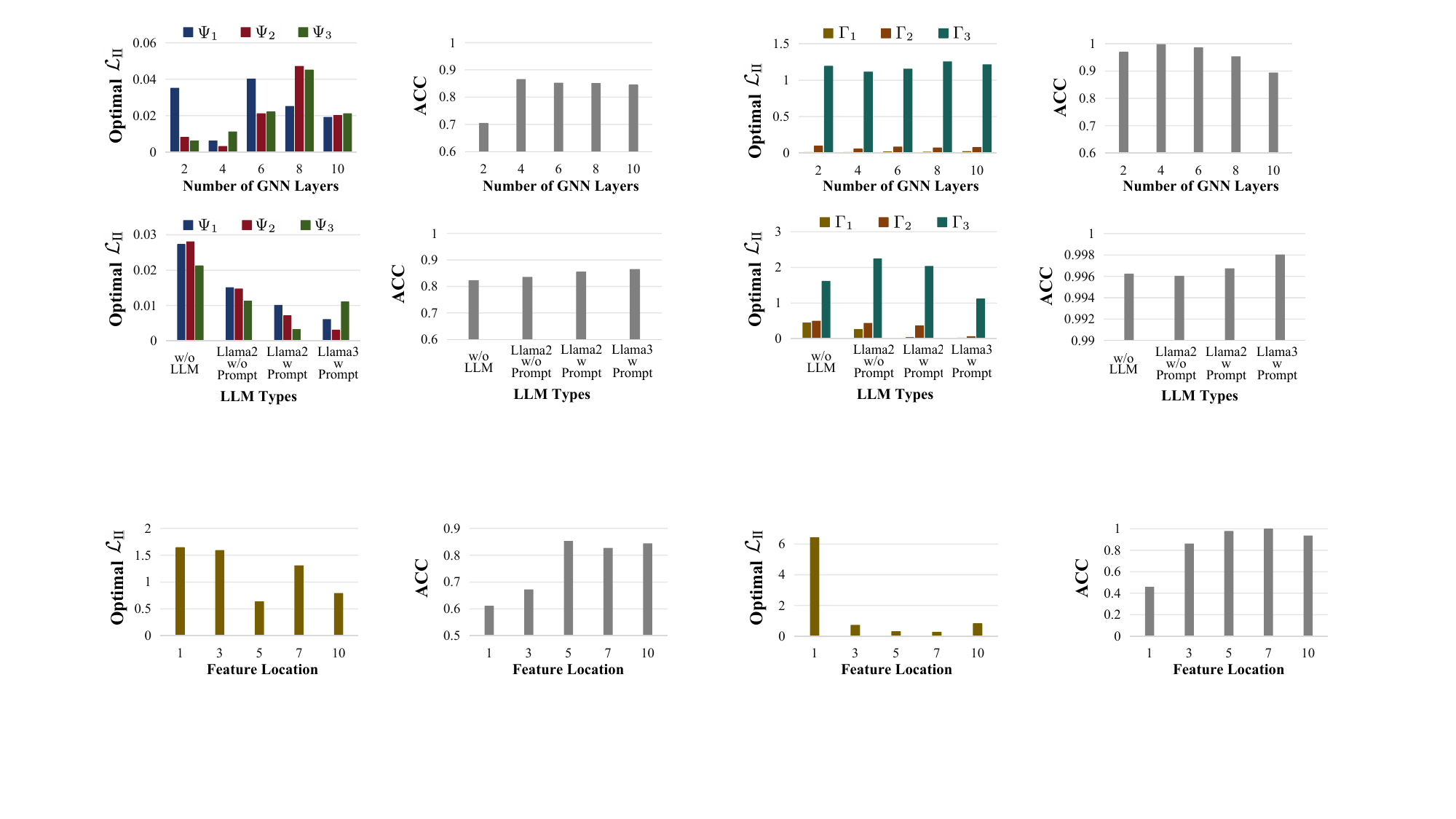}
        \label{fig:chart03} % 标签
    }
    \hspace{1.5mm}
    % 子图
    \subfigure[Graph-level task model accuracy with different number of GNN Layers.]{
        \includegraphics[width=0.20\linewidth]{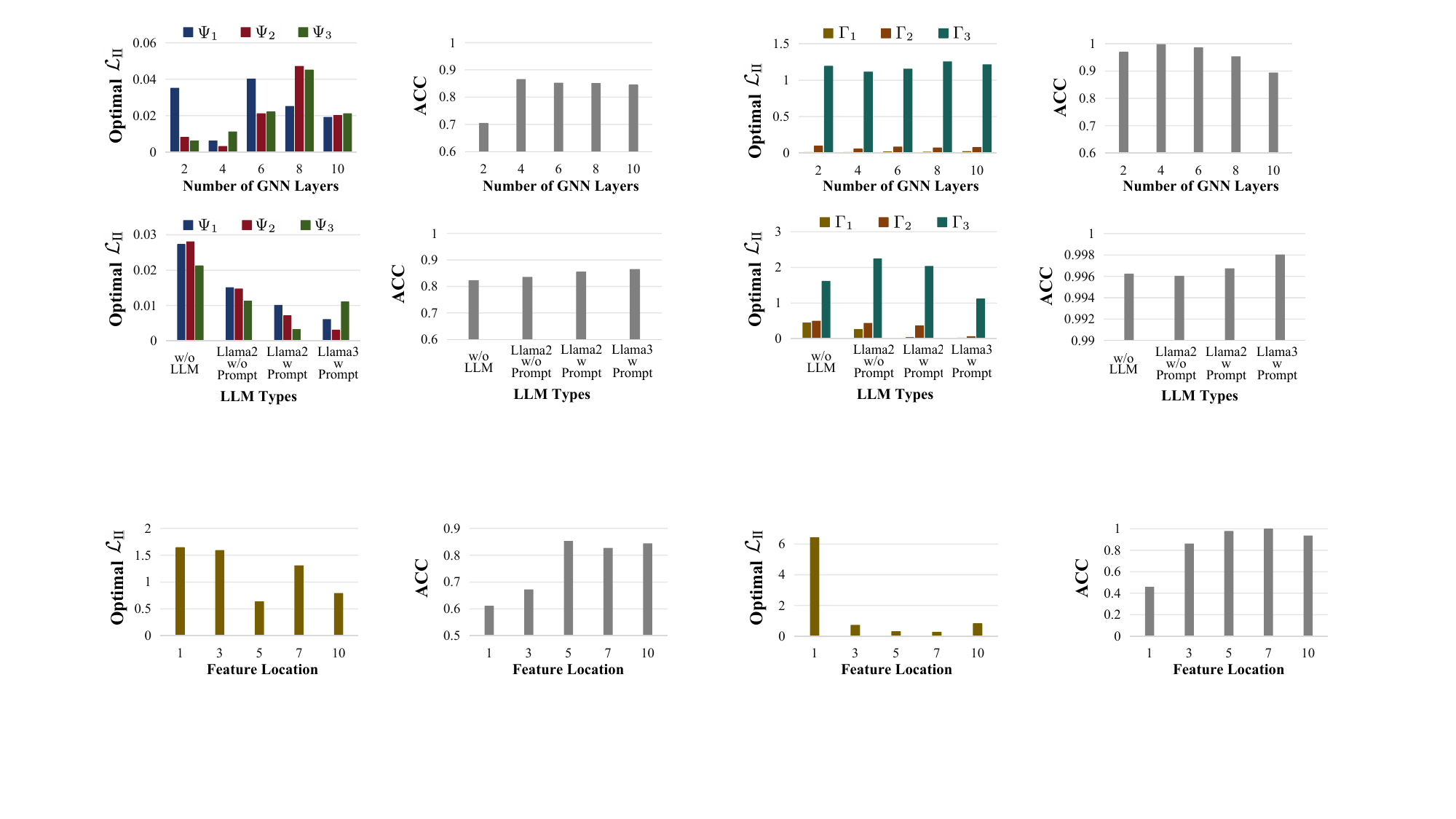}
        \label{fig:chart04} % 标签
    }

    % 子图
    \subfigure[Node-level task alignment results with different LLMs.]{
        \includegraphics[width=0.20\linewidth]{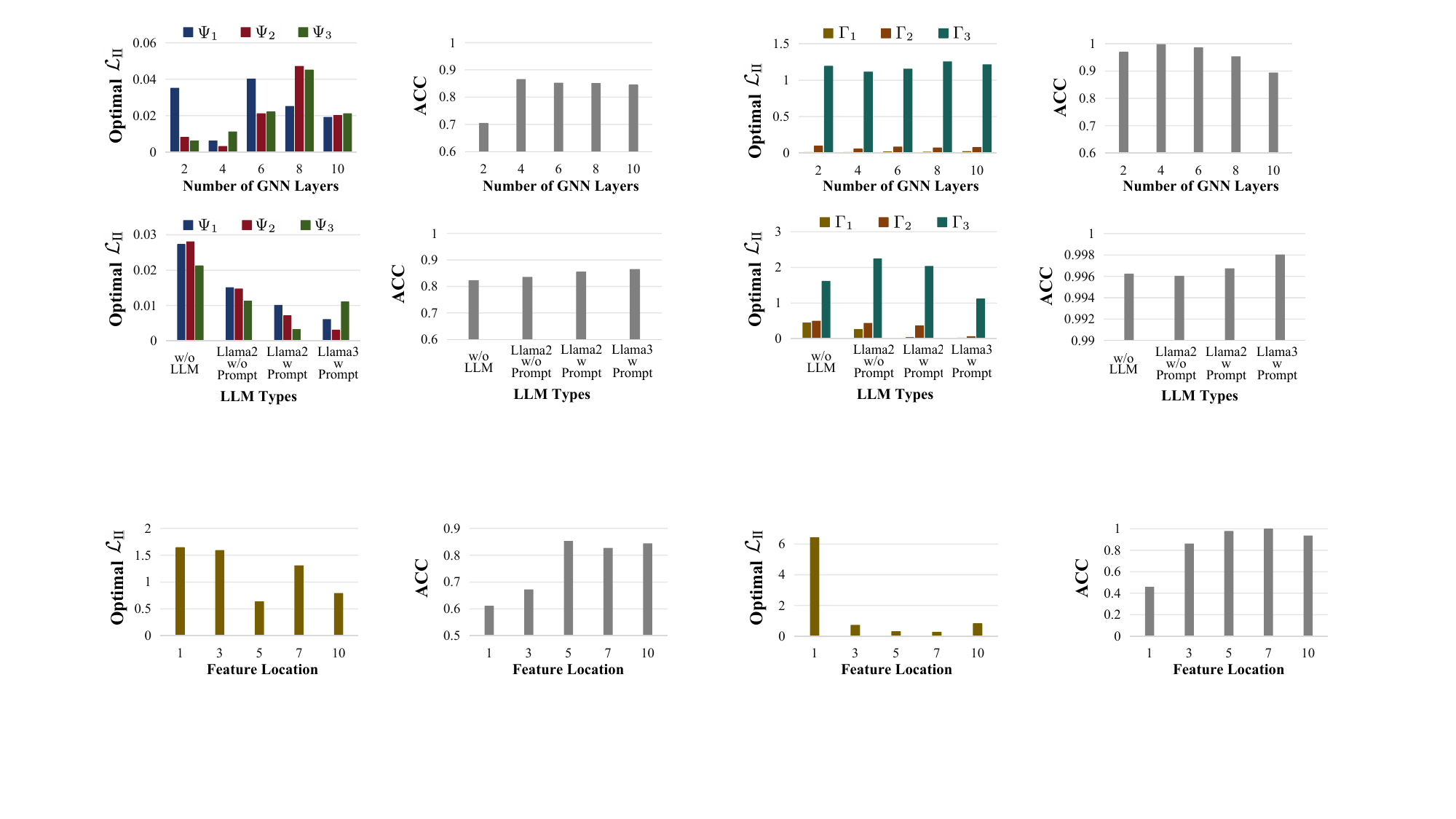}
        \label{fig:chart05} % 标签
    }
    \hspace{1.5mm}
    % 子图
    \subfigure[Node-level task model accuracy with different LLMs.]{
        \includegraphics[width=0.20\linewidth]{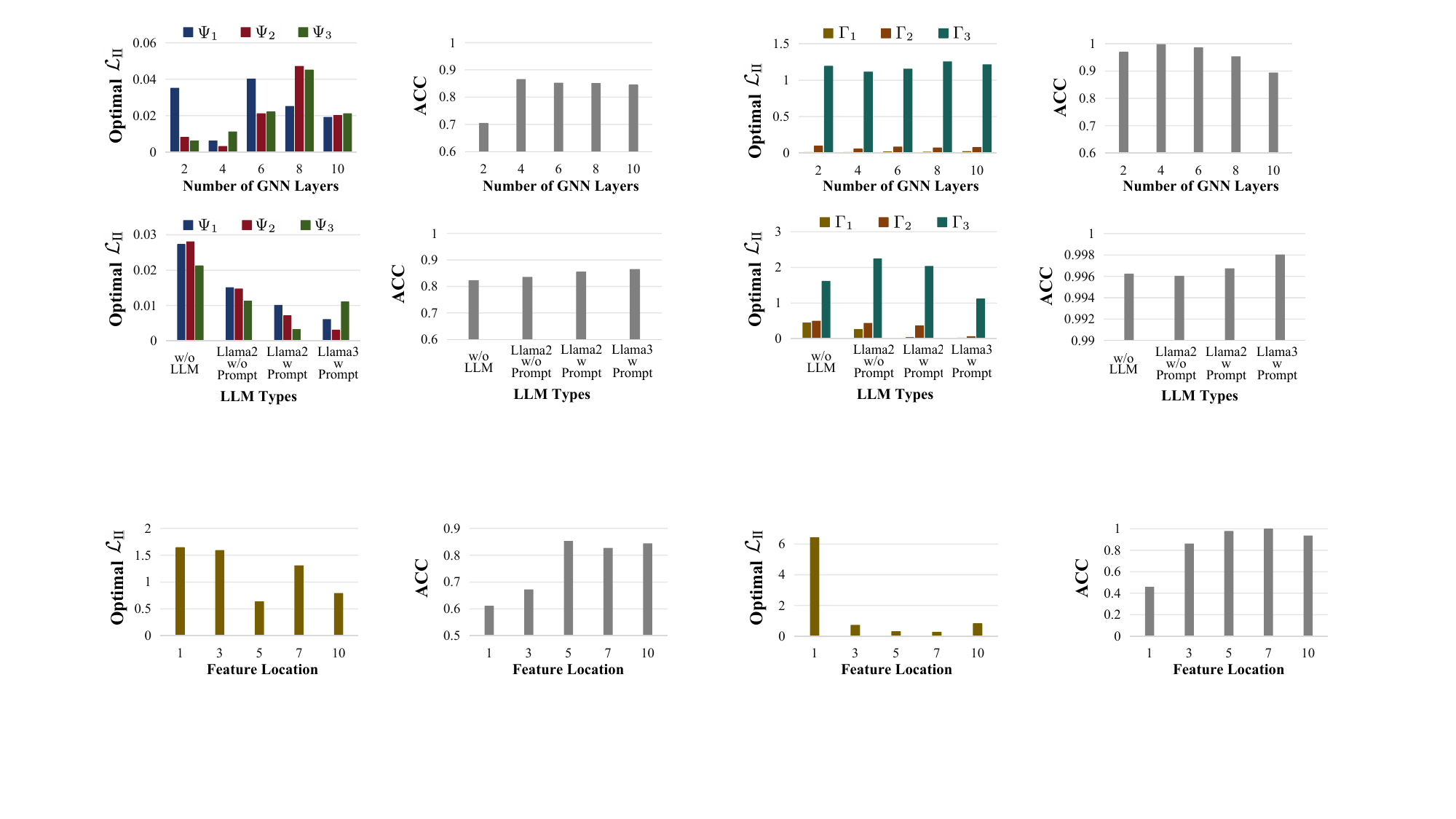}
        \label{fig:chart06} % 标签
    }
    \hspace{1.5mm}
    % 子图
    \subfigure[Graph-level task alignment results with different LLMs.]{
        \includegraphics[width=0.20\linewidth]{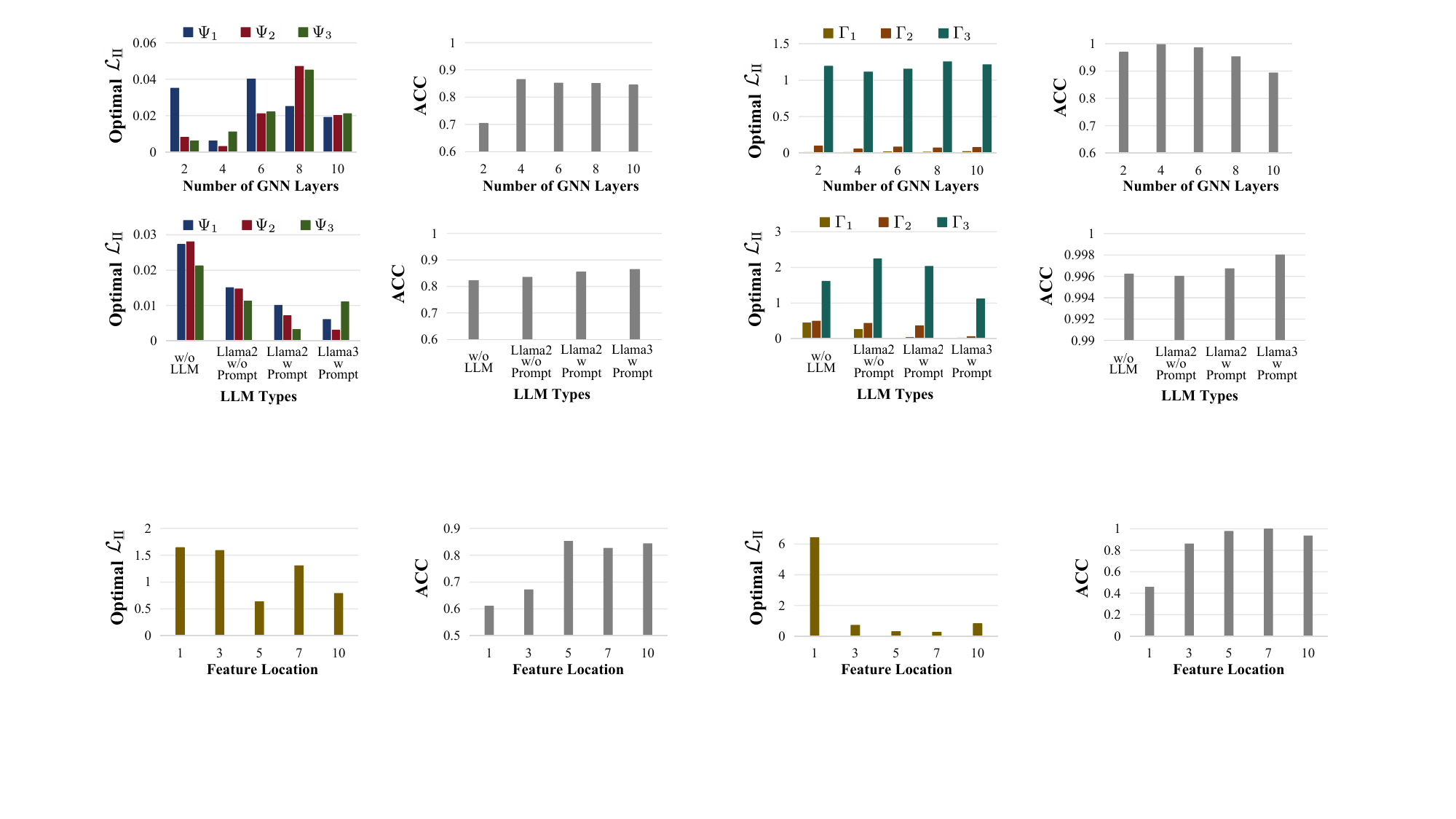}
        \label{fig:chart07} % 标签
    }
    \hspace{1.5mm}
    % 子图
    \subfigure[Graph-level task model accuracy with different LLMs.]{
        \includegraphics[width=0.20\linewidth]{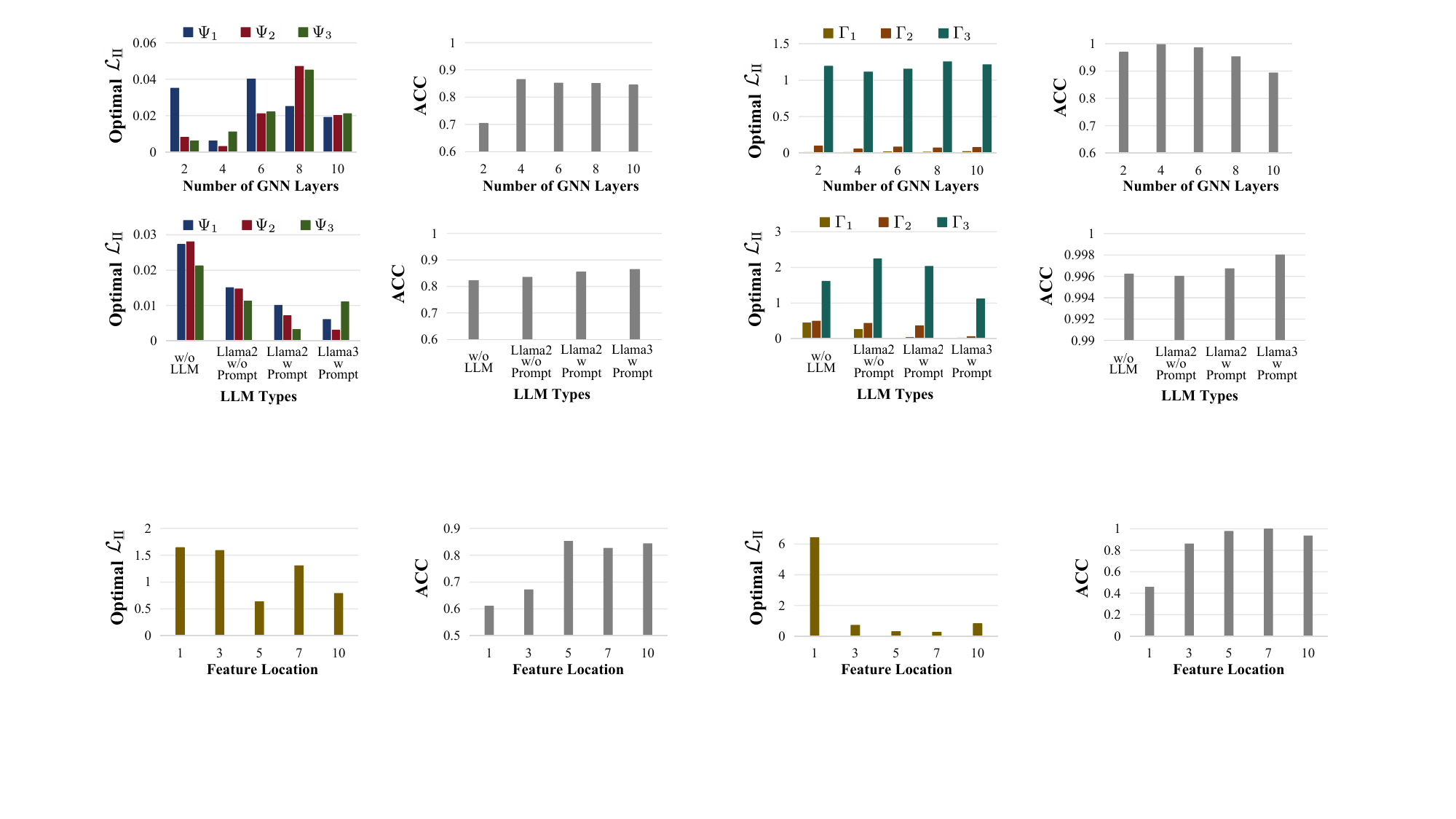}
        \label{fig:chart08} % 标签
    }
  \vskip -0.1in
    \caption{Experimental results under different model scales and the use of LLMs.}
      \vskip -0.15in
      
	\label{fig:chart-scale}
\end{figure*}

\begin{figure}[h]
	\centering

        % 子图
    \subfigure[Node-level task alignment results with different feature positions.]{
        \includegraphics[width=0.4\linewidth]{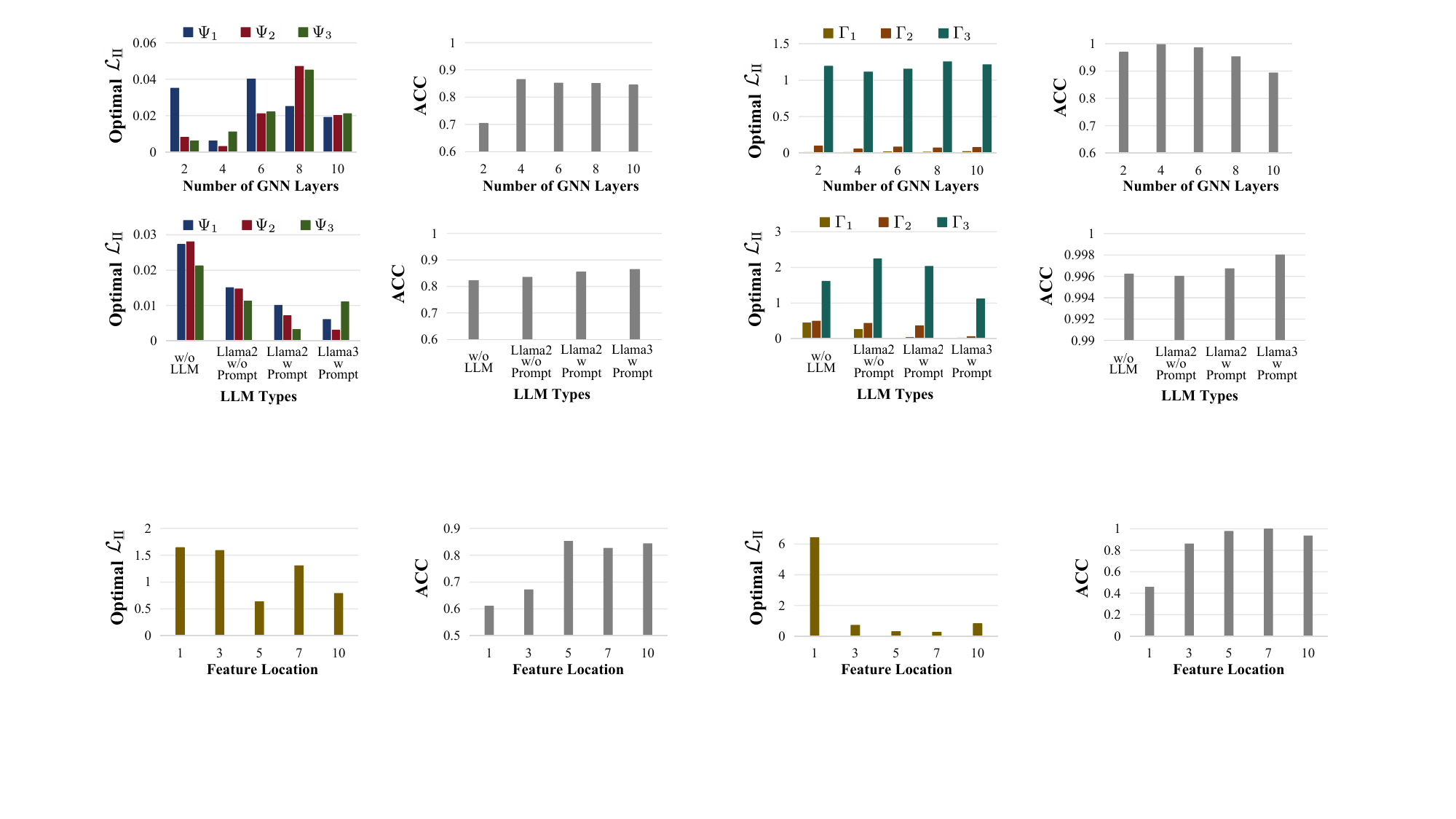}
        \label{fig:chart09} % 标签
    }
    \hspace{1.5mm}
    % 子图
    \subfigure[Node-level task model accuracy with different feature positions.]{
        \includegraphics[width=0.4\linewidth]{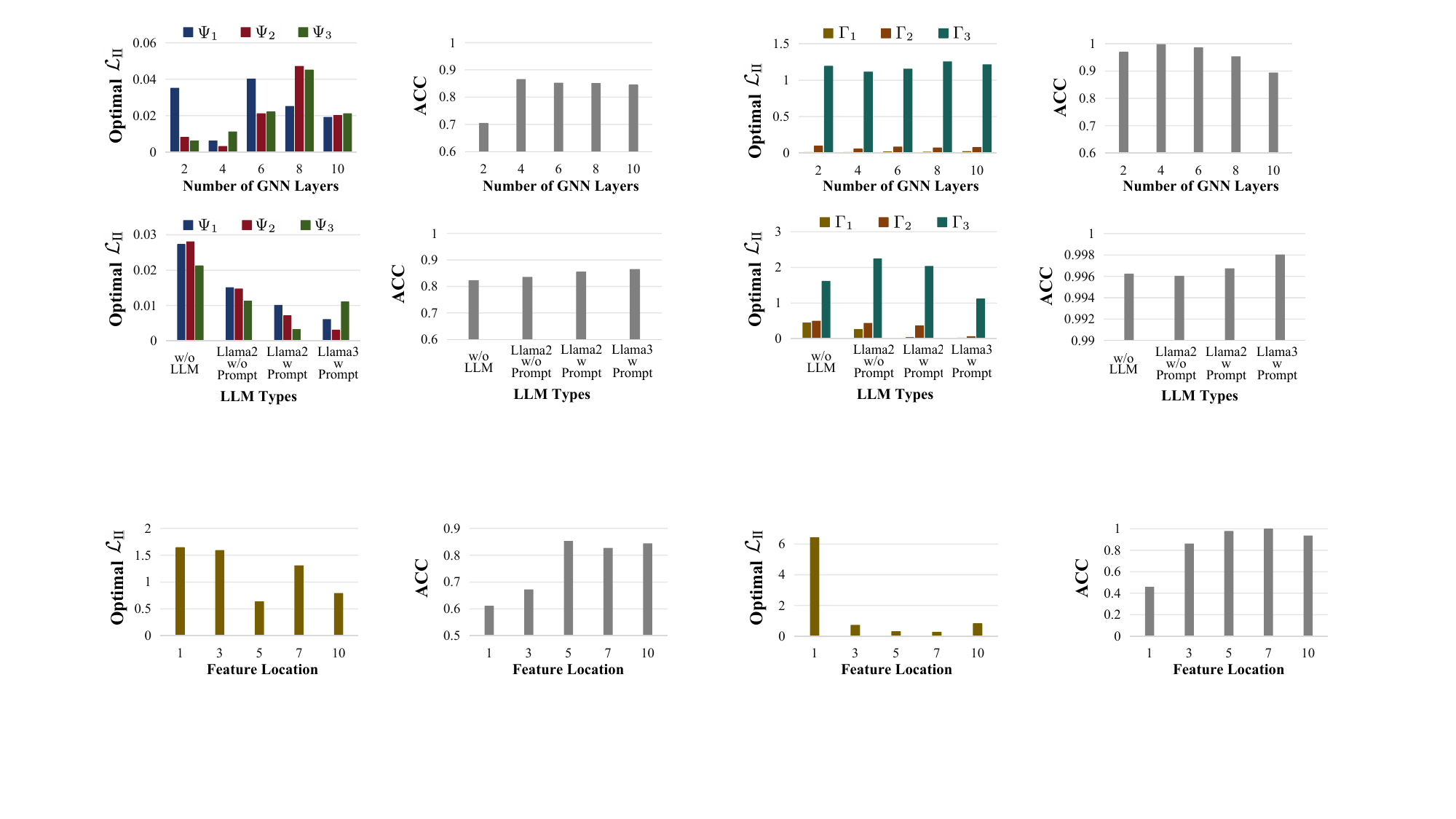}
        \label{fig:chart10} % 标签
    }
    % 子图
    \subfigure[Graph-level task alignment results with different feature positions.]{
        \includegraphics[width=0.4\linewidth]{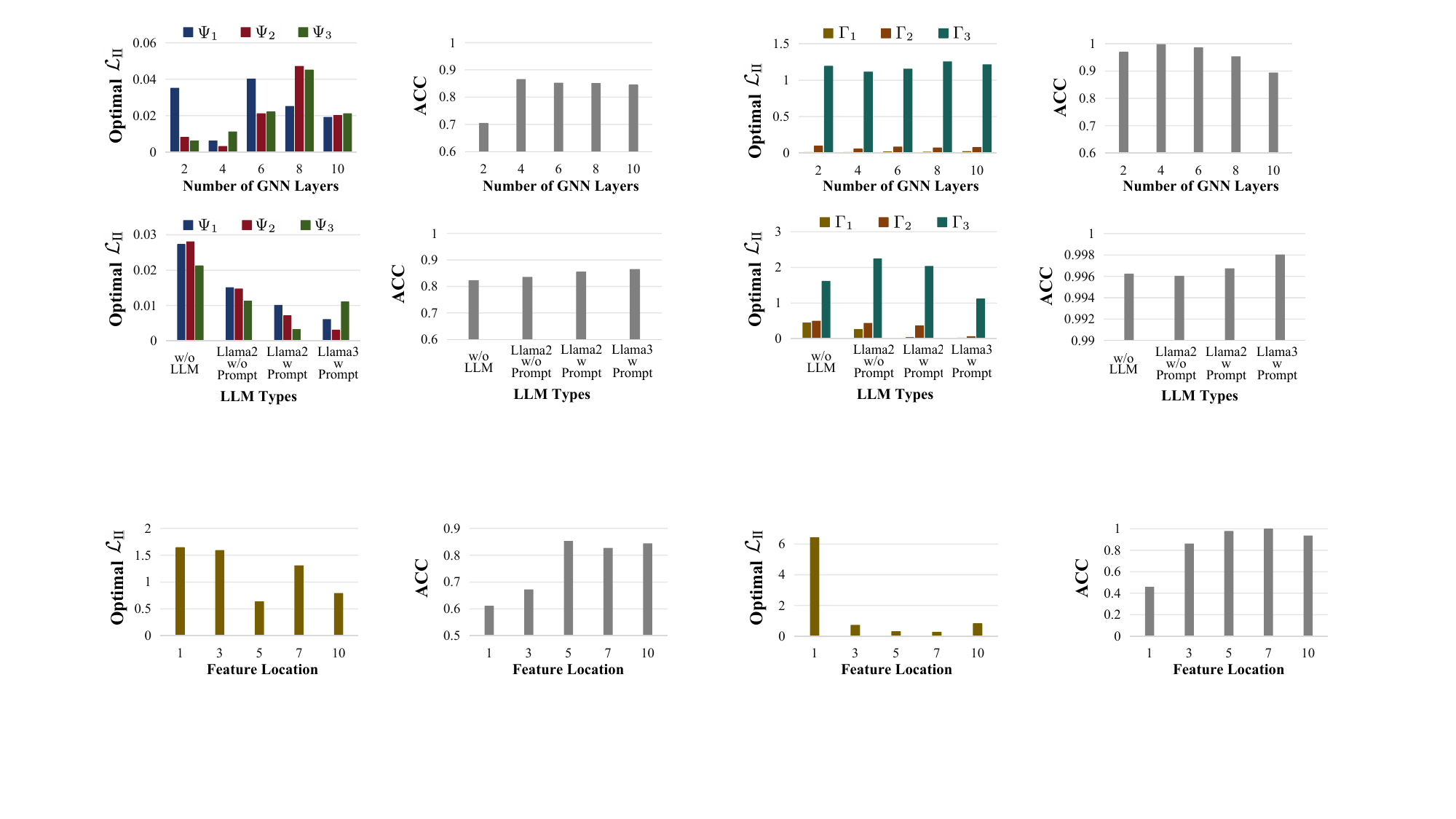}
        \label{fig:chart11} % 标签
    }
    \hspace{1.5mm}
    % 子图
    \subfigure[Graph-level task model accuracy with different feature positions.]{
        \includegraphics[width=0.4\linewidth]{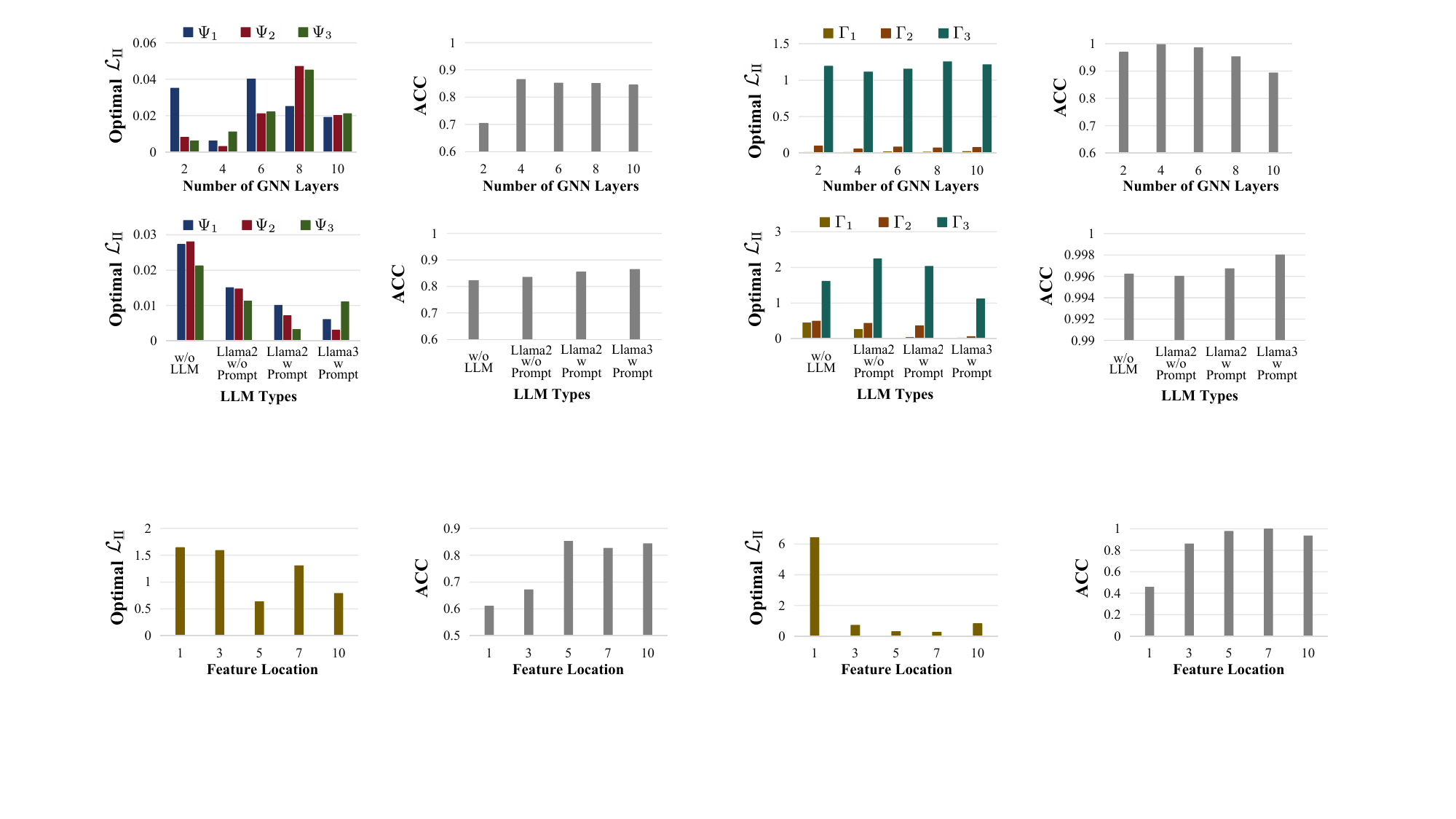}
        \label{fig:chart12} % 标签
    }
      \vskip -0.1in
    \caption{Experimental results under different feature positions. The numbers represent the relative positions of the tokens corresponding to the adopted features within the entire output, ranging from a minimum of 1 to a maximum of 10.} % 图要都确认换一下??????
      \vskip -0.1in
      
	\label{fig:chart-location}
\end{figure}

We analyze the LM-enhancer-plus-GNN framework using the CCSG dataset and the approach from Section \ref{sec:m}, looking for deeper insights and possible optimization of the framework. Following prior studies on causality in graph representation learning \cite{DBLP:conf/iclr/WuWZ0C22,DBLP:conf/aaai/GaoYLSJWZ024}, we treat nodes as the smallest variable units, focusing on node relationships without modeling internal node interactions. Most approaches use fixed-parameter LLM enhancers to reduce training costs \cite{liuone,DBLP:conf/nips/HuangRCK0LL23}, our experiments also follow this setting.

\begin{theorem}
    Given a high-level causal model $h(\cdot)$ and a variable $\bar{Z}^{h}$ within $h(\cdot)$, suppose there exists a variable $\bar{Z}^{f}$ within $f(\cdot)$, where $f(\cdot)$ is a GNN with LLM enhancers, satisfying the following condition:
    \begin{gather}
    \text{INTINV}\big(f, G^{\text{orig}}, G^{\text{diff}}, \bar{Z}^{f}\big) = 
    \text{INTINV}\big(h, G^{\text{orig}}, G^{\text{diff}}, \bar{Z}^{h}\big),
    \label{eq:intinv_condition}
    \end{gather}
    where $G^{\text{orig}}$ and $G^{\text{diff}}$ differ at or above the scale of individual nodes. Then, the internal variables within the GNN model are always sufficient to constitute $\bar{Z}^{f}$.
    \label{thm:match-node-level-above}
\end{theorem}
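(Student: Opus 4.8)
The plan is to exploit the compositional structure of the enhancer-plus-GNN model together with the node-atomic nature of the high-level model $h(\cdot)$. First I would write $f = g \circ \ell$, where $\ell$ denotes the LLM enhancer and $g$ the GNN. The structural fact I would lean on is that $\ell$ acts node-wise: for each node $u$ it maps that node's raw content $x_u$ to an embedding $e_u = \ell(x_u)$, with no information flowing between distinct nodes inside $\ell$. The collection $E = \{e_u\}$ is precisely the input consumed by $g$, so every sub-node (token-level) internal variable of $\ell$ associated with node $u$ influences the output of $f$ only through $e_u$. On the high-level side, because nodes are the smallest variable units of $h(\cdot)$, the variable $\bar{Z}^h$ is a function of node-scale quantities only, and hence the intervention $\text{INTINV}(h, G^{\text{orig}}, G^{\text{diff}}, \bar{Z}^h)$ swaps information at node granularity or coarser.

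Next I would argue that, under the hypothesis that $G^{\text{orig}}$ and $G^{\text{diff}}$ differ at or above the node scale, every difference between the two inputs that is visible to $f$ is mediated by the node embeddings $E$ and the graph structure. Indeed, a node-scale feature change replaces $x_u$ wholesale and hence replaces $e_u$ wholesale, while a structural change is consumed directly by $g$; in either case the relevant degrees of freedom reside at the embedding interface $E$ and in the downstream message-passing states of $g$, all of which are internal variables of the GNN. Combining this with Corollary \ref{cly:m}, the equality in Equation \ref{eq:intinv_condition} guarantees that $\bar{Z}^f$ has exactly the same total effect on the output as $\bar{Z}^h$; since that total effect is transmitted entirely through GNN-level quantities, it can be reproduced by intervening on a variable built from $E$ together with the GNN hidden states.

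I would then finish with an explicit construction: take $\bar{Z}^f$ to be the restriction of the embedding interface (together with the GNN hidden representations it induces) to the node-level coordinates corresponding to $\bar{Z}^h$, and verify that interchange intervention on this GNN-internal variable yields the same output as $\text{INTINV}(h, G^{\text{orig}}, G^{\text{diff}}, \bar{Z}^h)$. Because this candidate lies wholly inside $g$, it witnesses that the GNN's internal variables suffice to constitute $\bar{Z}^f$, which is exactly the claim.

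The main obstacle is the middle step: rigorously formalizing what ``differ at or above the scale of individual nodes'' means and proving that no sub-node variable of $\ell$ is ever required. This amounts to showing that the node-wise independence of $\ell$ forces all node-scale information to be summarized at the embedding interface, so that any interchange effect expressible through the LLM's internal token representations is already expressible through $E$. The contrasting case — where $G^{\text{orig}}$ and $G^{\text{diff}}$ differ \emph{below} the node scale, so that distinguishing them genuinely requires the LLM's internal token-level variables — is precisely what makes the node-scale hypothesis indispensable, and carefully handling that boundary is where I expect to spend the most effort.
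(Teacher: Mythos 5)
Your proposal takes essentially the same route as the paper: its proof likewise rests on the fact that every sub-node (LLM-internal) variable influences the output only through the node-level interface---formally, the first-layer GNN representations $\hat{Z}^{1}_{j}$ block all causal paths from the node attributes $X_j$ and LLM hidden states to $Y^{f}$---so any admissible $\bar{Z}^{f}$ containing LLM-internal components can be replaced by an equivalent set of GNN-internal variables, exactly the substitution you sketch. The formalization you flag as the main obstacle is precisely what the paper supplies by constructing an explicit SCM for the enhancer-plus-GNN model and validating it with the IC algorithm (Lemma \ref{thm:scmlma}), after which the case decomposition and replacement argument go through as in your construction.
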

The proof can be found in \textbf{Appendix} \ref{prf:expt}. According to the theorem, and taking into account the difficulty of fixed LLMs in modeling causal relationships, the subsequent analysis focuses on using the GNN model to examine causal relationship modeling within the framework.

\subsubsection{Node-level Analysis}
\label{sec:node-level-analysis}
We begin our analysis with node-level tasks by first constructing a node classification dataset that leverages the rich information in the CCSG and designing $h^{\text{node}}(\cdot)$. $h^{\text{node}}(\cdot)$ outputs the class of a target node $v$. Each graph sample is used to classify only one target node $v$. This ensures the elimination of potential mutual influences that may arise when multiple nodes are classified simultaneously, which could prevent $h^{\text{node}}(\cdot)$ from being accurately established. The formal definition of $h^{\text{node}}$ is as follows:
\begin{gather}
    h^{\text{node}}(G)= h^{\text{node},3} \circ h^{\text{node},2} \circ h^{\text{node},1}(G),  
    \label{eq:nla}
\end{gather}
Here, the symbol $\circ$ denotes function composition. The function $h^{\text{node},1}(\cdot)$ is responsible for processing single node feature, while $h^{\text{node},2}(\cdot)$ captures and analyzes the relationships and properties among these nodes. Finally, $h^{\text{node},3}(\cdot)$ serves as the ultimate processing function, outputting the features of the target node $v$. The details can be found in \textbf{Appendix} \ref{apx:details of the ccsg dataset}. To ensure generalizability, we construct our LLM-enhancer-plus-GNN model based on commonly used GNNs and LLMs. Specifically, the GNN module is implemented using GCN \cite{DBLP:conf/nips/KipfW16} and LLM enhancer is built using Llama 3 \cite{DBLP:journals/corr/abs-2407-21783}. To ensure precise analysis, we adjust the number of training epochs and modify the task difficulty so that all models achieve an accuracy of at least 90\%.

The results are presented in Figures \ref{fig:exp1-1} and \ref{fig:exp1-2}. It can be observed that the best alignment of variables (marked in green) across different node levels occurs in layer 0 and layer 1. These positions can be regarded as the locations of the internal representations $Z^{h}$ within the GNN model. Additionally, it can also be observed that the optimal alignment for $h^{\text{node,1}}(\cdot)$ is positioned slightly ahead of the optimal alignment for $h^{\text{node,2}}(\cdot)$.

% should be at node level 
% 实验去证明啥啊
% 通过训练使其准确率足够高??????

\subsubsection{Graph-level Analysis}
For graph-level task analysis, we manually set the utilized topological structures, node features, and their interrelations, which collectively form the graph sample and determine the label. The corresponding high-level causal model $h^{\text{graph}}$ can be formulated as follows:
\begin{gather}
    h^{\text{graph}}(G)= h^{\text{graph},3} \circ h^{\text{graph},2} \circ h^{\text{graph},1}(G),  
\end{gather}
where $h^{\text{graph},1}(\cdot)$ processes the features of substructures, $h^{\text{graph},2}(\cdot)$ denotes the function locating the topological structures, and $h^{\text{graph},3}(\cdot)$ denotes the final processing function that outputs the graph label. The details can be found in \textbf{Appendix} \ref{apx:details of the ccsg dataset}.

The results are presented in Figure \ref{fig:exp2-1} and Figure \ref{fig:exp2-2}. Similar to the phenomenon observed in the node-level experiments, the optimal alignment for $h^{\text{graph,1}}(\cdot)$ is positioned ahead of the optimal alignment for $h^{\text{graph,2}}(\cdot)$. 

Based on the experimental results at both the node-level and graph-level, it can be found that when $Z^{h}$ is more strongly associated with node-level information and is closer to the raw input data in the logical structure of the model $h(\cdot)$, its corresponding latent variable $Z^{f}$ is more likely to appear in the shallow layers of the GNN structure. This indicates that the role of the LLM enhancer is to process node-level and raw data-level information. Such a finding also partially validates Theorem \ref{thm:match-node-level-above}. In summary, this conclusion can be further generalized into the following empirical finding.

\paragraph{Empirical Finding 1.} \textit{For fixed-parameter LLM enhancers, the features output by the LLM serve the function of representing information at the node level and the raw data level.} % 都在GNN里面，除了在1层，which can also be view as layer 0 of GNN，其他的都大于一层。与高阶因果模型一致，与节点与图层级一致.  

% 建模关系被等比放大, 过于放大，有害于，导致LLM enhancer失效

% 添加一些更加深刻的结论描述

% Conclusion 1: % 越是节点话，越靠前

% % 约节点化，特征化，约靠前，同时结合定理，证明Fixed LLM enhancer主要提供特征表示、对齐等，不能真正建模因果关系

% Conclusion 1.5：% 等比扩大

% Conclusion 2：% 从结果分析得到，准确率越高的，一般情况下准确率也越高，反映了该损失对于准确率有益处，佐证了所提出的分析方法的有效性

% Conclusion 3：% token不同对于损失与结果均造成影响

\subsubsection{Deeper Insight}

For further analysis, we repeated the experiments under different model scales by increasing the number of layers and the hidden dimensions of the GNN. The results are shown in Figure \ref{fig:gnnlayer-exp-node}. The experiments reveal an intriguing phenomenon: \textit{for both node-level and graph-level variables, the alignment results of $Z^{h}$ within the GNN exhibit a certain regularity across different model scales.} For example, as shown in Figure \ref{fig:gnnlayer-exp-node}, despite variations in model depth, the alignment performance is consistently worse for even-numbered layers than for odd-numbered layers, with the worst performance observed in the final layer. Furthermore, for $Z^h = \Phi_{3}$, the performance remains suboptimal across all settings. Figure \ref{fig:chart02} illustrates that regardless of the size of the hidden dimensions, the optimal alignment results for $Z^{h}$ consistently occur in the middle layers. In \textbf{Appendix} \ref{apx:extra experiments}, we present numerous additional experiments, all of which corroborate the existence of this phenomenon. This phenomenon leads to the following conclusion:
\paragraph{Empirical Finding 2.} \textit{After receiving input from the LLM enhancer, the neural structure within the GNN exhibits a relatively consistent logical pattern, maintaining a certain degree of invariance despite changes in the model's scale.}

For further exploration, experiments were performed to compare alignment with accuracy. To facilitate analysis, accuracy is no longer forced to be above 90\%. Within the results in Figure \ref{fig:chart-scale}, it can be observed that there is a certain correlation between the optimal alignment, as represented by $\mathcal{L}_{\text{II}}$, and the model's performance.

\paragraph{Empirical Finding 3.} \textit{The analysis based on $\mathcal{L}_{\text{II}}$ can partially reflect the capability of the model. Specifically, a lower optimal $\mathcal{L}_{\text{II}}$ value generally indicates stronger model capability and vice versa.}

\begin{figure*}[h]
	\centering
	\includegraphics[width=0.8\linewidth]{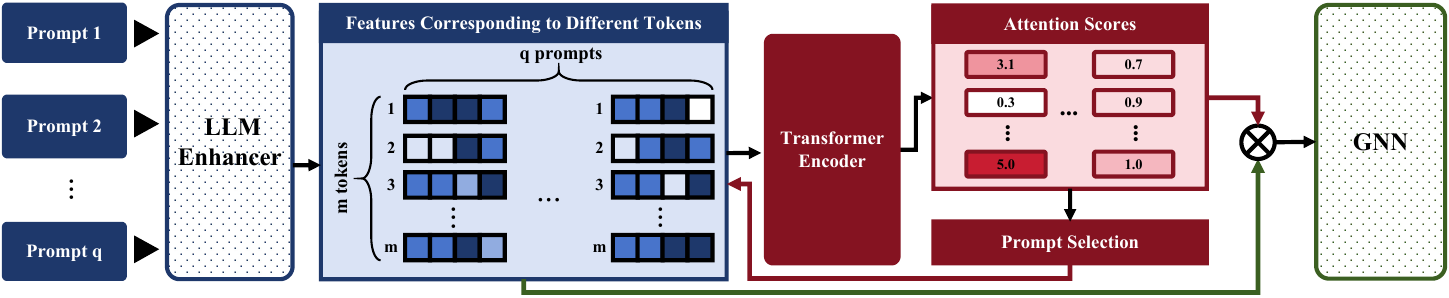} \\ 
      \vskip -0.1in
	\caption{Framework of the proposed AT module.}
      \vskip -0.15in
	\label{fig:framework2}
\end{figure*}

\begin{table*}[h]\tiny
    \centering
    \caption{Performance comparison across different backbone `matches of LLMs and GNNs. \textbf{Diff} denotes performance change achieved with AT module.}
    \setlength{\tabcolsep}{8pt} 
    \begin{tabular}{ccccccccccc}
        \toprule
        \multirow{2}{*}{Dataset} & \multirow{2}{*}{Method} & \multicolumn{3}{c}{GCN} & \multicolumn{3}{c}{GAT} & \multicolumn{3}{c}{GraphSAGE} \\
        \cmidrule(r){3-5} \cmidrule(r){6-8} \cmidrule(r){9-11}
        & & w/o AT & w/ AT & \textbf{Diff} & w/o AT & w/ AT & \textbf{Diff} & w/o AT & w/ AT & \textbf{Diff} \\
        \midrule
        \multirow{4}{*}{Cora}
        & Llama2 & 83.71 $\pm$ 1.05 & 84.67 $\pm$ 0.79 & \textcolor{green}{+0.96} & 83.67 $\pm$ 1.34 & 84.47 $\pm$ 0.43 & \textcolor{green}{+0.80} & 84.45 $\pm$ 0.98 & 85.12 $\pm$ 1.14 & \textcolor{green}{+0.67} \\
        & Qwen2 & 84.33 $\pm$ 0.74 & 86.36 $\pm$ 0.35 & \textcolor{green}{+2.03} & 84.55 $\pm$ 0.77 & 86.13 $\pm$ 0.87 & \textcolor{green}{+1.58} & 84.11 $\pm$ 0.57 & 85.34 $\pm$ 0.75 & \textcolor{green}{+1.23} \\
        & Llama3 & 84.98 $\pm$ 1.00 & 86.74 $\pm$ 0.32 & \textcolor{green}{+1.76} & 84.87 $\pm$ 0.85 & 86.29 $\pm$ 0.21 & \textcolor{green}{+1.42} & 85.13 $\pm$ 0.76 & 86.43 $\pm$ 0.22 & \textcolor{green}{+1.30} \\
        \midrule
        \multirow{4}{*}{Pubmed}
        & Llama2 & 81.18 $\pm$ 0.46 & 83.82 $\pm$ 0.60 & \textcolor{green}{+2.64} & 81.86 $\pm$ 0.84 & 83.86 $\pm$ 0.42 & \textcolor{green}{+2.00} & 81.82 $\pm$ 1.51 & 84.77 $\pm$ 0.57 & \textcolor{green}{+2.95} \\
        & Qwen2 & 81.09 $\pm$ 0.34 & 84.04 $\pm$ 1.61 & \textcolor{green}{+2.95} & 81.45 $\pm$ 0.29 & 83.11 $\pm$ 0.14 & \textcolor{green}{+1.66} & 81.88 $\pm$ 1.29 & 84.35 $\pm$ 1.64 & \textcolor{green}{+2.47} \\
        & Llama3 & 82.06 $\pm$ 1.35 & 84.43 $\pm$ 1.42 & \textcolor{green}{+2.37} & 82.48 $\pm$ 0.88 & 84.86 $\pm$ 1.60 & \textcolor{green}{+2.38} & 82.23 $\pm$ 0.79 & 85.32 $\pm$ 1.49 & \textcolor{green}{+3.09} \\
        \midrule
        \multirow{3}{*}{Instagram}
        & Llama2 & 62.81 $\pm$ 0.98 & 64.13 $\pm$ 0.82 & \textcolor{green}{+1.32} & 63.12 $\pm$ 0.75 & 64.82 $\pm$ 0.42 & \textcolor{green}{+1.70} & 64.55 $\pm$ 1.73 & 66.78 $\pm$ 0.43 & \textcolor{green}{+2.23} \\
        & Qwen2 & 63.12 $\pm$ 1.45 & 64.45 $\pm$ 0.98 & \textcolor{green}{+1.33} & 62.81 $\pm$ 1.44 & 64.76 $\pm$ 1.18 & \textcolor{green}{+1.95} & 64.48 $\pm$ 0.87 & 66.23 $\pm$ 0.77 & \textcolor{green}{+1.75} \\
        & Llama3 & 63.55 $\pm$ 0.43 & 64.86 $\pm$ 0.22 & \textcolor{green}{+1.31} & 63.64 $\pm$ 0.23 & 65.51 $\pm$ 0.10 & \textcolor{green}{+1.87} & 65.40 $\pm$ 0.15 & 67.17 $\pm$ 0.24 & \textcolor{green}{+1.77} \\
        \bottomrule
    \end{tabular}
    \vskip -0.15in
    \label{tab:compare}
\end{table*}

% \paragraph{Empirical Finding 2.} \textit{Structure break.}

% 同时，上述结果也证明了，增加GNN层数，模型表现是不会持续增长的。Appendix 中的实验也证明了增加GNN的hidden dimension也出现了类似的现象。这从一个侧面说明简单拓展GNN并不能持续提升论文所研究的融合LLM Ehancer的GNN模型的能力，这是存在一个极限的，这和一般的GNN是一致的，相当于存在一组最优的超参数，并且固定之后就难以调节。另一方面，图\ref{fig:chart05},\ref{fig:chart06},\ref{fig:chart07},和\ref{fig:chart08}中的实验也表明了，使用功能更加强大的LLM是可以增强模型表现的。这证明通过优化LLM Ehancer的结构对于模型优化是有帮助的。然而，针对LLM Backbone的优化是较为局限的，获得更优的LLM Backbone需要大量的训练，而本论文所使用的LLM基座均来自开源实现。想要自己优化LLM Backbone是很难的。因此，为了更好的建模因果关系，是否可以从LLM enhancer和GNN的连接处着手呢？

Extra experiments within \textbf{Appendix} \ref{apx:extra experiments} indicate the same. Combining with Empirical Finding 2, we hypothesize that the scaling up of the GNN only enlarges its structure, without enhancing its capacity for causal relation modeling. Meanwhile, Figures \ref{fig:chart05}, \ref{fig:chart06}, \ref{fig:chart07}, and \ref{fig:chart08} demonstrate that more powerful LLMs enhance model performance. Yet, improving the LLM backbone remains challenging due to the high resource cost. Empirical Finding 1 shows that the LLM enhancer in the architecture primarily provides information at both the node level and the raw data level. Thus, we instead shift our focus to the optimization of the connection between the LLM enhancer and GNN.

We modified the features transmitted from the LLM enhancer to the GNN and conducted experiments to assess their impact. Specifically, while conventional methods \cite{DBLP:journals/sigkdd/ChenMLJWWWYFLT23,DBLP:conf/nips/HuangRCK0LL23,liuone} typically select last-layer features corresponding to specific token positions in the LLM enhancer output as input for the subsequent GNN, we varied these positions to alter the transmitted information. Experimental results, as shown in Figures \ref{fig:chart-location}, demonstrate that changes in the selection of these positions significantly affect model performance, with a greater impact than the previously discussed factors. As a result, we chose to optimize the model based on the selection of these positions.

\section{Method}

% 基于前面的实验分析，我们提出对于LLM enhancer和GNN之间的数据传递进行优化，提出了所谓的“基于注意力的传输模块”。
Based on the previous experimental analysis, we propose optimizing the data transmission between the LLM enhancer and the GNN by introducing the \textit{\textbf{A}ttention-based \textbf{T}ransmission} (AT) module. The design of the AT module is simple and straightforward, primarily leveraging an attention mechanism to select the most optimal parts of the information output by the LLM enhancer for downstream transmission, thereby optimizing the model. Figure \ref{fig:framework2} illustrates the framework of the module.

Specifically, to conduct a more comprehensive feature search, we first use the LLM to generate \( q \) different input prompts for the LLM enhancer, where \( q \) is a hyperparameter. As a result, we obtain \( q \) different feature sets, \( X^{1}, X^{2}, \dots, X^{q} \). Each \( X^{i} \) contains the features corresponding to the output tokens from the LLM enhancer using the \( i \)-th prompt. Specifically, \( X^{i} = \{x^{i}_{j}\}_{j=1}^n \), where \( x^{i}_j \) denotes the feature of the \( j \)-th token from the final layer of the LLM enhancer when the \( i \)-th prompt is provided as input.

Since the number \( n \) of output tokens is not fixed and varies with different inputs, we select a total of \( m \) feature vectors, where \( m \) is a hyperparameter. The indices of the selected vectors are determined based on \( n \) and \( m \), and are computed as follows: $\{ \lfloor\min(1 \times \frac{n}{m}, 1)\rfloor, \lfloor\min(2 \times \frac{n}{m}, 1)\rfloor, \ldots, \lfloor\min(m \times \frac{n}{m}, 1)\rfloor \}$. We then select the corresponding features and denote them as $S^{i} = \{s^{i}_{j}\}_{j=1}^{m}$. 

Next, we input $S^{1}$,$S^{2}$, $\ldots$ , $S^{q}$ into a transformer encoder separately to generate attention scores. Specifically, for $S^{i}$, we acquire attention matrix $A^{i}$:
\begin{equation}
   A^{i} = Q^{i} ({K^{i}})^\top,
\end{equation}
where $Q^{i}$ and $K^{i}$ denote the output Query and Key matrices of the final layer of the transformer encoder. Then, compute the average attention scores for vector \( \bm{\alpha}^{i} \in \mathbb{R}^m \), the $j$-th element of $\bm{\alpha}^{i}$ can be represented as:
\begin{gather}
       \bm{\alpha}^{i}_{j} = \frac{1}{m} \sum_{l=1}^m A^{i}_{jl}, \quad \text{for } j = 1, 2, \dots, m.
\end{gather}
we acquire $q$ vectors, $\bm{\alpha}^{1}$, $\bm{\alpha}^{2}$, $\ldots$ , $\bm{\alpha}^{q}$. Next, we applied the softmax function to normalize all the elements within all vectors together and acquired normalized results $\bar{\bm{\alpha}}^{1}$, $\bar{\bm{\alpha}}^{2}$, $\ldots$ , $\bar{\bm{\alpha}}^{q}$. Then, calculate the final output vector $\bm{z}$ of the LLM enhancer:
\begin{gather}
       \bm{z} = \frac{1}{qm} \sum_{i=1}^q\sum_{j=1}^m \bar{\bm{a}}^{i}_j \bm{s}^i_j,  
\end{gather}
where $\bm{z}$ will be utilized as the node feature and input into the GNN. We utilize $\delta$ epochs for training the selection of the prompts, after $\delta$ epochs, we fix the utilized prompt and remove the rest.

% 问了搞清楚我们所提出的AT模块在LLM enhancer结合GNN的这种基础框架上可以起到多大优化作用，我们在Cora，Pubmed，还有Instagram上验证了我们的模块。结果如图1所示。可见，AT模块在多种LLM框架均可以发挥效用。实验设定与细节请见Appendix。
We validated our module on Cora \cite{DBLP:conf/nips/KipfW16}, Pubmed  \cite{DBLP:conf/nips/HamiltonYL17}, and Instagram \cite{DBLP:conf/www/HuangHYBTCZ24} datasets. We utilize Llama2 \cite{DBLP:journals/corr/abs-2307-09288}, Qwen2 \cite{DBLP:journals/corr/abs-2407-10671}, and Llama3 \cite{DBLP:journals/corr/abs-2407-21783} as the LLM backbones, and GCN \cite{DBLP:conf/nips/KipfW16}, GAT \cite{DBLP:conf/iclr/VelickovicCCRLB18}, and GraphSAGE \cite{DBLP:conf/nips/HamiltonYL17} as the GNN backbones.The results are shown in Table \ref{tab:compare}. It can be observed that the AT module is effective across various LLM frameworks. All experimental settings and details are provided in \textbf{Appendix} \ref{apx:experimental details}.

\section{Conclusion}
This paper analyzes the LLM-enhancer-plus-GNN paradigm using the CCSG dataset and interchange intervention. The proposed method is validated through theory and experiments. Additionally, a novel AT module is proposed to optimize the paradigm further.

\section*{Impact Statement}

This paper presents work whose goal is to advance the field of 
Graph Representation Learning and LLM. There are many potential societal consequences of our work, none which we feel must be specifically highlighted here.

\section*{Acknowledgments}
We would like to express our sincere gratitude to the reviewers of this paper, as well as the Program Committee and Area Chairs, for their valuable comments and suggestions. This work is supported by the CAS Project for Young Scientists in Basic Research, Grant No. YSBR-040.

% In the unusual situation where you want a paper to appear in the
% references without citing it in the main text, use \nocite
% \nocite{langley00}

\bibliography{example_paper}
\bibliographystyle{icml2025}

%%%%%%%%%%%%%%%%%%%%%%%%%%%%%%%%%%%%%%%%%%%%%%%%%%%%%%%%%%%%%%%%%%%%%%%%%%%%%%%
%%%%%%%%%%%%%%%%%%%%%%%%%%%%%%%%%%%%%%%%%%%%%%%%%%%%%%%%%%%%%%%%%%%%%%%%%%%%%%%
% APPENDIX
%%%%%%%%%%%%%%%%%%%%%%%%%%%%%%%%%%%%%%%%%%%%%%%%%%%%%%%%%%%%%%%%%%%%%%%%%%%%%%%
%%%%%%%%%%%%%%%%%%%%%%%%%%%%%%%%%%%%%%%%%%%%%%%%%%%%%%%%%%%%%%%%%%%%%%%%%%%%%%%

\newpage
\appendix
\onecolumn
\section{Extended Related Works}
\label{apx:related works}
\subsection{Causal Abstraction within Neural Networks}
% The Causal Mechanism Identification methods within Neural NetworksThese methods can be classified into iterative nullspace projection \cite{DBLP:conf/acl/RavfogelEGTG20, DBLP:journals/tacl/ElazarRJG21, DBLP:journals/tacl/LoveringP22}, causal mediation analysis \cite{DBLP:conf/nips/MengBAB22, DBLP:conf/nips/VigGBQNSS20}, and causal effect estimation \cite{DBLP:conf/nips/AbrahamDFGGPRW22, DBLP:journals/corr/abs-2207-14251, DBLP:conf/nips/WuGIPG23}. For iterative nullspace projection, which applies linear transformations in the model's representation space to project certain attributes into the nullspace, effectively removing the influence of these attributes. By observing the impact of this removal operation on model performance and behavior, researchers can infer the importance of these attributes in specific tasks. 

% Causal mediation analysis examines how one variable influences the final outcome variable through another variable, while causal effect estimation aims to determine the changes in the outcome variable when the independent variable is altered, thus revealing the causal relationship between variables, rather than mere correlations. We attempt to use these methods to analyze the working mechanisms of GNNs integrated with LLMs.

% **The Causal Mechanism Identification Methods within Neural Networks**  

Causal abstraction, originating from research in causal theory, involves refining and summarizing the causal relationships present in a low-level model into a high-level causal model \cite{DBLP:journals/ai/IwasakiS94,chalupka2017causal,DBLP:conf/uai/RubensteinWBMJG17}. This process ensures that both models exhibit the same causal effects under soft or hard interventions \cite{DBLP:conf/clear2/MassiddaGIB23}. Achieving a fully precise causal abstraction is challenging; therefore, the more commonly used and observed approach is the approximate causal abstraction. Recent research on causal abstraction have been widely applied within neural networks, which can be broadly classified into three key approaches: iterative nullspace projection, causal mediation analysis, and causal effect estimation \cite{DBLP:conf/uai/BeckersEH19}.

Iterative nullspace projection \cite{DBLP:conf/acl/RavfogelEGTG20,DBLP:journals/tacl/ElazarRJG21,DBLP:journals/tacl/LoveringP22} leverages linear transformations on a model’s representation space to project certain attributes into the nullspace, effectively removing their influence. This technique has been employed to identify the role of specific features in model decisions. For example, in \cite{DBLP:conf/acl/RavfogelEGTG20}, the authors applied nullspace projection to remove gender-related information from word embeddings to assess its impact on downstream tasks. Other works \cite{DBLP:journals/tacl/ElazarRJG21} extended this to different fairness-related contexts, assessing bias removal and its impact on neural network performance.

Causal mediation analysis \cite{DBLP:conf/nips/VigGBQNSS20,DBLP:conf/nips/MengBAB22} examines how an intermediary variable transmits the effect from one variable to another. Notably, \cite{DBLP:conf/nips/VigGBQNSS20} applied this framework to interpret attention mechanisms in Transformer models, showing how specific tokens influence model predictions through attention heads. 

Causal effect estimation \cite{DBLP:conf/nips/AbrahamDFGGPRW22,DBLP:journals/corr/abs-2207-14251,DBLP:conf/nips/WuGIPG23} focuses on quantifying the causal impact of changing specific input variables on the output. For example, \cite{DBLP:conf/nips/AbrahamDFGGPRW22} explored how modifying specific neurons in a neural network can alter model predictions, allowing the identification of neurons with strong causal influence over particular tasks. Similarly, \cite{DBLP:conf/nips/WuGIPG23} demonstrated how such interventions could uncover causal dependencies in integrated architectures like GNNs combined with LLMs, offering insights into their interaction and decision-making mechanisms. On the other hand, Interchange Intervention Training (IIT) \cite{DBLP:conf/icml/GeigerWLRKIGP22} based methods achieve causal abstraction by aligning the causal effects between high-level and low-level models through the use of interchange interventions \cite{DBLP:conf/icml/WuDGZP23,DBLP:conf/acl/HuangWMP23}. These methods have demonstrated excellent results when applied to neural networks \cite{DBLP:conf/clear2/GeigerWPIG24}.

These methods offer valuable tools for understanding the internal causal structure of complex deep learning models, providing interpretable insights into their operation. We build on this foundation to explore the causal mechanisms in GNNs integrated with LLMs by applying these methodologies to elucidate the interactions between their components and uncover their causal dependencies.

\subsection{LLM as GNN Enhancers}
With the emergence of LLMs \cite{DBLP:conf/nips/BrownMRSKDNSSAA20,DBLP:conf/naacl/DevlinCLT19,DBLP:journals/corr/abs-2407-21783}, a new research direction has gained popularity in the field of graph representation learning: using LLMs for initial node feature processing, followed by GNNs for in-depth exploration and utilization of inter-node relationships. This combined approach is expected to address more complex graph data analysis problems effectively \cite{mao2024advancinggraphrepresentationlearning}. 

In this domain, related methods focus on optimizing the prompts used with LLMs and integrating the more flexible data processing capabilities that LLMs offer. This integration results in graph data that is more readily analyzable by subsequent GNN processes, allowing for better summarization and analysis of data features \cite{DBLP:conf/www/0001RTYC24,DBLP:journals/sigkdd/ChenMLJWWWYFLT23}. Among these methods, most approaches focus on the utilization of LLMs to enhance the interpretation and alignment of graph data. For instance, TAPE \cite{DBLP:conf/iclr/HeB0PLH24} prompts an LLM to perform zero-shot classification, requests textual explanations for its decision-making process, and designs an LLM-to-LM interpreter to translate these explanations into informative features for downstream GNNs. Similarly, OFA \cite{liuone} represents both graph data and tasks as nodes, aligning different textual contents into feature vectors of the same dimension. This alignment enables a consistent representation of data, which is then processed using GNN. By ensuring that all data, regardless of its initial format, can be uniformly processed, OFA improves the overall effectiveness of graph data analysis. CasMLN \cite{DBLP:conf/sigir/WangZYH24} also follows this trend by introducing LLM-based external knowledge to effectively capture the implicit nature of graphs and node types, thereby enhancing type- and graph-level representations.

Some others utilize LLM to enhance class-level information. e.g., ENG \cite{DBLP:journals/corr/abs-2310-09872} leverages LLMs to enhance class-level information and seamlessly introduces labeled nodes and edges without altering the raw dataset, facilitating node classification tasks in few-shot scenarios. Additionally, some works \cite{DBLP:conf/www/RenWXSCWY024,DBLP:journals/corr/abs-2307-15780} optimize node attribute information using LLMs in specific recommendation tasks, followed by subsequent graph learning.

All these methods follow the general framework of using LLMs for initial data processing, making the data more amenable to analysis, and then utilizing GNNs for a deeper investigation into the relationships and structures within the data. Our research focuses on analyzing and expanding upon this overall framework, aiming to improve the integration and effectiveness of LLM and GNN combinations in graph representation learning.

\section{Proofs}

\subsection{Proof of Theorem \ref{thm:m}.}
\label{prf:m}
% 表示原因变量X对结果变量Y的总影响，这个总影响包括了X对Y的直接影响和通过中介变量M对Y的间接影响。总效应的概念在社会科学、经济学、医学和工程学等领域具有广泛的应用。例如，在医学研究中，总效应可以用来评估一种新药物对病人的整体健康状况的影响；在经济学研究中，总效应可以用来分析政策变化对经济指标的综合影响。通过理解总效应，研究人员可以更好地设计干预措施和政策，以实现预期的结果。

\paragraph{Theorem 3.2.}
% \textit{Given model $f(\cdot)$ comprises both an LLM model and a GNN, which accurately maps $G \in \mathcal{G}$ to $Y \in \mathcal{Y}$ so that $Y = f(G)$, the combinations of intermediate variables generated by interchange interventions have all appeared in the data directly processed by $f(\cdot)$ on $G \in \mathcal{G}$, and there exist a subset $C$ of the intermediate variables of $f(\cdot)$ that satisfied $\eta:C \rightarrow Z^{h}$, $\eta$ is bijective. If some $Z^{f}$ let $\mathcal{L}_{\text{II}}$ reaches minimal, then we can conclude that total effect $\text{TE}_{\bm{z}^{f},\bm{z}^{f'}}(Y^{f})$ is equal to $\text{TE}_{\bm{z}^{h},\bm{z}^{h'}}(Y^{h})$, and $\bm{z}^{h} = \eta^{-1}(\bm{z}^{f})$ and $\bm{z}^{h'} = \eta^{-1}(\bm{z}^{f'})$.}
\textit{Given a high-level causal model $h(\cdot)$ and a low-level neural network model $f(\cdot)$, both of which accurately map input graphs $G \in \mathcal{G}$ to outputs $Y \in \mathcal{Y}$, such that $Y = f(G) = h(G)$. Assume there exists a subset $Z^f$ of the intermediate variables in $f(\cdot)$ and a bijective mapping $\eta: Z^f \rightarrow Z^h$, where $Z^h$ represents certain variables in $h(\cdot)$. If there exists a $Z^f$ that minimizes the loss $\mathcal{L}_{\text{II}}$, we can conclude that the total effect $\text{TE}_{\bm{z}^f, \bm{z}^{f'}}(Y^f)$ of $f(\cdot)$ is equal to the total effect $\text{TE}_{\bm{z}^h, \bm{z}^{h'}}(Y^h)$ of $h(\cdot)$ in all cases. Here, $\bm{z}^h$ and $\bm{z}^f$ represent the values of $Z^h$ and $Z^f$, respectively, for the same input graph $G$. Similarly, $\bm{z}^{h'}$ and $\bm{z}^{f'}$ represent their values for a different input graph $G'$.}

To support the proof of Theorem \ref{thm:m}, we first present the following lemma and provide the corresponding proof.

\begin{lemma}
    Given the conditions within Theorem \ref{thm:m}, if $\mathcal{L}_{\text{II}}$ reaches minimal, then for all $G^{\text{orig}} \in \mathcal{G}$ and $G^{\text{diff}} \in \mathcal{G}$, the following equation holds:
    \begin{equation}
        \text{INTINV}\big(f,G^{\text{orig}},G^{\text{diff}},Z^{f}\big) = \text{INTINV}\big(h,G^{\text{orig}},G^{\text{diff}},Z^{h}\big).
        \label{eq:intinveq}
    \end{equation}
\label{lma:eq}
\end{lemma}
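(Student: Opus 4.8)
The plan is to exploit the fact that $\mathcal{L}_{\text{II}}$ is an average over $\mathcal{G} \times \mathcal{G}$ of the pointwise discrepancies $\mathcal{D}\big(\text{INTINV}(h, G^{\text{orig}}, G^{\text{diff}}, Z^h), \text{INTINV}(f, G^{\text{orig}}, G^{\text{diff}}, Z^f)\big)$, so that driving the average to its floor forces every individual term to its floor. First I would record the structural property of the discrepancy $\mathcal{D}$: it is non-negative and satisfies $\mathcal{D}(a,b) = 0$ if and only if $a = b$. For the cross-entropy used in the paper this holds because the first argument, the output of $h(\cdot)$, is a hard $0/1$ label, so $\mathcal{D}(a,b) = -\log b_{c(a)} \geq 0$ (with $c(a)$ the class selected by $a$), and this vanishes exactly when $b$ places all its mass on $c(a)$, i.e.\ when $b = a$. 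Consequently $\mathcal{L}_{\text{II}} \geq 0$, and $\mathcal{L}_{\text{II}} = 0$ precisely when Equation \ref{eq:intinveq} holds for every ordered pair $(G^{\text{orig}}, G^{\text{diff}})$.

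The core step is to show that the minimum of $\mathcal{L}_{\text{II}}$ over admissible $Z^f$ actually equals $0$, i.e.\ that this floor is attained. Here I would invoke the two remaining hypotheses inherited from Theorem \ref{thm:m}: that $f(G) = h(G)$ for all $G \in \mathcal{G}$, and that there is a bijection $\eta : Z^f \to Z^h$. I would argue that the aligned choice $Z^f = \eta^{-1}(Z^h)$ realizes an exact abstraction of $f$ by $h$, so that substituting the $G^{\text{diff}}$-value of $Z^f$ into the run of $f$ on $G^{\text{orig}}$ yields exactly the output obtained by substituting the $G^{\text{diff}}$-value of $Z^h$ into the run of $h$ on $G^{\text{orig}}$; in other words the interchange intervention commutes with $\eta$. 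For this aligned $Z^f$ every summand vanishes, hence $\mathcal{L}_{\text{II}} = 0$ and the global minimum is $0$.

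Finally I would combine the two steps. Since a minimizer $Z^f$ is assumed to exist and the global minimum is $0$, that minimizer attains $\mathcal{L}_{\text{II}} = 0$. An average of non-negative numbers equal to $0$ forces each summand to be $0$, and by the identity-of-indiscernibles property of $\mathcal{D}$ each vanishing summand gives $\text{INTINV}(f, G^{\text{orig}}, G^{\text{diff}}, Z^f) = \text{INTINV}(h, G^{\text{orig}}, G^{\text{diff}}, Z^h)$ for all $G^{\text{orig}}, G^{\text{diff}} \in \mathcal{G}$, which is exactly Equation \ref{eq:intinveq}.

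I expect the main obstacle to be the middle step, namely justifying that the minimum is genuinely $0$ rather than some positive value. This reduces to showing that the bijection $\eta$ is not merely a set-level correspondence but a causal alignment under which interchange interventions commute between $f$ and $h$; making this commutation precise from the definition of $\text{INTINV}$ and the equality $f = h$, rather than taking ``exact abstraction'' for granted, is the delicate part. The closed form for $\mathcal{L}^{*}_{\text{II}}$ recorded in Proposition \ref{thm:proposition}, which evaluates to $0$ under the hard-label cross-entropy, provides an independent confirmation that $0$ is indeed the correct floor.
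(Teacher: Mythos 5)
Your proposal is correct and follows essentially the same route as the paper's proof: lower-bound $\mathcal{L}_{\text{II}}$ by its floor, show the floor is attained by the aligned choice $Z^{f} = \eta^{-1}(Z^{h})$ using the bijection together with $f = h$, and then read off the pointwise equality at the minimizer. The commutation step you flag as the delicate part is exactly what the paper spells out by decomposing $f$ into $f^{\text{pre}}$ and $f^{\text{latter}}$ and substituting $\eta^{-1}\big(h^{Z^{h}}(G^{\text{diff}})\big)$ into the run on $G^{\text{orig}}$; the only cosmetic difference is that the paper's floor is the self-discrepancy $\sum \mathcal{D}\big(\text{INTINV}(h,\cdot),\text{INTINV}(h,\cdot)\big)$ rather than $0$, which coincides with $0$ under the hard-label cross-entropy exactly as you note.
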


\begin{proof}
According to Equation \ref{eq:LII}, $\mathcal{L}_{\text{II}}$ can be represented as:
     \begin{align}
         \mathcal{L}_{\text{II}} 
         &= \sum_{G^{\text{orig}} \in \mathcal{G}} \sum_{G^{\text{diff}} \in \mathcal{G}} \mathcal{D}\Big( \text{INTINV}\big(h,G^{\text{orig}},G^{\text{diff}},Z^{h}\big), \text{INTINV}\big(f,G^{\text{orig}},G^{\text{diff}},Z^{f}\big)\Big),
     \end{align}
as $\mathcal{D}(\cdot)$ denotes difference between $\text{INTINV}\big(f,G^{\text{orig}},G^{\text{diff}},Z^{f}\big)$ and $\text{INTINV}\big(h,G^{\text{orig}},G^{\text{diff}},Z^{h}\big)$, we have:
 \begin{align}
     \mathcal{L}_{\text{II}} 
     &= \sum_{G^{\text{orig}} \in \mathcal{G}} \sum_{G^{\text{diff}} \in \mathcal{G}} \mathcal{D}\Big( \text{INTINV}\big(h,G^{\text{orig}},G^{\text{diff}},Z^{h}\big), \text{INTINV}\big(f,G^{\text{orig}},G^{\text{diff}},Z^{f}\big)\Big) 
     \nonumber\\
     & \geq \sum_{G^{\text{orig}} \in \mathcal{G}} \sum_{G^{\text{diff}} \in \mathcal{G}} \mathcal{D}\Big( \text{INTINV}\big(h,G^{\text{orig}}, G^{\text{diff}},Z^{h}\big),\text{INTINV}\big(h,G^{\text{orig}}, G^{\text{diff}},Z^{h}\big)\Big),
 \end{align}
$\sum_{G^{\text{orig}} \in \mathcal{G}} \sum_{G^{\text{diff}} \in \mathcal{G}} \mathcal{D}\Big( \text{INTINV}\big(h,G^{\text{orig}}, G^{\text{diff}},Z^{h}\big),\text{INTINV}\big(h,G^{\text{orig}}, G^{\text{diff}},Z^{h}\big)\Big)$ is the minimal possible value for $\mathcal{L}_{\text{II}}$, where Equation \ref{eq:intinveq} holds. Next, we demonstrate step by step that there exists a certain set of $Z^{f}$ that allows $\mathcal{L}_{\text{II}} = \sum_{G^{\text{orig}} \in \mathcal{G}} \sum_{G^{\text{diff}} \in \mathcal{G}} \mathcal{D}\Big( \text{INTINV}\big(h,G^{\text{orig}}, G^{\text{diff}},Z^{h}\big),\text{INTINV}\big(h,G^{\text{orig}}, G^{\text{diff}},Z^{h}\big)\Big)$. Given the conditions within Theorem \ref{thm:m}, there exists a subset $Z^{f}$ of the intermediate variables of $f(\cdot)$ that satisfied $\eta(Z^{f}) = Z^{h}$. As $\eta$ is bijective, we can find $Z^{f}$ that satisfied $\eta^{-1}(Z^{h}) = Z^{f}$, furthermore, $ \eta(Z^{f}) = \eta(\eta^{-1}(Z^{h})) = Z^{h}$. Then, by setting $Z^{f} = \eta^{-1}(Z^{h})$, we have:
 \begin{align}
     \text{INTINV}\big(f,G^{\text{orig}},G^{\text{diff}},Z^{f}\big) = f^{\text{latter}}\Big(\big(f^{\text{pre}}(G^{\text{orig}}) \setminus f^{Z^{f}}(G^{\text{orig}}) \big) \cup \eta^{-1}\big(h^{Z^{h}}(G^{\text{diff}})\big)\Big),
     \label{eq:desr}
 \end{align}
Where $f^{\text{pre}}(\cdot)$ outputs all the intermediate variables of the hidden layers corresponding to $Z^{f}$, and $f^{Z^{f}}(G^{\text{orig}})$ denotes the value of $Z^{f}$ given the input $G^{\text{orig}}$. The function $f^{\text{latter}}(\cdot)$ represents the output of the model $f(\cdot)$ given the intermediate variables output by $f^{\text{pre}}(\cdot)$, and $h^{Z^{h}}(G^{\text{diff}})$ denotes the value of $Z^{h}$ given $G^{\text{diff}}$ as input. Furthermore, given the conditions within Theorem \ref{thm:m}, we have:
\begin{align}
     f^{\text{latter}}\big(f^{\text{pre}}(G^{\text{orig}}) \big)  = h\big(G^{\text{orig}}\big),
\end{align}
therefore:
\begin{gather}
     f^{\text{latter}}\Big(f^{\text{pre}}(G^{\text{orig}}) \setminus f^{Z^{f}}(G^{\text{orig}}) \cup f^{Z^{f}}(G^{\text{orig}}) \Big)  
     = h^{\text{latter}}\Big(h^{\text{pre}}\big(G^{\text{orig}}\big) \setminus h^{Z^{h}}\big(G^{\text{orig}}\big) \cup h^{Z^{h}}\big(G^{\text{orig}}\big) \Big),
\end{gather}
% where $h^{\text{pre}}(\cdot)$ outputs the intermediate variables within $h(\cdot)$ preceding $Z^{h}$ and $Z^{h}$ itself, $h^{\text{latter}}(\cdot)$ denotes the $h(\cdot)$'s output given the output of $h^{\text{pre}}(\cdot)$. 
Where $h^{\text{pre}}(\cdot)$ outputs the intermediate variables within $h(\cdot)$ up to and including $Z^{h}$, and $h^{\text{latter}}(\cdot)$ denotes the subsequent computation of $h(\cdot)$ that produces the final output given the output of $h^{\text{pre}}(\cdot)$. Furthermore, we have:
\begin{gather}
     f^{\text{latter}}\Big(f^{\text{pre}}(G^{\text{orig}}) \setminus f^{Z^{f}}(G^{\text{orig}}) \cup \eta^{-1}\big(h^{Z^{h}}(G^{\text{orig}})\big) \Big)  
     = h^{\text{latter}}\Big(h^{\text{pre}}\big(G^{\text{orig}}\big) \setminus h^{Z^{h}}\big(G^{\text{orig}}\big) \cup h^{Z^{h}}\big(G^{\text{orig}}\big) \Big).
\end{gather}
Then, we have:
\begin{align}
     \text{INTINV}\big(f,G^{\text{orig}},G^{\text{diff}},Z^{f}\big) &= f^{\text{latter}}\Big(f^{\text{pre}}(G^{\text{orig}}) \setminus f^{Z^{f}}(G^{\text{orig}}) \cup \eta^{-1}\big(h^{Z^{h}}(G^{\text{diff}})\big) \Big)  \nonumber\\
     &= h^{\text{latter}}\Big(h^{\text{pre}}\big(G^{\text{orig}}\big) \setminus h^{Z^{h}}\big(G^{\text{orig}}\big) \cup h^{Z^{h}}\big(G^{\text{diff}}\big) \Big) \nonumber\\
     &= \text{INTINV}\big(h,G^{\text{orig}},G^{\text{diff}},Z^{h}\big).
     \label{eq:ifhieq}
\end{align}
Therefore, we can conclude that the minimal possible value of $\mathcal{L}_{\text{II}}$ which is $\sum_{G^{\text{orig}} \in \mathcal{G}} \sum_{G^{\text{diff}} \in \mathcal{G}} \mathcal{D}\Big( \text{INTINV}\big(h,G^{\text{orig}}, G^{\text{diff}},Z^{h}\big),\text{INTINV}\big(h,G^{\text{orig}}, G^{\text{diff}},Z^{h}\big)\Big)$ can be reached, where $\text{INTINV}\big(f,G^{\text{orig}},G^{\text{diff}},Z^{f}\big) = \text{INTINV}\big(h,G^{\text{orig}},G^{\text{diff}},Z^{h}\big)$, the lemma is proved.
\end{proof}

% ?????? cite补全

Next, we demonstrate Theorem \ref{thm:m} based on Lemma \ref{lma:eq}. According to \cite{pearl2009causality}, the total effect $\text{TE}_{\bm{z}^{f},\bm{z}^{f'}}(Y^{f})$ can be represented as:
\begin{align}
    \text{TE}_{\bm{z}^{f},\bm{z}^{f'}}(Y^{f}) &= \text{DE}_{\bm{z}^{f},\bm{z}^{f'}}(Y^{f}) + \text{IE}_{\bm{z}^{f},\bm{z}^{f'}}(Y^{f}) \nonumber\\
    &= \mathbb{E}\Big(Y^{f}\big(\bm{z}^{f'},\big\{\text{Pa}(Y^{f}) \setminus Z^{f}\big\}(\bm{z}^{f})\big) - \mathbb{E}\big(Y^{f}(\bm{z}^{f})\big)\Big) 
     \nonumber\\
    & + \mathbb{E}\Big(Y^{f}\big(\bm{z}^{f},\big\{\text{Pa}(Y^{f}) \setminus Z^{f}\big\}(\bm{z}^{f'})\big) - \mathbb{E}\big(Y^{f}(\bm{z}^{f})\big)\Big),
\end{align}
where $\text{DE}_{Z,Z'}(Y^{f})$ and $\text{IE}_{Z,Z'}(Y^{f})$ denotes the natural direct effect and natural indirect effect respectively \cite{pearl2009causality}, $\text{Pa}(\cdot)$ denotes the ancestor variables. $Y^{f}(\bm{z}^{f})$ denotes the value of $Y^{f}$ given $Z^{f} = \bm{z}^{f}$ and $Y^{f}\big(\bm{z}^{f'},\big\{\text{Pa}(Y^{f}) \setminus Z^{f}\big\}(\bm{z}^{f})\big)$ denotes the value of $Y^{f}$ given $Z^{f} = \bm{z}^{f'}$ while parents of $Y^{f}$ except for $Z^{f}$ are set as given $Z^{f} = \bm{z}^{f}$.

Then, according to the proof of Lemma \ref{lma:eq}, we have:
\begin{align}
     \mathbb{E}\big(Y^{f}(\bm{z}^{f})\big) = \frac{1}{\mathcal{G}^{2}}\sum_{G^{\text{orig}}\in \mathcal{G}}f^{\text{latter}}\Big(f^{\text{pre}}(G^{\text{orig}}) \setminus f^{Z^{f}}(G^{\text{orig}}) \cup \bm{z}^{f} \Big)
     \label{eq:teexp01}
\end{align}
and, we can also conclude that:
\begin{gather}
     Y^{f}\big(\bm{z}^{f'},\big\{\text{Pa}(Y^{f}) \setminus Z^{f}\big\}(\bm{z}^{f})\big) = 
     f^{\text{latter}}\Big(f^{\text{pre}}(G^{\text{orig}}) \setminus f^{Z^{f}}(G^{\text{orig}}) \cup (\bm{z}^{f} \setminus \text{Pa}(Y^{f})) \cup (\bm{z}^{f'} \cap \text{Pa}(Y^{f})) \Big),
     \label{eq:teexp02}
\end{gather}
where $\text{Pa}(Y^{f})$ is certain feature variables within model $f(\cdot)$, $G^{\text{orig}}$ represent the input sample here. Likewise, the following equation holds:
\begin{gather}
     Y^{f}\big(\bm{z}^{f},\big\{\text{Pa}(Y^{f}) \setminus Z^{f}\big\}(\bm{z}^{f'})\big) = 
     f^{\text{latter}}\Big(f^{\text{pre}}(G^{\text{orig}}) \setminus f^{Z^{f}}(G^{\text{orig}}) \cup (\bm{z}^{f'} \setminus \text{Pa}(Y^{f})) \cup (\bm{z}^{f} \cap \text{Pa}(Y^{f})) \Big).
     \label{eq:teexp03}
\end{gather}
Based on equation \ref{eq:teexp01}, \ref{eq:teexp02} and \ref{eq:teexp03}, we have:
\begin{align}
    \text{TE}_{\bm{z}^{f},\bm{z}^{f'}}(Y^{f}) 
    &= \mathbb{E}\Big(Y^{f}\big(\bm{z}^{f'},\big\{\text{Pa}(Y^{f}) \setminus Z^{f}\big\}(\bm{z}^{f})\big) - \mathbb{E}\big(Y^{f}(\bm{z}^{f})\big)\Big) 
     \nonumber\\
    & \quad  + \mathbb{E}\Big(Y^{f}\big(\bm{z}^{f},\big\{\text{Pa}(Y^{f}) \setminus Z^{f}\big\}(\bm{z}^{f'})\big) - \mathbb{E}\big(Y^{f}(\bm{z}^{f})\big)\Big)
     \nonumber\\
    & = \mathbb{E}\Big(f^{\text{latter}}\Big(f^{\text{pre}}(G^{\text{orig}}) \setminus f^{Z^{f}}(G^{\text{orig}}) \cup (\bm{z}^{f} \setminus \text{Pa}(Y^{f})) \cup (\bm{z}^{f'} \cap \text{Pa}(Y^{f})) \Big) 
    \nonumber\\
    & \quad  - \frac{1}{\mathcal{G}^{2}}\sum_{G^{\text{orig}}\in \mathcal{G}}f^{\text{latter}}\Big(f^{\text{pre}}(G^{\text{orig}}) \setminus f^{Z^{f}}(G^{\text{orig}}) \cup \bm{z}^{f} \Big)\Big)
    \nonumber\\
    & \quad  + \mathbb{E}\Big(f^{\text{latter}}\Big(f^{\text{pre}}(G^{\text{orig}}) \setminus f^{Z^{f}}(G^{\text{orig}}) \cup (\bm{z}^{f'} \setminus \text{Pa}(Y^{f})) \cup (\bm{z}^{f} \cap \text{Pa}(Y^{f})) \Big) 
    \nonumber\\
    & \quad  - \frac{1}{\mathcal{G}^{2}}\sum_{G^{\text{orig}}\in \mathcal{G}}f^{\text{latter}}\Big(f^{\text{pre}}(G^{\text{orig}}) \setminus f^{Z^{f}}(G^{\text{orig}}) \cup \bm{z}^{f} \Big)\Big)
    \nonumber\\
    &= \mathbb{E}\Big(\text{INTINV}(f,(G^{\text{orig}})^{\bm{z}^{f}},(G^{\text{diff}})^{\bm{z}^{f'}},Z^{f} \cap \text{Pa}(Y^{f}))
    \nonumber\\
    & \quad - \frac{1}{\mathcal{G}^{2}}\sum_{G^{\text{orig}}\in \mathcal{G}}f^{\text{latter}}\Big(f^{\text{pre}}(G^{\text{orig}}) \setminus f^{Z^{f}}(G^{\text{orig}}) \cup \bm{z}^{f} \Big)\Big)
    \nonumber\\
    & \quad + \mathbb{E}\Big(\text{INTINV}(f,(G^{\text{orig}})^{\bm{z}^{f'}},(G^{\text{diff}})^{\bm{z}^{f}},Z^{f} \setminus \text{Pa}(Y^{f}))
    \nonumber\\
    & \quad - \frac{1}{\mathcal{G}^{2}}\sum_{G^{\text{orig}}\in \mathcal{G}}f^{\text{latter}}\Big(f^{\text{pre}}(G^{\text{orig}}) \setminus f^{Z^{f}}(G^{\text{orig}}) \cup \bm{z}^{f} \Big)\Big),
    \label{eq:TEpre}
\end{align}
where $(G^{\text{orig}})^{\bm{z}^{f}}$ and $(G^{\text{orig}})^{\bm{z}^{f'}}$ denotes the graph samples that satisfied $f^{Z^f}((G^{\text{orig}})^{\bm{z}^{f}})=\bm{z}^{f} $ and $f^{Z^f}((G^{\text{orig}})^{\bm{z}^{f'}})=\bm{z}^{f'}$. Based on the above derivation and Equation \ref{eq:ifhieq}, we have:
\begin{align}
    \text{TE}_{\bm{z}^{f},\bm{z}^{f'}}(Y^{f}) 
    &= \mathbb{E}\Big(\text{INTINV}(h,(G^{\text{orig}})^{\bm{z}^{f}},(G^{\text{diff}})^{\bm{z}^{f'}},Z^{h} \cap \text{Pa}(Y^{h}))
    \nonumber\\
    & \quad - \frac{1}{\mathcal{G}^{2}}\sum_{G^{\text{orig}}\in \mathcal{G}}h^{\text{latter}}\Big(h^{\text{pre}}(G^{\text{orig}}) \setminus h^{Z^{h}}(G^{\text{orig}}) \cup h^{Z^{h}}\big((G^{\text{orig}})^{\bm{z}^{f} }\big) \Big)\Big)
    \nonumber\\
    & \quad + \mathbb{E}\Big(\text{INTINV}(h,(G^{\text{orig}})^{\bm{z}^{f'}},(G^{\text{diff}})^{\bm{z}^{f}},Z^{h} \setminus \text{Pa}(Y^{h}))
    \nonumber\\
    & \quad - \frac{1}{\mathcal{G}^{2}}\sum_{G^{\text{orig}}\in \mathcal{G}}h^{\text{latter}}\Big(h^{\text{pre}}(G^{\text{orig}}) \setminus h^{Z^{h}}(G^{\text{orig}}) \cup h^{Z^{h}}\big((G^{\text{orig}})^{\bm{z}^{f} }\big)\Big)\Big)
    \nonumber\\
    &= \mathbb{E}\Big(Y^{h}\big(h^{Z^{h}}\big((G^{\text{orig}})^{\bm{z}^{f'}},\big\{\text{Pa}(Y^{f}) \setminus Z^{f}\big\}(h^{Z^{h}}\big((G^{\text{orig}})^{\bm{z}^{f} })\big) 
    \nonumber\\
    & \quad - \mathbb{E}\big(Y^{h}(h^{Z^{h}}\big((G^{\text{orig}})^{\bm{z}^{f} })\big)\Big) 
    \nonumber\\
    & \quad + \mathbb{E}\Big(Y^{h}\big(h^{Z^{h}}\big((G^{\text{orig}})^{\bm{z}^{f}},\big\{\text{Pa}(Y^{f}) \setminus Z^{f}\big\}(h^{Z^{h}}\big((G^{\text{orig}})^{\bm{z}^{f'} })\big) 
    \nonumber\\
    & \quad - \mathbb{E}\big(Y^{h}(h^{Z^{h}}\big((G^{\text{orig}})^{\bm{z}^{f} })\big)\Big) 
     \nonumber\\
    &= \text{TE}_{h^{Z^{h}}\big((G^{\text{orig}})^{\bm{z}^{f}}\big), h^{Z^{h}}\big((G^{\text{orig}})^{\bm{z}^{f'}}\big)}(Y^{h}),
    \label{eq:TEaft}
\end{align}
where $\eta^{-1}(\bm{z}^{f})$ can be represented as $h^{Z^{h}}\big((G^{\text{orig}})^{\bm{z}^{f}}\big)$, the theorem is proved. % \eta^{-1}是那个映射
% 均匀分布，然后再积分？

% 使用TE转一下，因为TE相等，所以P概率因果效应也相等

% 一定可以在单层内找到，无混杂，others won't change，因果效应为0

% 对应同一个高阶因果要素，可能激活不同的对应低阶神经元，甚至是混合的

% 如果存在独立的低阶神经元高阶因果要素表征，则损失最小化时必然收敛至该处

\subsection{Proof of Corollary \ref{cly:m}.}
\label{prf:cm}

\paragraph{Corollary 3.3.}
\textit{Given the condition that there exist a subset $Z^{f}$ of the intermediate variables of $f(\cdot)$  satisfied $\eta:Z^{f} \rightarrow Z^{h}$ where $\eta$ is bijective does not hold, then if $\text{INTINV}\big(f,G^{\text{orig}},G^{\text{diff}},Z^{f}\big) = \text{INTINV}\big(h,G^{\text{orig}},G^{\text{diff}},Z^{h}\big)$ holds, the conclusion given in Theorem \ref{thm:m} remains valid.}

\

As $Z^{f} = \eta(Z^{h})$ no longer holds, we can then locate set $\dot{Z}^{f}$ that satisfied:
\begin{equation}
    \dot{Z}^{f} = \eta({Z^{h}}\cup U),
\end{equation}
where $U$ is the minimum set of extra variables that are required to determine the value of $\dot{Z}^{f}$. We will next analyze the two possible scenarios, namely $U \perp \!\!\! \perp Y^{f}$ and $U \not \perp \!\!\! \perp Y^{f}$. For case that $U \perp \!\!\! \perp Y^{f}$, we could fix $U$ as a constant set, and then have the following equation hold:
\begin{equation}
    \dot{Z}^{f} = \eta'({Z^{h}}),
\end{equation}
where $\eta'(\cdot)$ adopt a set of constant to replace $U$. In such case, we could follow the same demonstration given in Theorem \ref{thm:m} and prove that:
\begin{equation}
       \text{TE}_{\bm{z}^{f},\bm{z}^{f'}}(Y^{f}) = TE_{h^{Z^{h}}\big((G^{\text{orig}})^{\bm{z}^{f}}\big), h^{Z^{h}}\big((G^{\text{orig}})^{\bm{z}^{f'}}\big)}(Y^{h}),
       \label{eq:TE2}
\end{equation}
For case that $U \not \perp \!\!\! \perp Y^{f}$, if $\text{INTINV}\big(f,G^{\text{orig}},G^{\text{diff}},Z^{f}\big) = \text{INTINV}\big(h,G^{\text{orig}},G^{\text{diff}},Z^{h}\big)$ holds, we could follow Equation \ref{eq:TEpre} and Equation \ref{eq:TEaft} to proof that Equation \ref{eq:TE2} holds. The corollary is proven.

\subsection{Justification of Proposition \ref{thm:proposition}}
\label{prf:just}
\paragraph{Proposition 3.4.}
\textit{For a high-level causal model $h(\cdot)$ and a low-level neural network $f(\cdot)$, both mapping input graphs $G \in \mathcal{G}$ to outputs $Y \in \mathcal{Y}$ such that $Y = f(G) = h(G)$, if $Z^f$ within $f(\cdot)$ and $Z^h$ within $h(\cdot)$ minimize $\mathcal{L}_{\text{II}}$ to its optimal value $\mathcal{L}^{*}_{\text{II}}$, then $Z^f$ aligns best with $Z^h$ and has an identical total effect on the output prediction as $Z_h$. The minimal $\mathcal{L}^{*}_{\text{II}}$ is given by:
\begin{align}
\mathcal{L}^{*}_{\text{II}} = \frac{1}{\mathcal{G}^{2}} \sum_{G^{\text{orig}} \in \mathcal{G}} \sum_{G^{\text{diff}} \in \mathcal{G}} \mathcal{D}\Big( \text{INTINV}\big(h, G^{\text{orig}}, G^{\text{diff}}, Z^h\big), 
\text{INTINV}\big(h, G^{\text{orig}}, G^{\text{diff}}, Z^h\big)\Big).
\end{align}}

The loss $\mathcal{L}_{\text{II}}$ defined in Equation \ref{eq:LII} can be used to measure the correspondence between the neural network model $f(\cdot)$ and the higher-order causal model $h(\cdot)$. 

Specifically, given a high-level causal model $h(\cdot)$ and a low-level neural network model $f(\cdot)$, both of which accurately map input graphs $G \in \mathcal{G}$ to outputs $Y \in \mathcal{Y}$, such that $Y = f(G) = h(G)$, the following holds:

1. For $Z_{f}$ within $f(\cdot)$ and $Z_{h}$ within $h(\cdot)$ that let $\mathcal{L}_{\text{II}}$ reach the minimal possible value $\mathcal{L}^{*}_{\text{II}}$, then $Z_{f}$ is the variable that best align with $Z_{h}$ and holds identical total effect for the output prediction with $Z_{h}$. $\mathcal{L}^{*}_{\text{II}}$ can be calculated with:
\begin{align}
    \mathcal{L}^{*}_{\text{II}} = \sum_{G^{\text{orig}} \in \mathcal{G}} \sum_{G^{\text{diff}} \in \mathcal{G}} \mathcal{D}\Big( \text{INTINV}\big(h,G^{\text{orig}}, G^{\text{diff}},Z^{h}\big),\text{INTINV}\big(h,G^{\text{orig}}, G^{\text{diff}},Z^{h}\big)\Big).
\end{align}

2. For different variables $Z^{f,a}$ and $Z^{f,b}$ within $f(\cdot)$ and the corresponding loss $\mathcal{L}_{\text{II}}^{a}$ and $\mathcal{L}_{\text{II}}^{b}$, if $\mathcal{L}_{\text{II}}^{a} > \mathcal{L}_{\text{II}}^{b}$, we have total effect of $Z^{f,b}$ upon the output $Y$ is strictly similar to $Z^{h}$ than $Z^{f,a}$ under $D()$, formally:, formally:
\begin{align}
    \text{TE}_{\bm{z}^{f},\bm{z}^{f'}}(Y^{f}) - \text{TE}_{\bm{z}^{f},\bm{z}^{f'}}(Y^{f}) > \text{TE}_{\bm{z}^{h},\bm{z}^{h'}}(Y^{h}).
\end{align}
% 修改z'的问题??????

From the proof of Theorem \ref{thm:m}, we have that the minimal possible value of $\mathcal{L}_{\text{II}}$ is that:
\begin{align}
    \mathcal{L}_{\text{II}} = \sum_{G^{\text{orig}} \in \mathcal{G}} \sum_{G^{\text{diff}} \in \mathcal{G}} \mathcal{D}\Big( \text{INTINV}\big(h,G^{\text{orig}}, G^{\text{diff}},Z^{h}\big),\text{INTINV}\big(h,G^{\text{orig}}, G^{\text{diff}},Z^{h}\big)\Big),
\end{align}
under which Equation \ref{eq:intinveq} holds. According to Corollary \ref{cly:m}, if Equation \ref{eq:intinveq} holds, and $Y = f(G) = h(G)$, we have the total effect $\text{TE}_{\bm{z}^f, \bm{z'}^{f}}(Y^f)$ of $f(\cdot)$ is equal to the total effect $\text{TE}_{\bm{z}^h, \bm{z'}^{h}}(Y^h)$ of $h(\cdot)$. The conclusion is justified.

\subsection{Proof of Theorem \ref{thm:match-node-level-above}}
\label{prf:expt}

\paragraph{Theorem 3.5.} Given high-level causal model $h(\cdot)$, variable $\bar{Z}^{h}$ within $h(\cdot)$, if $\bar{Z}^{f}$ let $\text{INTINV}\big(f,G^{\text{orig}},G^{\text{diff}},\bar{Z}^{f}\big) = \text{INTINV}\big(h,G^{\text{orig}},G^{\text{diff}},\bar{Z}^{h}\big)$, and $G^{\text{orig}}$ and $G^{\text{orig}}$ different in node level, variables within the GNN model are sufficient to align $\bar{Z}^{f}$.

\begin{figure}[h]
\centering
\includegraphics[width=0.7\columnwidth]{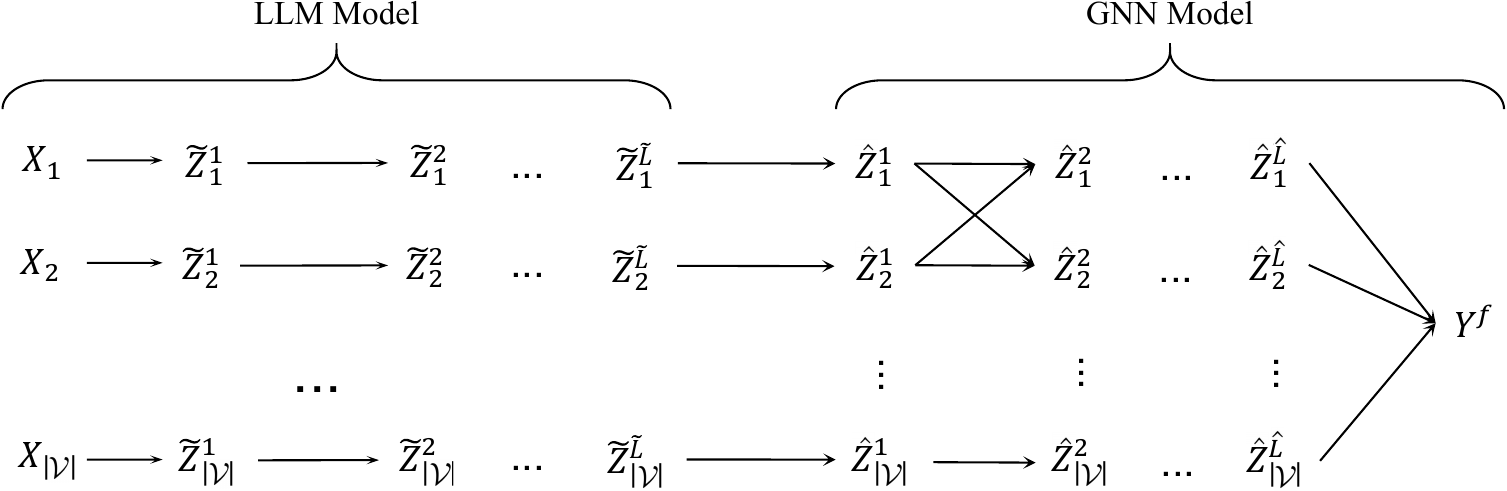} 
\caption{The graphical representation of the proposed SCM. Elements in set $\{\widetilde{Z}^{i,\widetilde{l}-1}_{j}\}_{j=1}^{\widetilde{H}_{\widetilde{l}}}$ holds right-arrows point to $\widetilde{Z}^{i,\widetilde{l}}_{j}$ 
and $\{\hat{Z}^{\hat{l}-1}_{k}| k \in \mathcal{N}(i) \cup \{i\} \}$ holds right-arrows point to $\hat{Z}^{\hat{l}}_{i}$. Some of the above-mentioned arrows are omitted due to complexity. The red arrows represent relationships that may not necessarily exist. }
\label{fig:SCM}
\end{figure}
% ?????? caption content
% ?????? SCM 替换为最新版本
% 

To prove Theorem \ref{thm:match-node-level-above}, we need to conduct a more detailed and accurate analysis of the variables within the model. Therefore, we introduce the concept of SCM (Structural Causal Model) \cite{pearl2009causality}, which is a framework that uses equations and directed graphs to represent and analyze causal relationships between variables. 

Figure \ref{fig:SCM} shows the constructed SCM. In the figure, some variables have been omitted, and edges connecting or crossing the ellipses have also been removed from the demonstration, e.g., edge $\hat{Z}^{1}_{1} \rightarrow \hat{Z}^{2}_{|\mathcal{V}|}$ is removed as it across the ellipses. In Figure \ref{fig:SCM}, we use $X_{i}$ to denote the node attribute of $i$-th node, $\widetilde{Z}^{\widetilde{l}}_{j}$ represents the output variable of the $\widetilde{l}$-th hidden layer of the LLM according to the $j$-th node feature. $\hat{Z}^{\hat{l}}_{i}$ represents the node representation of the $i$-th node of layer $\hat{l}$ in the GNN. 

Within the SCM, for $\hat{Z}^{\hat{l}}_{i}$, we have set $\text{Pa}(\hat{Z}^{\hat{l}}_{i}) = \{\hat{Z}^{\hat{l}-1}_{k}| k \in \mathcal{N}(i) \cup \{i\} \}$, where $\mathcal{N}(i)$ denotes the neighboring nodes of node $i$. $\text{Pa}(\hat{Z}^{\hat{l}}_{i}) = \{\hat{Z}^{\hat{l}-1}_{k}| k \in \mathcal{N}(i) \cup \{i\} \}$ implies that $\text{Pa}(\hat{Z}^{\hat{l}}_{i})$ consists of the node representations of the last layer that the corresponding node is adjacent to or the same as node $i$. We support the validity of the proposed SCM through the following lemma and its corresponding proof.

\begin{lemma}
    The SCM that is demonstrated within figure \ref{fig:SCM} can represent the general causal relationships among variables in the LLM-enhancer-plus-GNN model.
    \label{thm:scmlma}
\end{lemma}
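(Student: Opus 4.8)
The plan is to show that the directed graph in Figure~\ref{fig:SCM}, together with one structural equation per vertex, reproduces exactly the functional dependencies of the forward computation of any LLM-enhancer-plus-GNN model; once the parent set of every variable coincides with the set of quantities that genuinely enter its defining computation, the graph is by definition a valid SCM for the model. I would therefore proceed variable-type by variable-type, writing down for each vertex the deterministic map the architecture uses to produce it and verifying that its arguments are precisely the SCM parents, with an exogenous term absorbing any architecture-specific constants or stochasticity. This reduces the lemma to a direct comparison between the computational graph of the network and the edge set drawn in the figure.

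For the LLM block I would use that a fixed-parameter enhancer processes each node's raw attribute $X_i$ in isolation, so the activation $\widetilde{Z}^{i,\widetilde{l}}_j$ of layer $\widetilde{l}$ is a function only of the previous layer's activations for the same node, giving $\text{Pa}(\widetilde{Z}^{i,\widetilde{l}}_j) = \{\widetilde{Z}^{i,\widetilde{l}-1}_{j'}\}_{j'=1}^{\widetilde{H}_{\widetilde{l}}}$ with no cross-node edges, matching the figure. For the GNN block I would invoke the generic message-passing recursion $\hat{Z}^{\hat{l}}_i = \mathrm{UPDATE}(\hat{Z}^{\hat{l}-1}_i, \mathrm{AGG}(\{\hat{Z}^{\hat{l}-1}_k : k \in \mathcal{N}(i)\}))$, whose arguments are exactly $\{\hat{Z}^{\hat{l}-1}_{k} \mid k \in \mathcal{N}(i) \cup \{i\}\} = \text{Pa}(\hat{Z}^{\hat{l}}_i)$. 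Finally I would identify the GNN input layer $\hat{Z}^{0}_i$ with the last-layer LLM output for node $i$, supplying the single edge that couples the two blocks and so completing the path from $X_i$ to the output.

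The main obstacle is establishing that these parent sets are simultaneously complete and sound for the \emph{general} model class rather than one fixed architecture. Completeness (no genuine dependency is omitted) follows from the node-wise and layer-wise locality just described, but exact soundness fails for some architectures: a GNN without self-loops does not actually use $\hat{Z}^{\hat{l}-1}_i$, and an enhancer without residual paths need not use every previous unit. I would resolve this by reading Figure~\ref{fig:SCM} as an over-approximating template in which the red, optionally-present edges are exactly those dependencies that some architectures realize and others do not; including an edge whose structural equation ignores it still yields a correct, if non-minimal, SCM. The argument would then conclude that every realizable LLM-enhancer-plus-GNN instance induces a dependency graph that is a subgraph of Figure~\ref{fig:SCM}, while every solid edge is realized by the canonical forward pass, so the displayed SCM faithfully represents the general causal relationships among the model's variables.
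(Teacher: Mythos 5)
Your proposal is correct, but it takes a genuinely different formal route from the paper. The paper proves the lemma by arguing that the SCM of Figure \ref{fig:SCM} is exactly what the IC algorithm of \cite{pearl2009causality} would construct: Step 1 builds the skeleton by connecting only the pairs that no conditioning set can separate ($X_{i}$ with $\widetilde{Z}^{1}_{i}$, consecutive LLM layers for the same node, the LLM output with the GNN input, $\{\hat{Z}^{\hat{l}-1}_{k} \mid k \in \mathcal{N}(i)\cup\{i\}\}$ with $\hat{Z}^{\hat{l}}_{i}$, and the last GNN layer with $Y^{f}$ via pooling), Step 2 orients every edge in the feed-forward direction, and Step 3 is vacuous because all edges are already oriented. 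You instead verify directly that the structural equation of each vertex---the deterministic forward-pass map producing it---has as arguments precisely the parents drawn in the figure, which is a more elementary, self-contained argument that never invokes causal-discovery machinery. Both proofs ultimately rest on the same architectural facts (per-node locality of the fixed LLM enhancer, message-passing locality of the GNN, the single coupling edge between the two blocks); the paper's IC framing buys the stronger-sounding claim that the figure is not merely \emph{a} valid SCM but the one a canonical discovery procedure recovers, whereas your route buys an explicit treatment of architectural variability: your reading of the figure as an over-approximating template---where the red, optional edges cover dependencies only some architectures realize, and an edge ignored by its structural equation is harmless though non-minimal---makes rigorous what the paper only gestures at in the figure caption (``red arrows represent relationships that may not necessarily exist''). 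Two small things to tighten: you should state the structural equation for the output vertex itself ($Y^{f}$ as global pooling of $\{\hat{Z}^{\hat{L}}_{k}\}_{k=1}^{|\mathcal{V}|}$ for graph-level tasks, or the target node's last-layer representation for node classification), since the paper treats this as a separate connection step; and you should note explicitly that writing each variable as a function of its parents automatically fixes the edge orientations, which is the content of the paper's Step 2.
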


\begin{proof}
    We can demonstrate Lemma \ref{thm:scmlma} by showing that such SCM is consistent with the results generated using the IC algorithm \cite{pearl2009causality} for SCM construction. The IC algorithm is a commonly used method for constructing SCMs, please refer to section 3.2 of \cite{pearl2009causality} for details. We follow the triple-step method of the IC algorithm for the proof.
    % section 具体是多少??????

    \paragraph{Step 1.} We connect $X_{i}$ with $\widetilde{Z}^{1}_{i}$, as there exist none set $G^{\text{diff}}$ that satisfied $ X_{i} \perp \!\!\! \perp \widetilde{Z}^{1}_{i} |G^{\text{diff}}$. Furthermore, we connect $\widetilde{Z}_{i}^{\widetilde{l}}$ with $\widetilde{Z}^{\widetilde{l}-1}_{i}$, due to the neural network architecture used by the LLM. We no longer connect any other variables in the LLM. Then, $\hat{Z}^{\widetilde{L}}_{j}$ is connected to $\widetilde{Z}^{1}_{j}$, as such elements form the node representation of the GNN. Next, $\{\hat{Z}^{\hat{l}-1}_{k}| k \in \mathcal{N}(i) \cup \{i\} \}$ and $\hat{Z}^{\hat{l}}_{i}$ is connected, as the node representation relies on the node representations of last layer that the corresponding node is adjacent to or the same as node $i$. $Y^{f}$ and $\{\hat{Z}^{\hat{L}}_{k}\}_{k=1}^{|\mathcal{V}|}$ are connected as $Y^{f}$ calculated through global pooling. For node classification, the predicted label shall connect with the elements within $\{\hat{Z}^{\hat{L}}_{k}\}_{k=1}^{|\mathcal{V}|}$ correspondingly. And the others are not connected for the same reason as the LLM part.

    \paragraph{Step 2.} Based on the feed-forward mechanism of neural networks, all edges should be oriented in the feed-forward direction.

    \paragraph{Step 3.} Since all edges are already directed, step three is unnecessary.

The constructed SCM is consistent with that in Figure \ref{fig:SCM}, thus the lemma is proven.
\end{proof}

With the proposed SCM within Figure \ref{fig:SCM} and Lemma \ref{thm:scmlma}, we carry on demonstrating the proposed theorem. As the SCM demonstrates, we have $\hat{Z}_{j}^{1}$ block all causal routes from $X_{j}$ to $Y^{f}$. Therefore, we have:
\begin{align}
    \text{INTINV}\big(f,G^{\text{orig}},G^{\text{diff}},S\big) = \text{INTINV}\big(h,G^{\text{orig}},G^{\text{diff}},\hat{Z}^{1}_{j}\big), S \in \{X_{j}\} \cup \{\widetilde{Z}_{j}^{i}\}_{i=1}^{\widetilde{L}}.
    \label{eq:fee}
\end{align}
If $\forall S \in \bar{Z}^{f}$, $S \not \in \left\{ \widetilde{Z}_{j}^{i} \mid 1 \leq j \leq |\mathcal{V}|, 1 \leq i \leq \widetilde{L} \right\}$, then the conclusion given in the theorem naturally holds. Otherwise, for case that $ \exists S \in \bar{Z}^{f}$, $S \in \left\{ \widetilde{Z}_{j}^{i} \mid 1 \leq j \leq |\mathcal{V}|, 1 \leq i \leq \widetilde{L} \right\}$ we have:
\begin{align}
    \text{INTINV}\big(f,G^{\text{orig}},G^{\text{diff}},\bar{Z}^{f}\big) &= \text{INTINV}\big(f,G^{\text{orig}},G^{\text{diff}}, \widetilde{\mathcal{S}} \cup \hat{\mathcal{S}} \big)
    \nonumber\\
    &= \text{INTINV}\big(f,(G^{\text{orig}})^{\hat{\mathcal{S}}},G^{\text{diff}},  \widetilde{\mathcal{S}} \big)
\end{align}
where $\widetilde{\mathcal{S}} \subseteq \left\{ \widetilde{Z}_{j}^{i} \mid 1 \leq j \leq |\mathcal{V}|, 1 \leq i \leq \widetilde{L} \right\}$,  $\hat{\mathcal{S}} \subseteq \left\{ \hat{Z}_{j}^{i} \mid 1 \leq j \leq |\mathcal{V}|, 1 \leq i \leq \hat{L} \right\}$, $(G^{\text{orig}})^{\hat{\mathcal{S}}}$ denotes the input that ensure $\hat{\mathcal{S}}$ values changes to that of $G^{diff}$ as input. Based on equation \ref{eq:fee}, we have:
\begin{gather}
     \text{INTINV}\big(f,(G^{\text{orig}})^{\hat{\mathcal{S}}},G^{\text{diff}},  \widetilde{\mathcal{S}} \big) = \text{INTINV}\big(f,(G^{\text{orig}})^{\hat{\mathcal{S}}},G^{\text{diff}},\hat{\mathcal{S'}}\big), 
     \nonumber\\
     \hat{\mathcal{S'}} \subseteq \left\{ \hat{Z}_{j}^{i} \mid 1 \leq j \leq |\mathcal{V}|, 1 \leq i \leq \hat{L} \right\}.
\end{gather}
Therefore, we have the following holds for certain $\mathcal{S}$:
\begin{align}
\text{INTINV}\big(f, G^{\text{orig}}, G^{\text{diff}}, \bar{Z}^{f}\big) 
&= \text{INTINV}\big(f, G^{\text{orig}}, G^{\text{diff}}, \mathcal{S}\big) \\
&= \text{INTINV}\big(h, G^{\text{orig}}, G^{\text{diff}}, \bar{Z}^{h}\big), \\
\mathcal{S} &\subseteq \left\{ \hat{Z}_{j}^{i} \mid 1 \leq j \leq |\mathcal{V}|, 1 \leq i \leq \hat{L} \right\}.
\end{align}
The theorem is proved.

\section{Details of The CCSG Dataset}
\label{apx:details of the ccsg dataset}
\subsection{Node Features}
Below, we provide a detailed introduction to the node features in CCSG. All data related to CCSG, as well as the methods used to construct the data, will be made open-source. \textit{We have also provided the corresponding data and code in the supplementary materials.}

\subsubsection{Self-Constructed Features}
% 我们自己构建了一部分节点特征，这部分特征主要是为了分析图表示学习模型在特征较为简单的时候，如何对于图数据进行建模，使得数据集中的信息量可以着重体现在图的拓扑结构之上。我们使用的特征构建方法包括基于随机分布生成的噪声，或者是基于类别直接将节点特征设置为特征值。
We independently constructed a portion of the node features with the primary objective of thoroughly analyzing how graph representation learning models effectively handle graph data when the node features are relatively simple. This design aims to ensure that the information within the dataset is predominantly reflected in the graph's topological structure, minimizing the interference of complex node features with model performance. By doing so, we can better investigate the model's reliance on and ability to process structural information.

Specifically, we adopted two feature construction methods: First, we generated noise features based on random distributions. This approach introduces randomness to ensure feature neutrality while eliminating the influence of specific patterns, thereby enabling us to observe the model's ability to capture topological information under random conditions. Second, we directly assign feature values to nodes based on their categories, such as assigning unique feature identifiers to nodes of each class. This provides the model with the most basic classification information and allows us to test its performance under minimal feature conditions.

By employing these two feature construction methods, we ensure that the node feature design meets the analytical requirements while functioning as a controlled variable. This not only enhances the precision and interpretability of our experiments but also establishes a solid foundation for exploring the central role of graph structural information in model learning.

\subsubsection{Node Features Source from Wikipedia}
We extracted a large number of entries from Wikipedia to construct semantically rich node features with known interrelations. As illustrated in Table \ref{tab:category_counts}, the entries we collected are categorized into three main classes: spaceflight, computer, and software. These domains are further subdivided into 15 specific subcategories, providing a fine-grained classification structure. Additionally, we recorded interlinking information among the entries in the dataset to facilitate the construction of training data with explicit causal relationships. Table \ref{tab:colle} presents three representative examples of the data we collected, demonstrating the diversity and structure of the dataset. The contents of the ``Background Categories,'' ``Related Terms,'' and ``Similar Entries'' fields will be used to identify similar nodes for constructing a coherent graph structure. This approach ensures a comprehensive and well-documented framework for downstream applications and analysis.

\begin{table}[h!]
\centering
\begingroup
\scriptsize % Adjust font size explicitly
\caption{Classes of the entries collected from Wikipedia.}
\label{tab:category_counts}
\begin{tabular}{l|c}
\hline
\textbf{Class} & \textbf{Count} \\
\hline
\textbf{Spaceflight} & \textbf{1665} \\
\quad Spaceflight & 401 \\
\quad Satellite & 401 \\
\quad Rocket & 260 \\
\quad Outer Space & 302 \\
\quad Space Science & 301 \\
\hline
\textbf{Computer} & \textbf{2205} \\
\quad Computer engineering & 401 \\
\quad Automation & 401 \\
\quad Computer security & 601 \\
\quad Computer science & 401 \\
\quad Computer hardware & 401 \\
\hline
\textbf{Software} & \textbf{2005} \\
\quad Computer programming & 401 \\
\quad Software testing & 401 \\
\quad Application software & 401 \\
\quad Software development & 401 \\
\quad Software architecture & 401 \\
\hline
\end{tabular}
\endgroup
\end{table}

\begin{table}[h!]
\centering
\scriptsize
\renewcommand{\arraystretch}{1.5} % Adjust row height
\caption{Samples of the collected data}
\label{tab:colle}
\begin{tabular}{p{1.3cm}|p{3.5cm}|p{1cm}|p{1cm}|p{1.3cm}|p{4cm}|p{1.3cm}}
\hline
\textbf{Name} & \textbf{Content} & \textbf{Class} & \textbf{SubClass} & \textbf{Background Categories} & \textbf{Related Terms} & \textbf{Similar Entries} \\ \hline
Computational
social choice & Computational social choice is the study of problems that arise from aggregating the preferences of a group of agents using computer science and social choice theory. It focuses on efficiently computing voting outcomes, the complexity of manipulation, and representing preferences in combinatorial contexts. & Computer & Computer engineering & Social choice theory, Voting theory, Computer science & Voting theory, Social choice theory, Egalitarian rule, Utilitarian rule, Agreeable subset, Anonymity, Arrow's impossibility theorem, Bayesian regret, Budget-proposal aggregation, Computational social choice, Dictatorship mechanism, Discursive dilemma, Electoral list, Electoral system, Extended sympathy, Fractional approval voting, Fractional social choice, Gibbard–Satterthwaite theorem, Gibbard's theorem, Implicit utilitarian voting, Independence of irrelevant alternatives, Intensity of preference, Liberal paradox, Christian List, May's theorem, McKelvey–Schofield chaos theorem, Mechanism design, Median graph, Median voting rule, Nakamura number, Neutrality, Optimal apportionment, Proportional-fair rule, Quasitransitive relation, Ranked voting, Rated voting, Sequential elimination method, Social Choice and Individual Values, Social welfare function, Unrestricted domain & Algocracy, Algorithmic game theory, Algorithmic mechanism design, Cake-cutting, Fair division, Hedonic games \\ \hline

CEN/XFS & CEN/XFS (extensions for financial services) is a client-server architecture for financial applications on the Microsoft Windows platform, especially for devices like EFTPOS terminals and ATMs. It is an international standard promoted by the European Committee for Standardization and is based on the WOSA Extensions for Financial Services developed by Microsoft. XFS allows financial institutions to choose t & Computer & Computer engineering & Windows communication and services, Device drivers, Embedded systems & .NET Remoting, Administrative share, Bonjour Sleep Proxy, CEN/XFS, Channel Definition Format, Discovery and Launch, Distributed Component Object Model, Dynamic Data Exchange, EternalBlue, Indexing Service, Internet Connection Sharing, Internet Locator Server, Ipconfig, Layered Service Provider, Link Layer Topology Discovery, Link-Local Multicast Name Resolution, List of products that support SMB, LMHOSTS, Local Inter-Process Communication, Microsoft Message Passing Interface, Microsoft Message Queuing, Poison message, Microsoft Messenger service, Microsoft Transaction Server & Xpeak, Automated teller machine, Teller assist unit\\ \hline

iDempiere & iDempiere is an open source ERP software fully navigable on various devices. It includes CRM and SCM functions, and is community powered, unlike proprietary ERP solutions. & Software & Software Development & Free ERP software, Free business software & Adaxa Suite, Adempiere, Apache OFBiz, Compiere, Dolibarr, ERP5, ERPNext, HeliumV, IDempiere, InoERP, IntarS, JFire, Kuali Foundation, LedgerSMB, Metasfresh, Odoo, Postbooks, Tryton & OSGI, Java, Compiere, List of ERP software packages \\ \hline

Kallithea (software) & Kallithea is a free and cross-platform source code management system that focuses on providing repository hosting services for collaboration. It offers features such as forking, pull requests, code review, and issue tracking. Kallithea was created as a fork of RhodeCode due to changes in the license terms. While earlier versions of RhodeCode were licensed under the GNU GPL version 3, version 2.0 introduced exceptions & Software & Software Development & Open-source hosted development tools, Project management software, Free software programmed in Python, Free project management software & 
Cloud9 IDE, GitLab, Gitorious, Kallithea (software), Travis CI & 
Comparison of project management software, List of tools for code review, Comparison of source code hosting facilities, Apache Allura \\ \hline

Lunar Reconnaissance Orbiter \#Payload & The Lunar Reconnaissance Orbiter (LRO) is a NASA spacecraft currently orbiting the Moon, collecting essential data for future human and robotic missions. Launched in 2009 as part of the Lunar Precursor Robotic Program, LRO has created a detailed 3-D map of the lunar surface at 100-meter resolution, including high-resolution images of Apollo landing sites & Spaceflight & Space Science & Lunar Reconnaissance Orbiter, Missions to the Moon, NASA space probes, Space probes launched in 2009, Satellites orbiting the Moon & 
Lunar Reconnaissance Orbiter, Diviner, LCROSS, Mini-RF & 
Exploration of the Moon, LCROSS, List of missions to the Moon, Lunar Atmosphere and Dust Environment Explorer, Lunar outpost (NASA), Lunar water\\ \hline
\end{tabular}
\end{table}

\begin{table}[h!]
\centering
\scriptsize
\setlength{\tabcolsep}{5pt} % Adjust cell padding
\renewcommand{\arraystretch}{1.5} % Increase row spacing
\caption{Details concerning the utilized topological structures.}
\label{tab:stc}
\begin{tabular}{c|c|c|c|c|c}
\hline
\textbf{Structure Type} & Grid & Circle & Chain & Tree & Star  \\ \hline
\textbf{Illustration} & 
\adjustbox{valign=c}{\includegraphics[width=1cm]{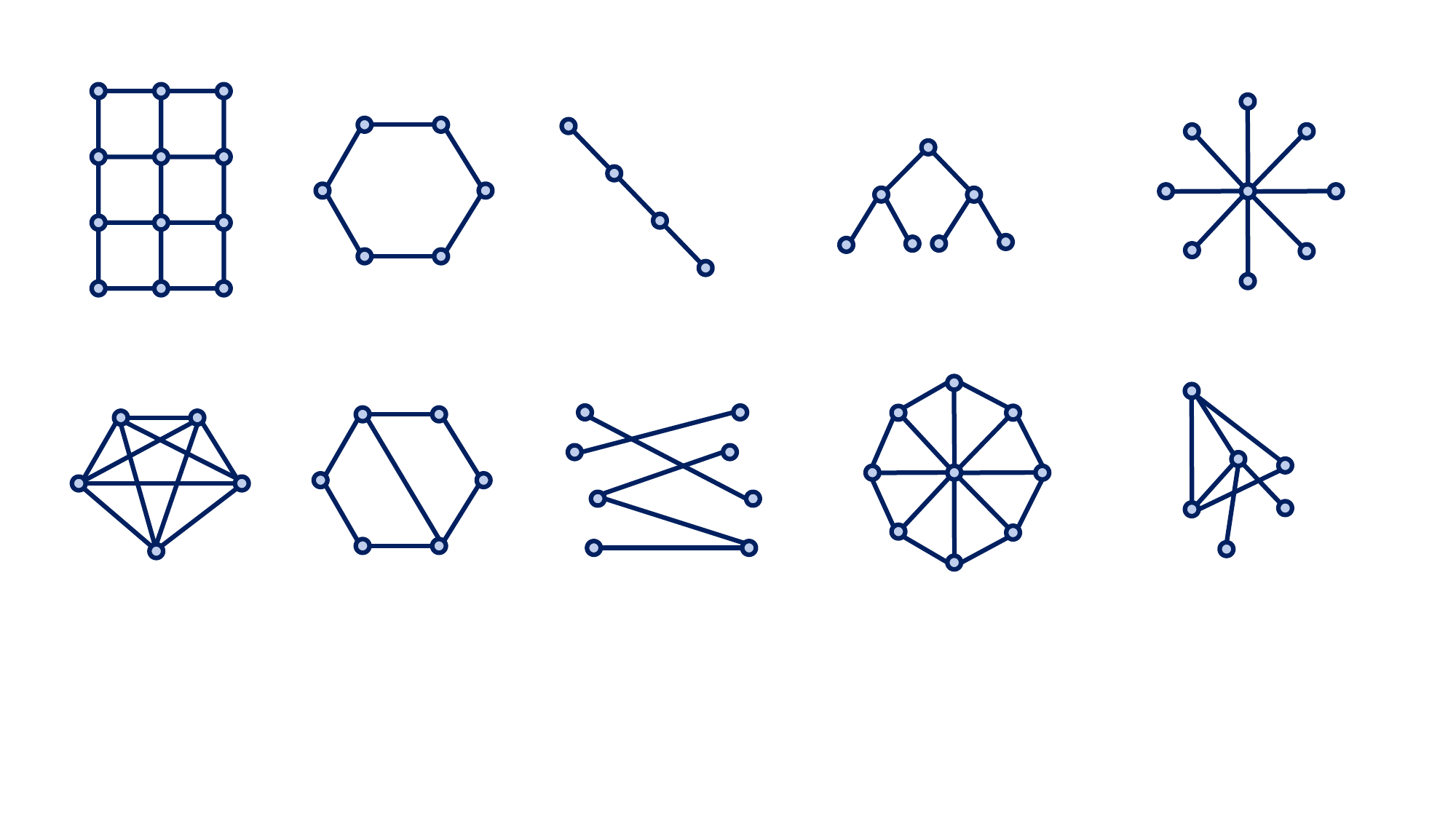}} & 
\adjustbox{valign=c}{\includegraphics[width=1cm]{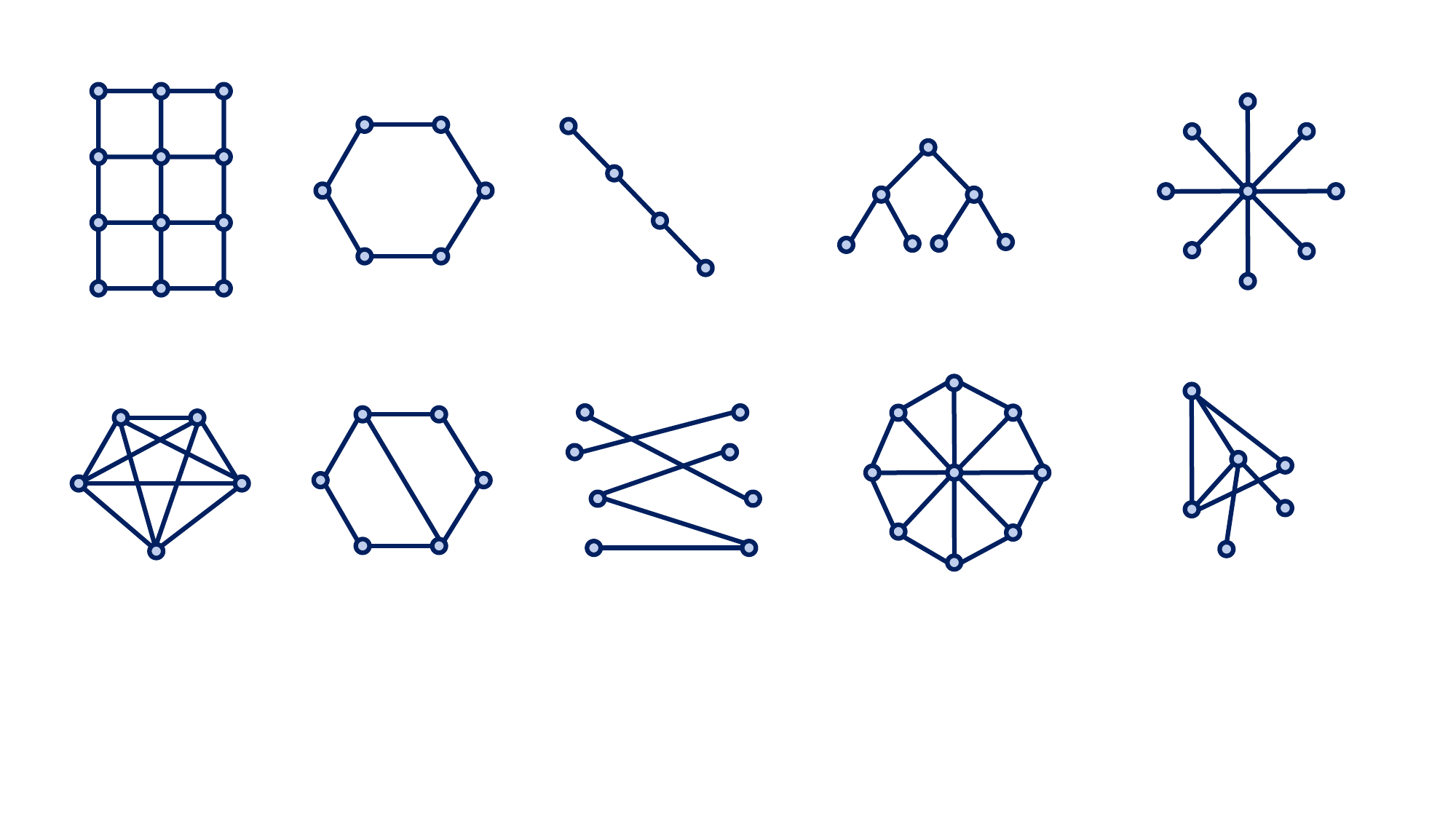}} & 
\adjustbox{valign=c}{\includegraphics[width=1cm]{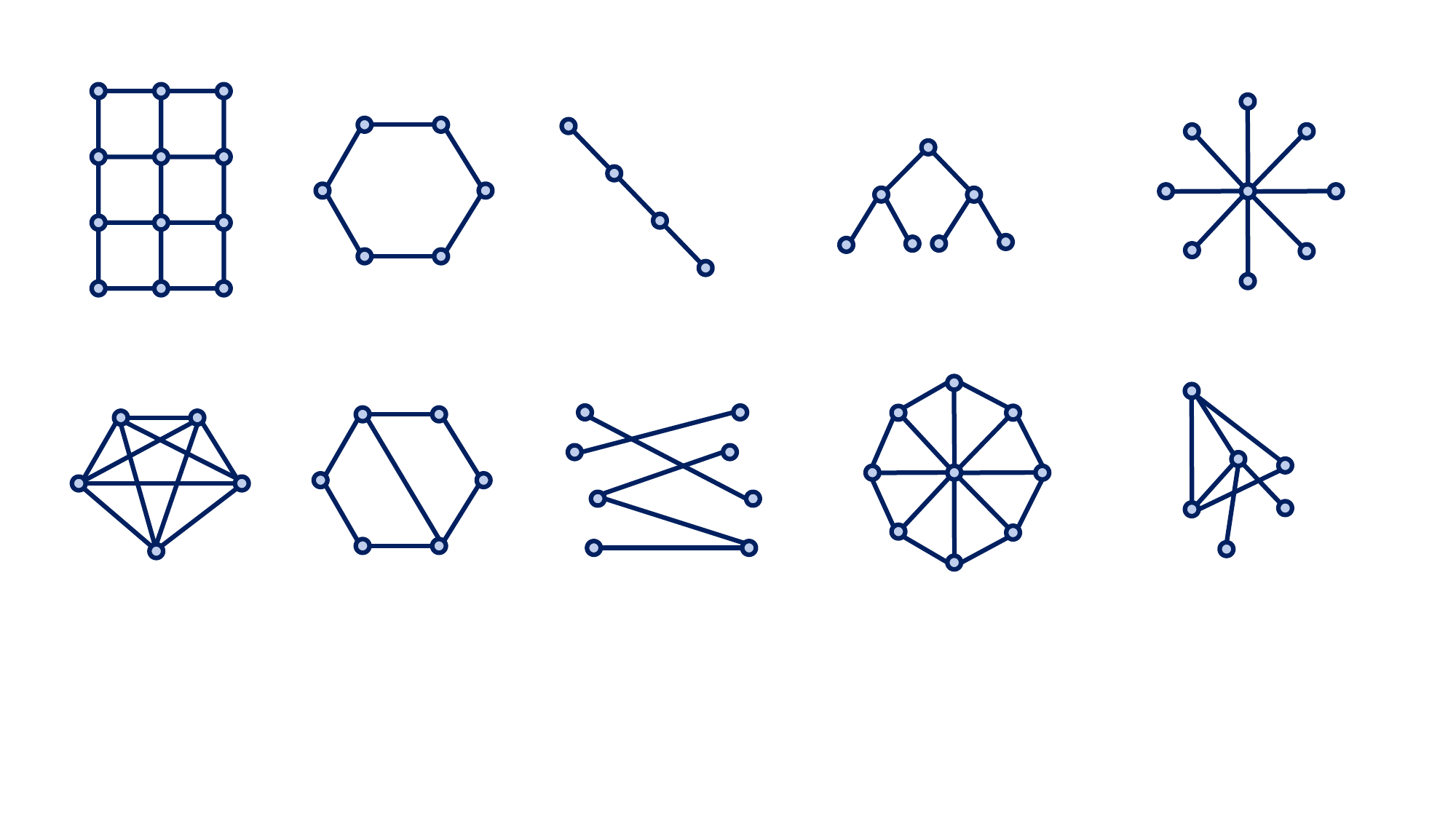}} & 
\adjustbox{valign=c}{\includegraphics[width=1cm]{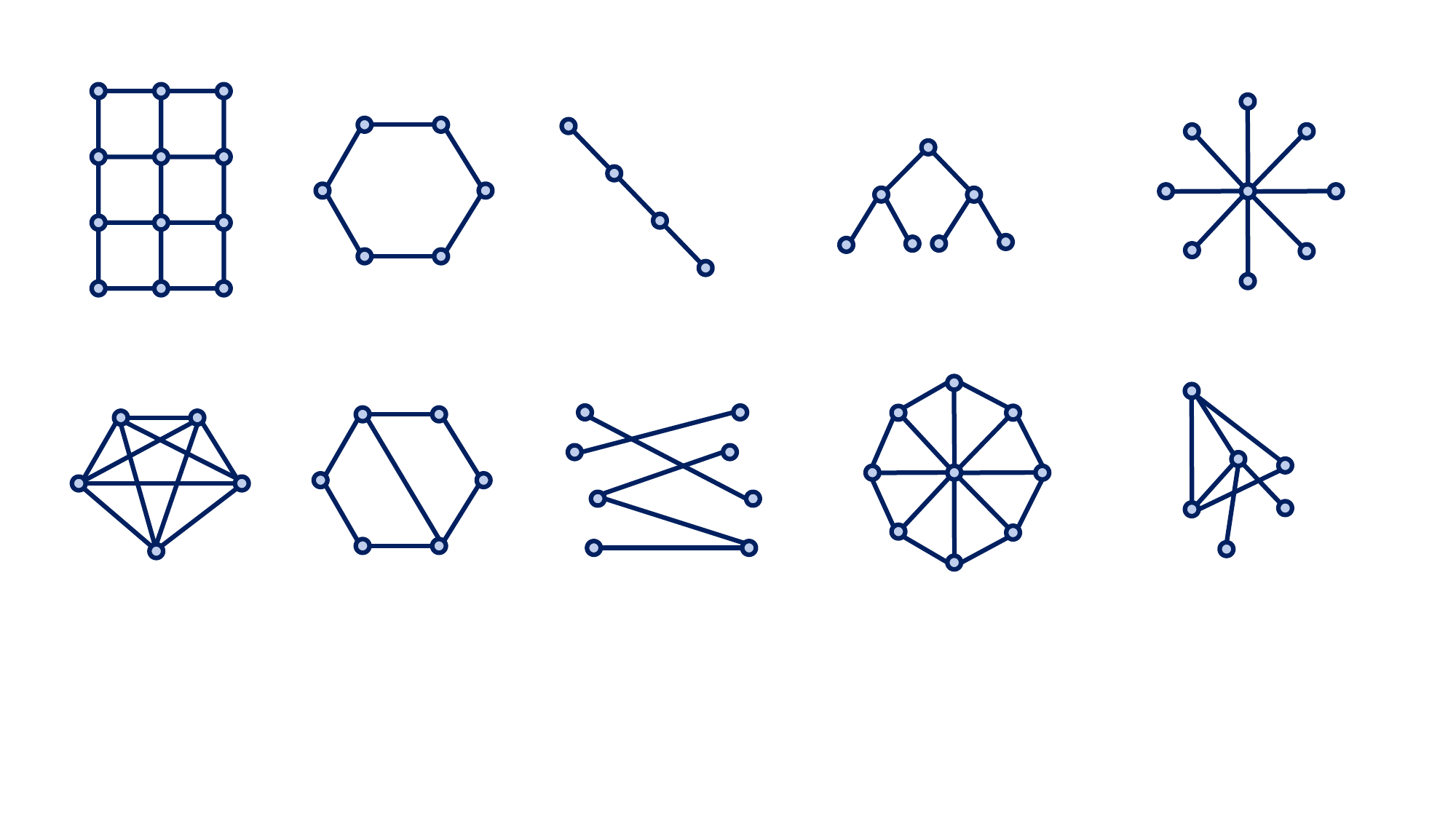}} & 
\adjustbox{valign=c}{\includegraphics[width=1cm]{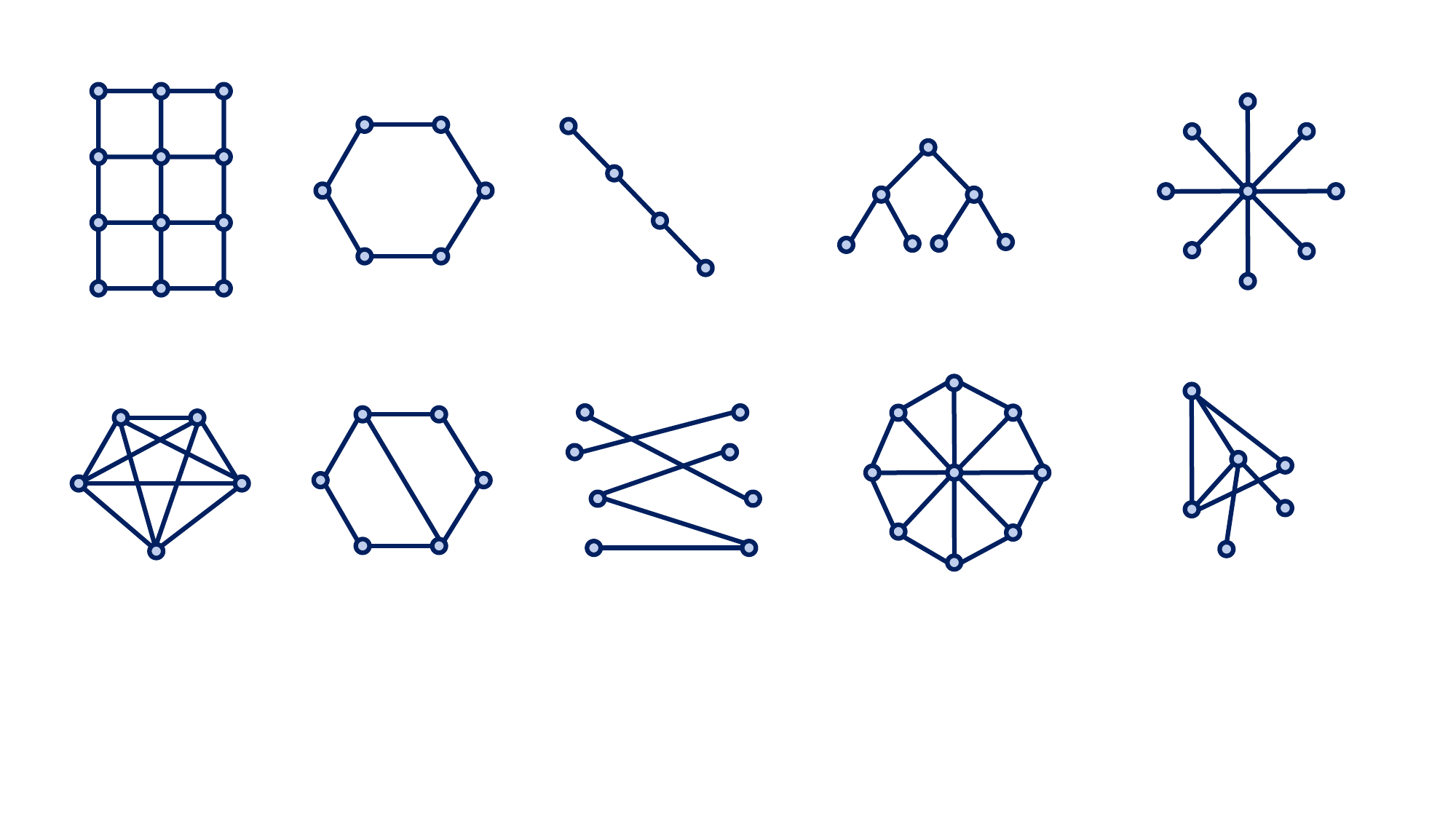}} \\ \hline
\textbf{Description} & 
\makecell[l]{Nodes are connected\\ in a grid pattern,\\ and the number of\\ nodes is adjustable.} & 
\makecell[l]{Nodes form a closed\\ loop, and the\\ number of nodes\\ is adjustable.} & 
\makecell[l]{Nodes are connected\\ in a chain without\\ branches, useful\\ for over-squashing.} & 
\makecell[l]{Nodes form a tree,\\ branching and height\\ are adjustable.} & 
\makecell[l]{Multiple nodes connect\\ to a central node,\\ and the number of\\ nodes is adjustable.} \\ \hline
\textbf{Structure Type} & Complete Graph & Chordal Cycle & Bipartite Graph & Bipartite Graph & Fixed Structure \\ \hline
\textbf{Illustration} & 
\adjustbox{valign=c}{\includegraphics[width=1cm]{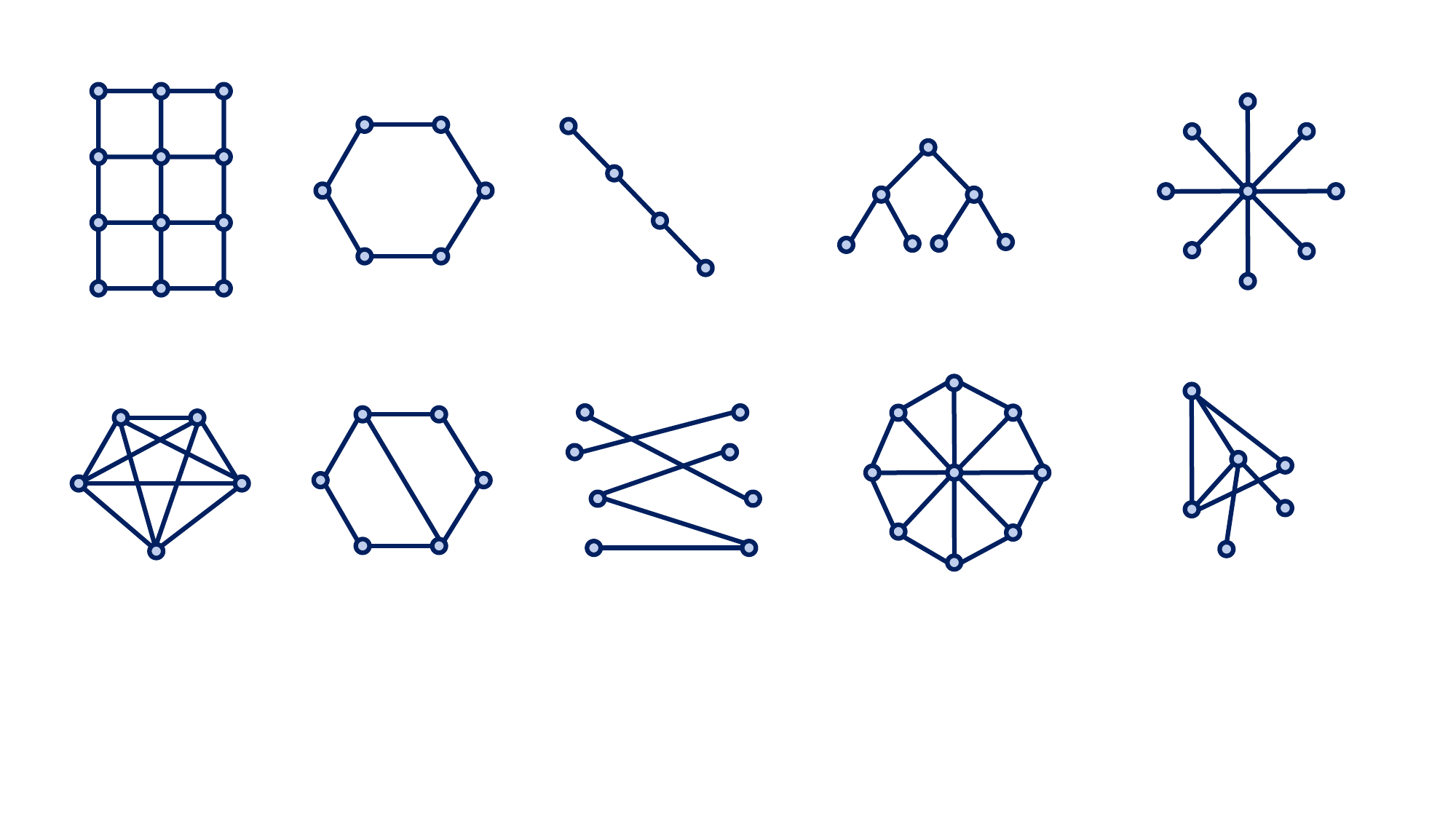}} & 
\adjustbox{valign=c}{\includegraphics[width=1cm]{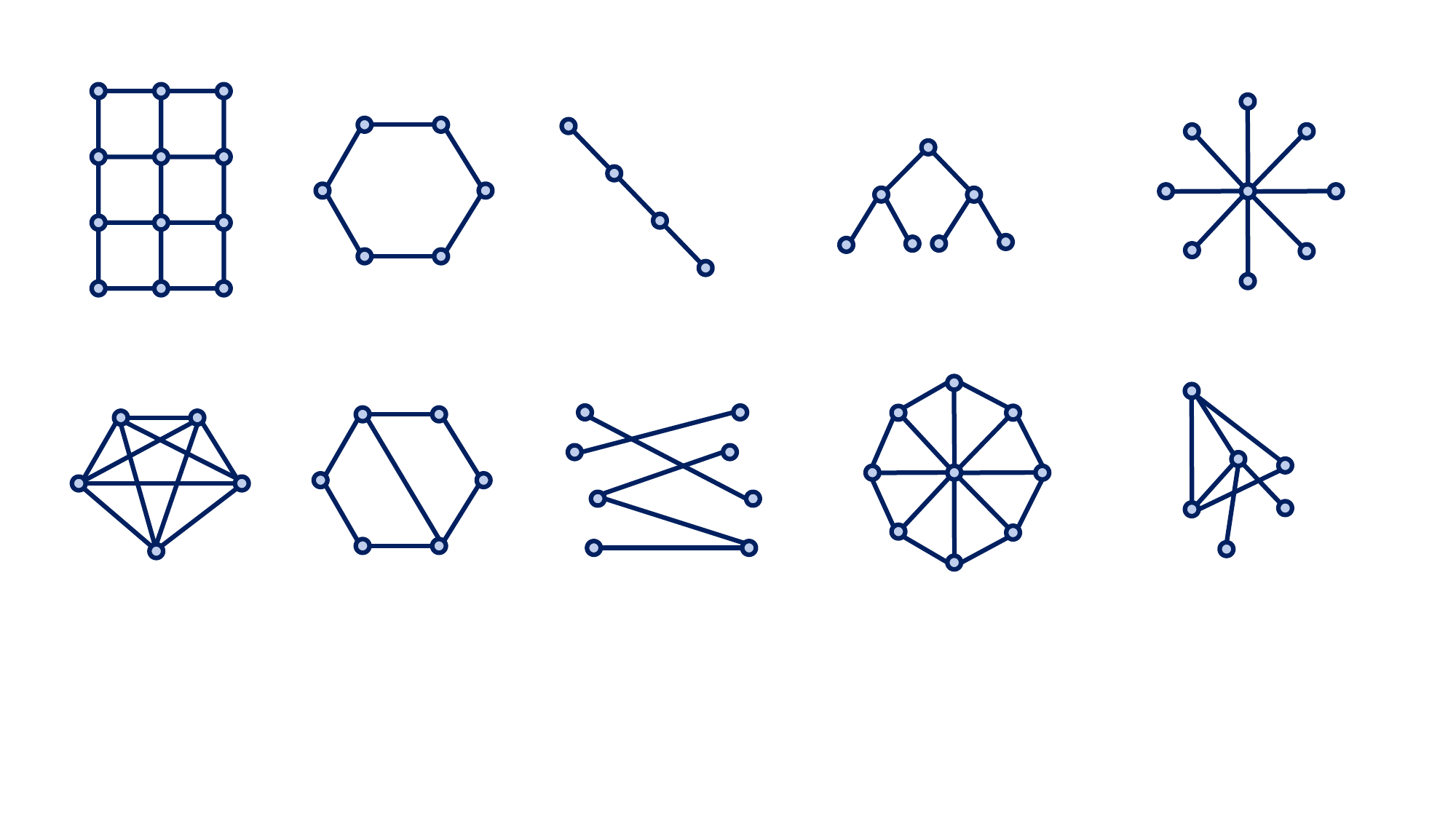}} & 
\adjustbox{valign=c}{\includegraphics[width=1cm]{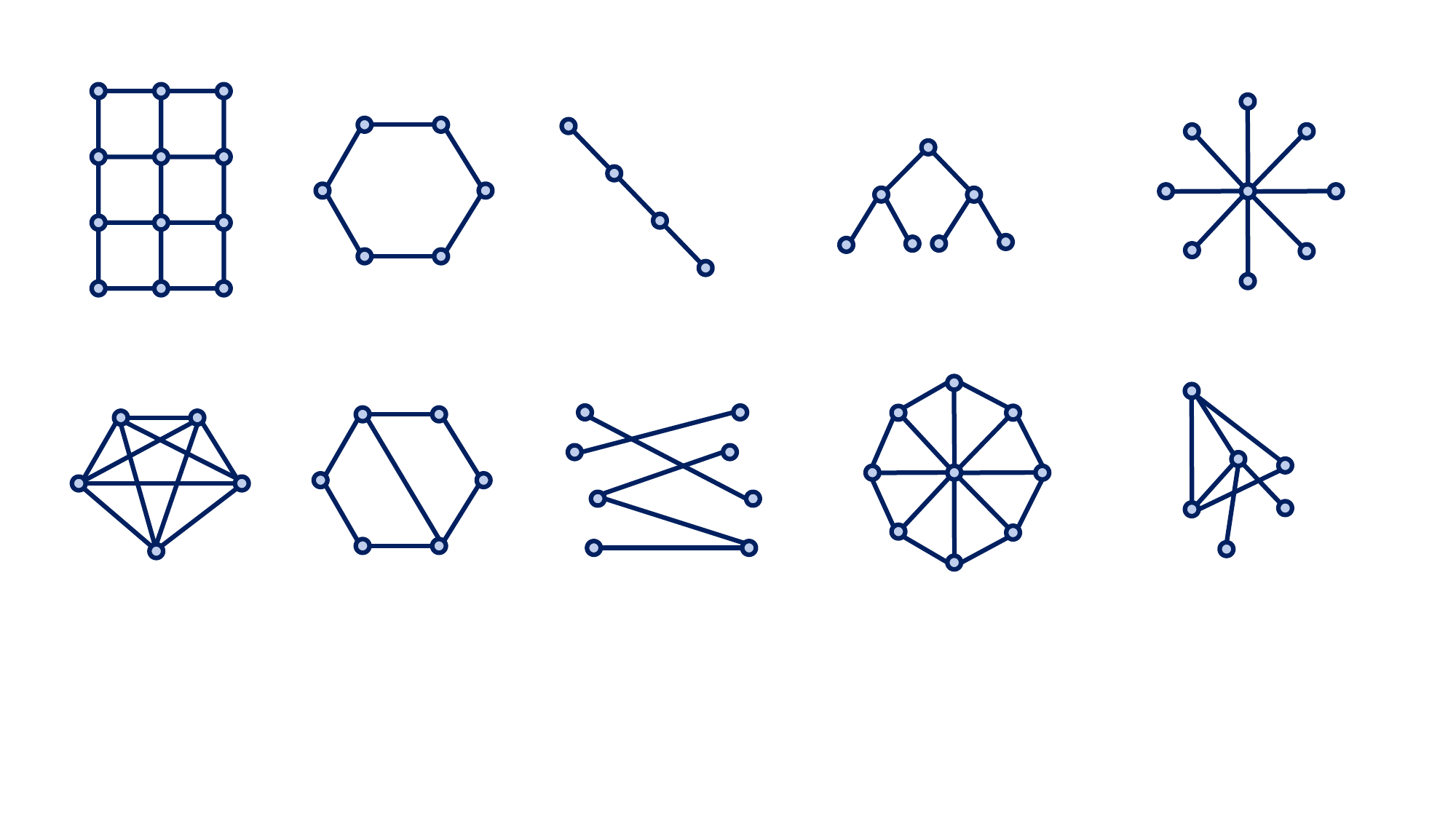}} & 
\adjustbox{valign=c}{\includegraphics[width=1cm]{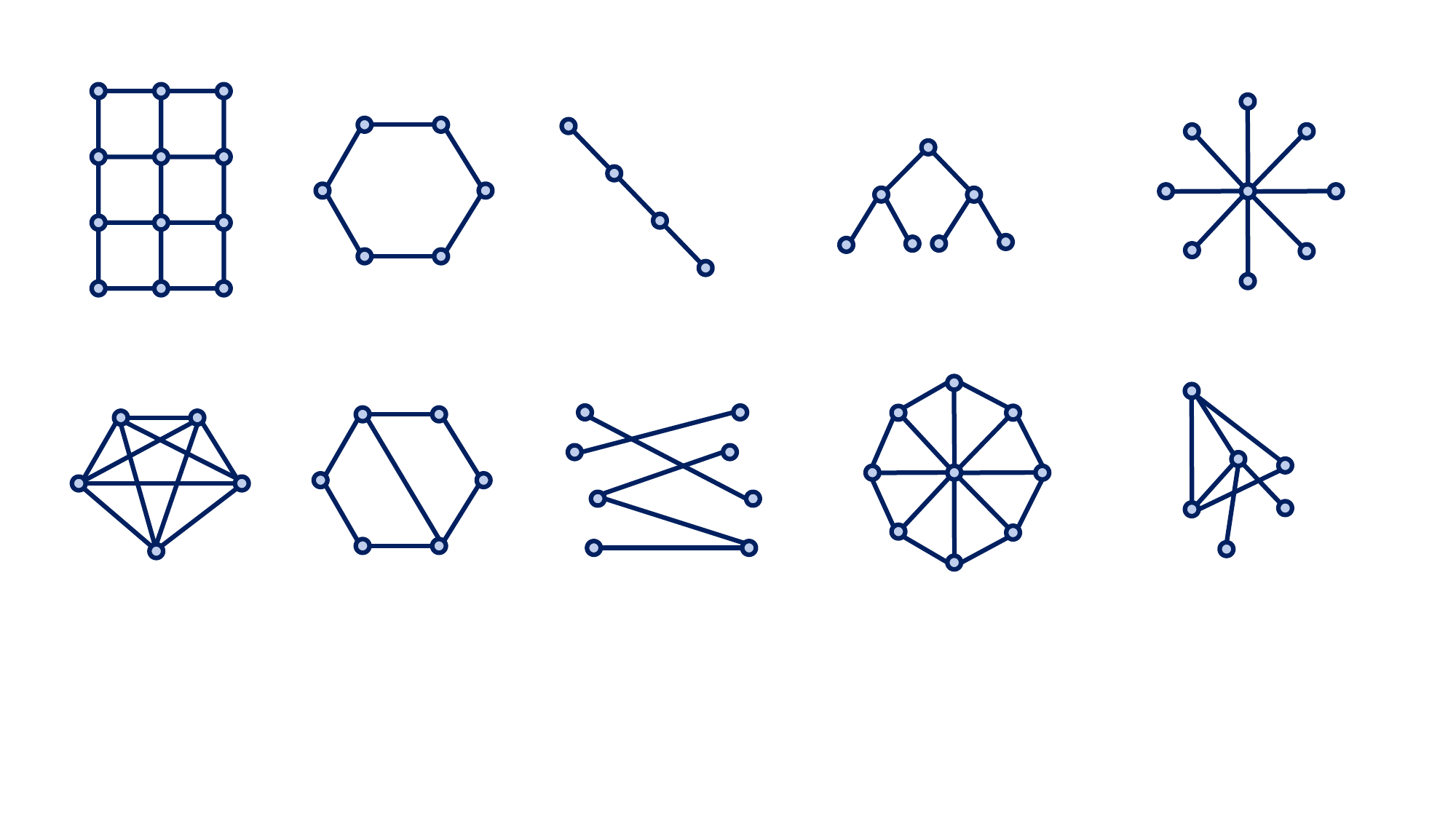}} & 
\adjustbox{valign=c}{\includegraphics[width=1cm]{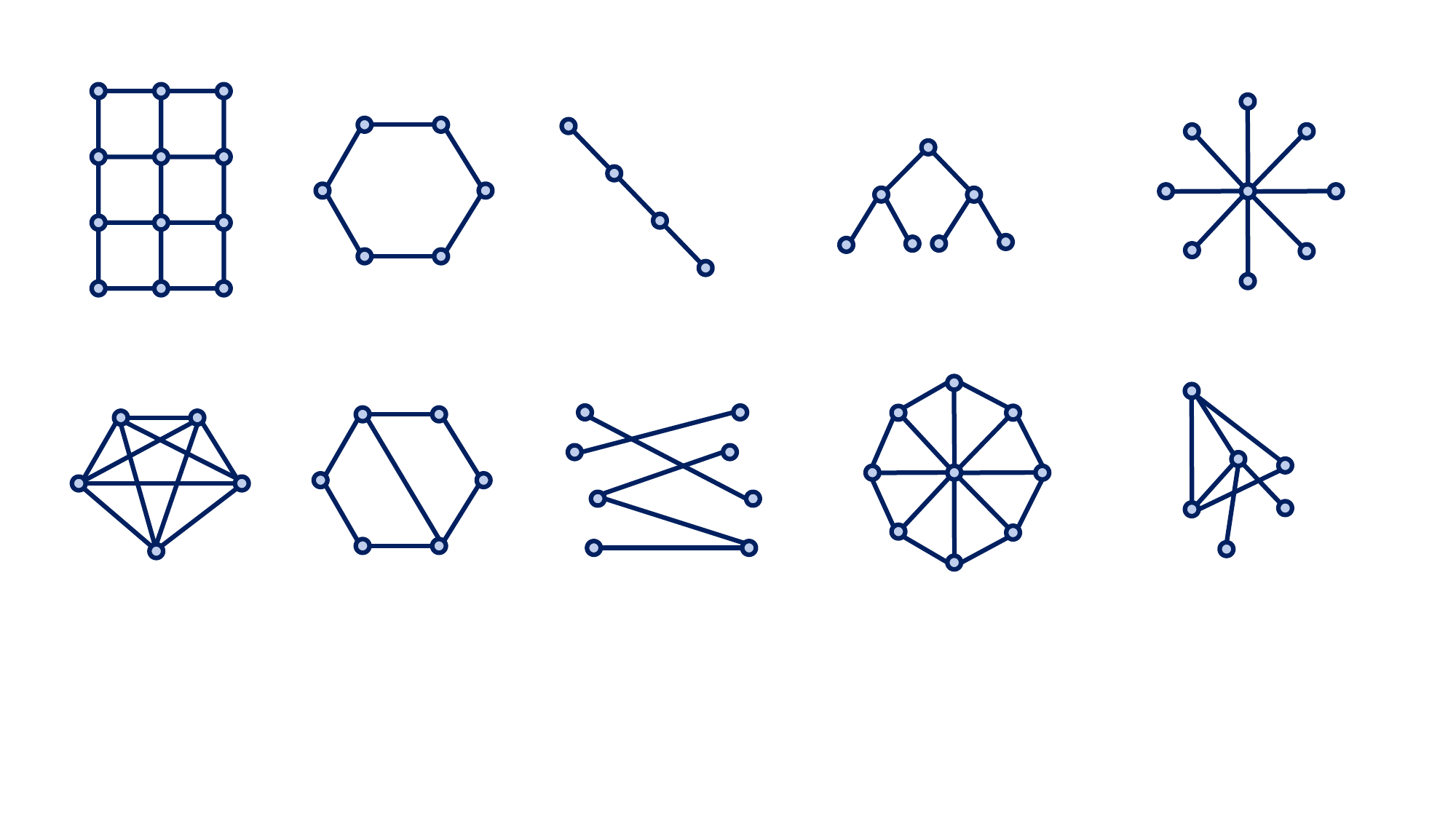}} \\ \hline
\textbf{Description} & 
\makecell[l]{Every pair of nodes\\ is connected, and\\ the number of nodes\\ is adjustable.} & 
\makecell[l]{Graph with chords\\ in the cycle, with\\ adjustable nodes\\ and chords.} & 
\makecell[l]{Vertices are divided\\ into two disjoint\\ sets, adjustable\\ nodes.} & 
\makecell[l]{Outer nodes connected\\ in a ring with a\\ central node,\\ adjustable nodes.} & 
\makecell[l]{A random graph\\ structure with fixed\\ topology.} \\ \hline
\end{tabular}
\end{table}

\begin{figure}[h]
	\centering
	\includegraphics[width=0.2\textwidth]{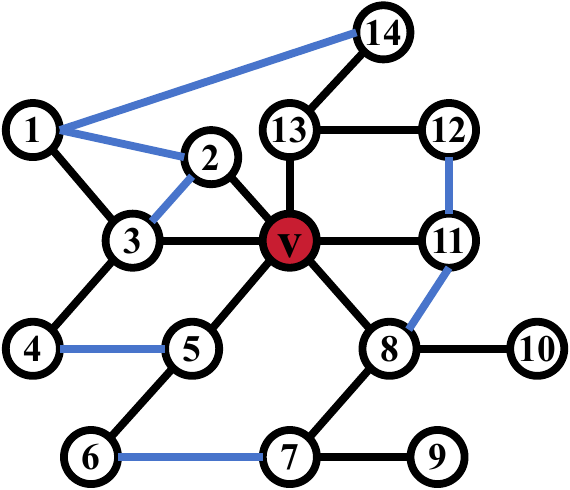} \\ 
    \vskip -0.05in
	\caption{Example Graph.}
	\label{fig:example-graph-node}
\end{figure}

\begin{figure}[h]
	\centering
    \subfigure[Node-level experiment results.]{
        \includegraphics[width=0.8\linewidth]{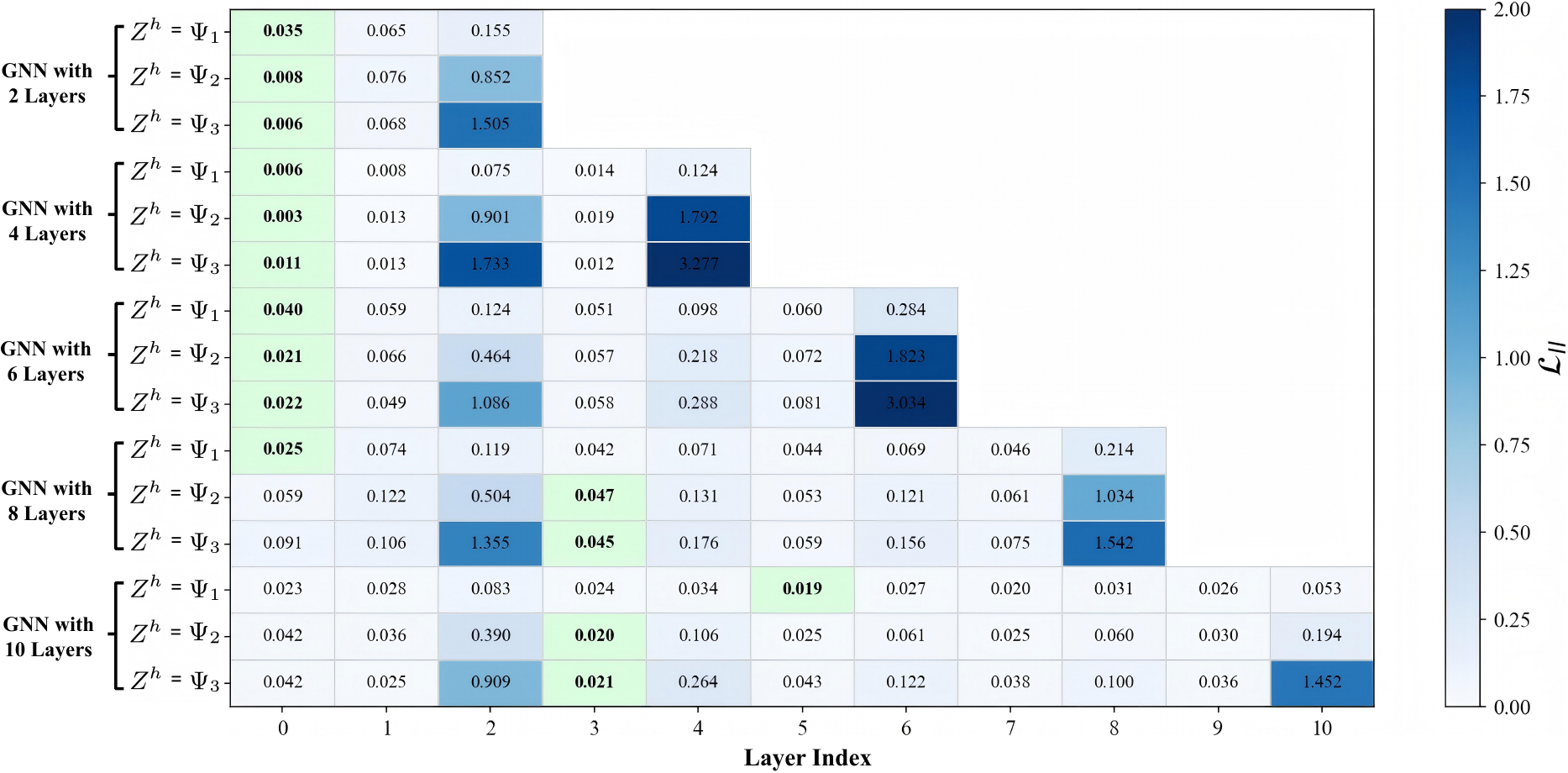}
        \label{fig:exp-Apx-1}
    }
    \subfigure[Graph-level experiment results.]{
        \includegraphics[width=0.8\linewidth]{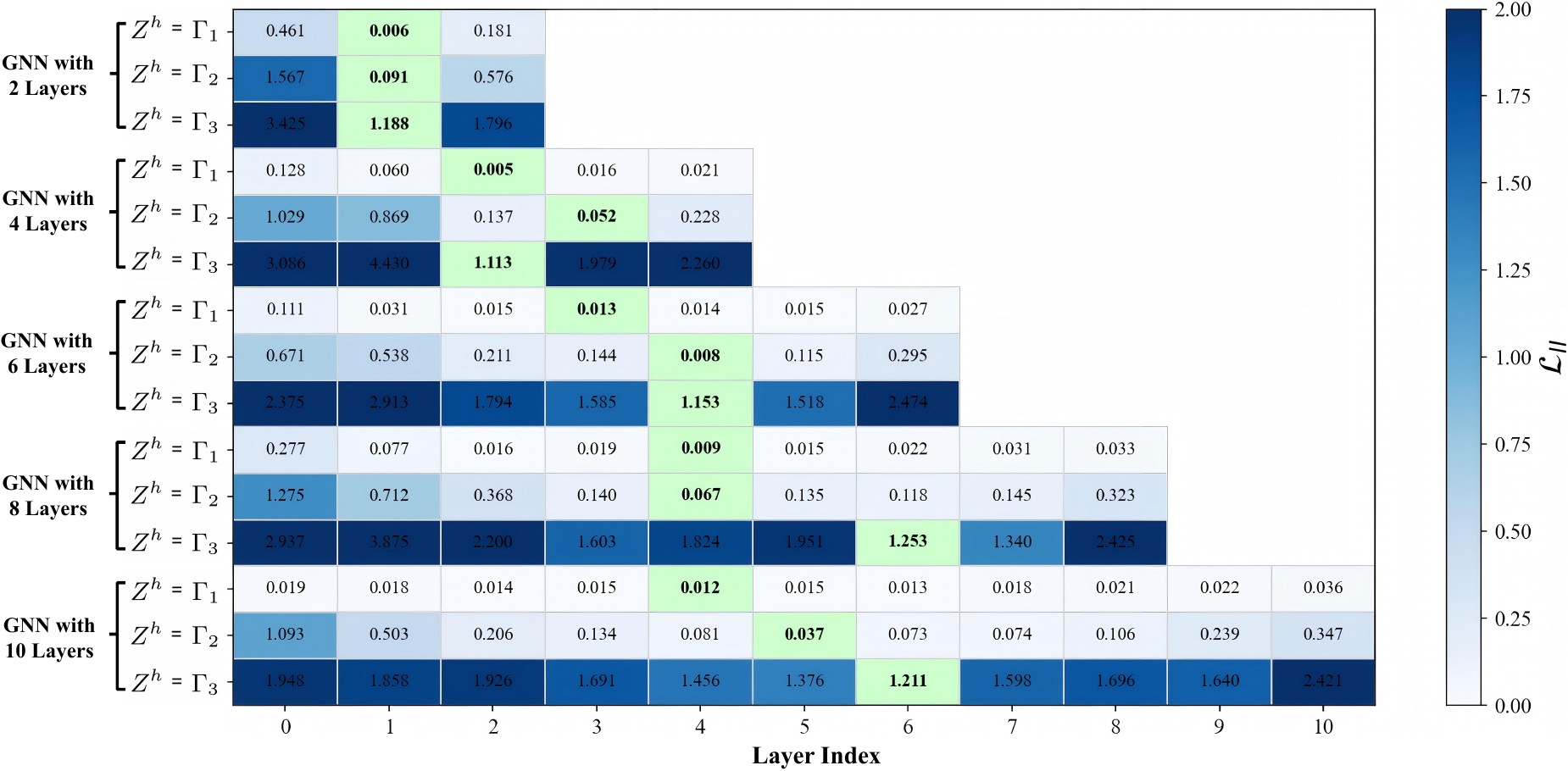}
        \label{fig:exp-Apx-2}
    }
    \caption{Alignment results of graph-level and node-level experiments with varying GNN layers.} 
	\label{fig:exp-Apx1-2}
\end{figure}

\begin{figure}[h]
	\centering
    \subfigure[Results of GNN with 2 layers.]{
        \includegraphics[width=0.41\linewidth]{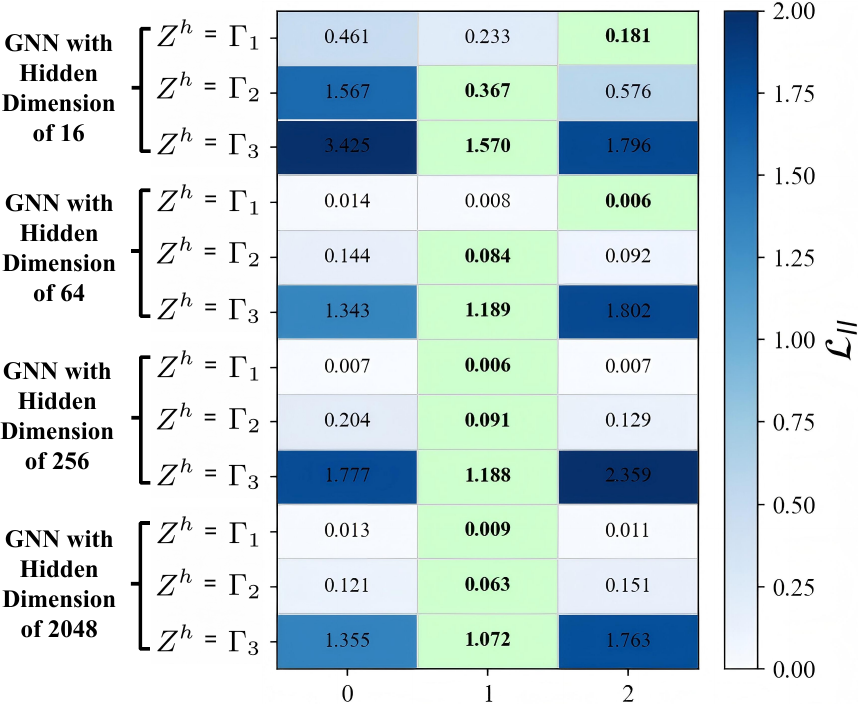}
        \label{fig:exp-Apx-3}
    }
    \subfigure[Results of GNN with 4 layers.]{
        \includegraphics[width=0.53\linewidth]{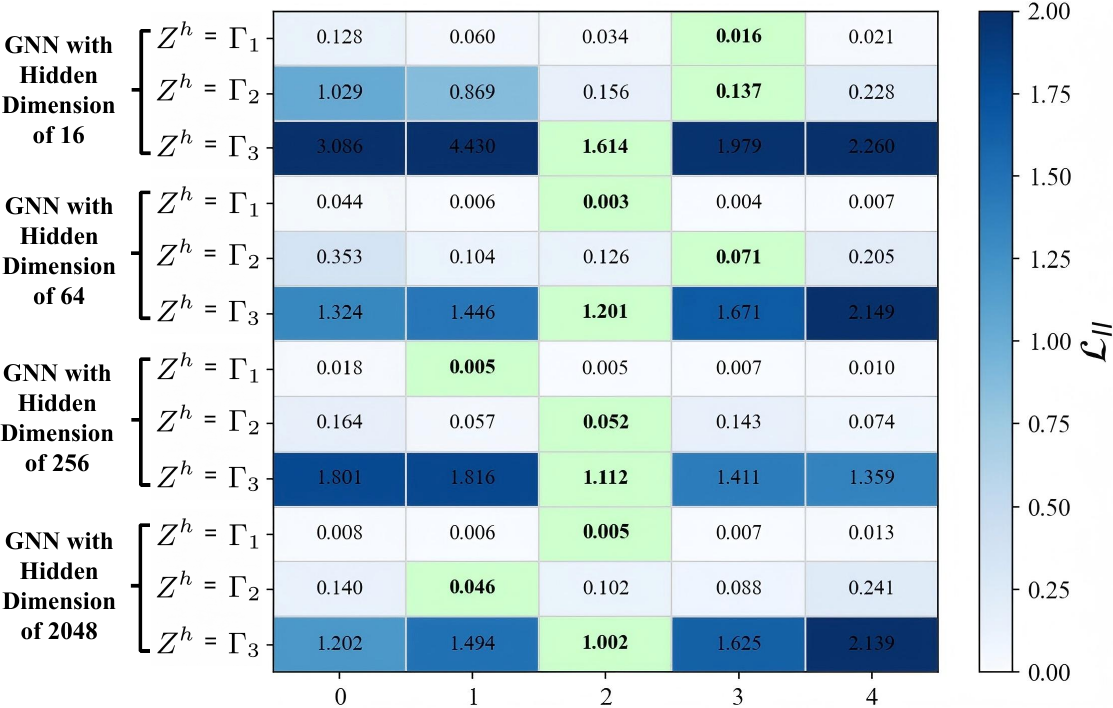}
        \label{fig:exp-Apx-4}
    }
    \subfigure[Results of GNN with 6 layers.]{
        \includegraphics[width=0.7\linewidth]{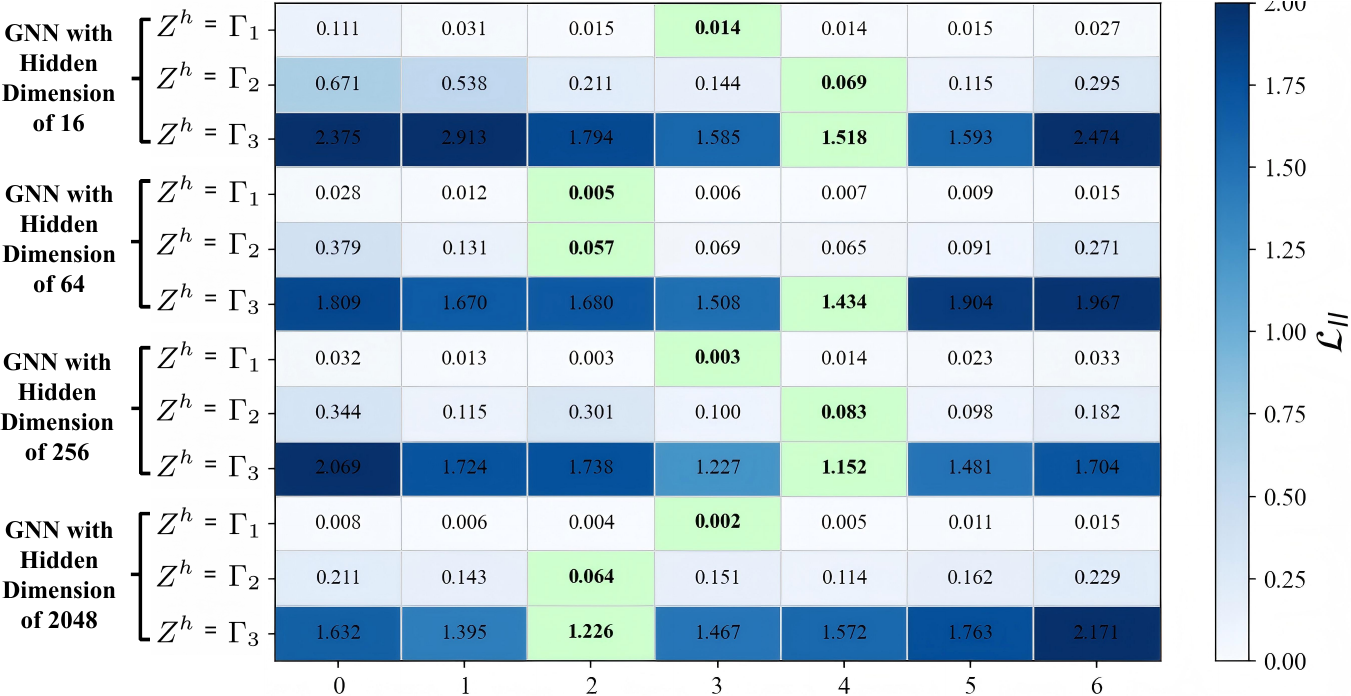}
        \label{fig:exp-Apx-5}
    }
    \subfigure[Results of GNN with 8 layers.]{
        \includegraphics[width=0.85\linewidth]{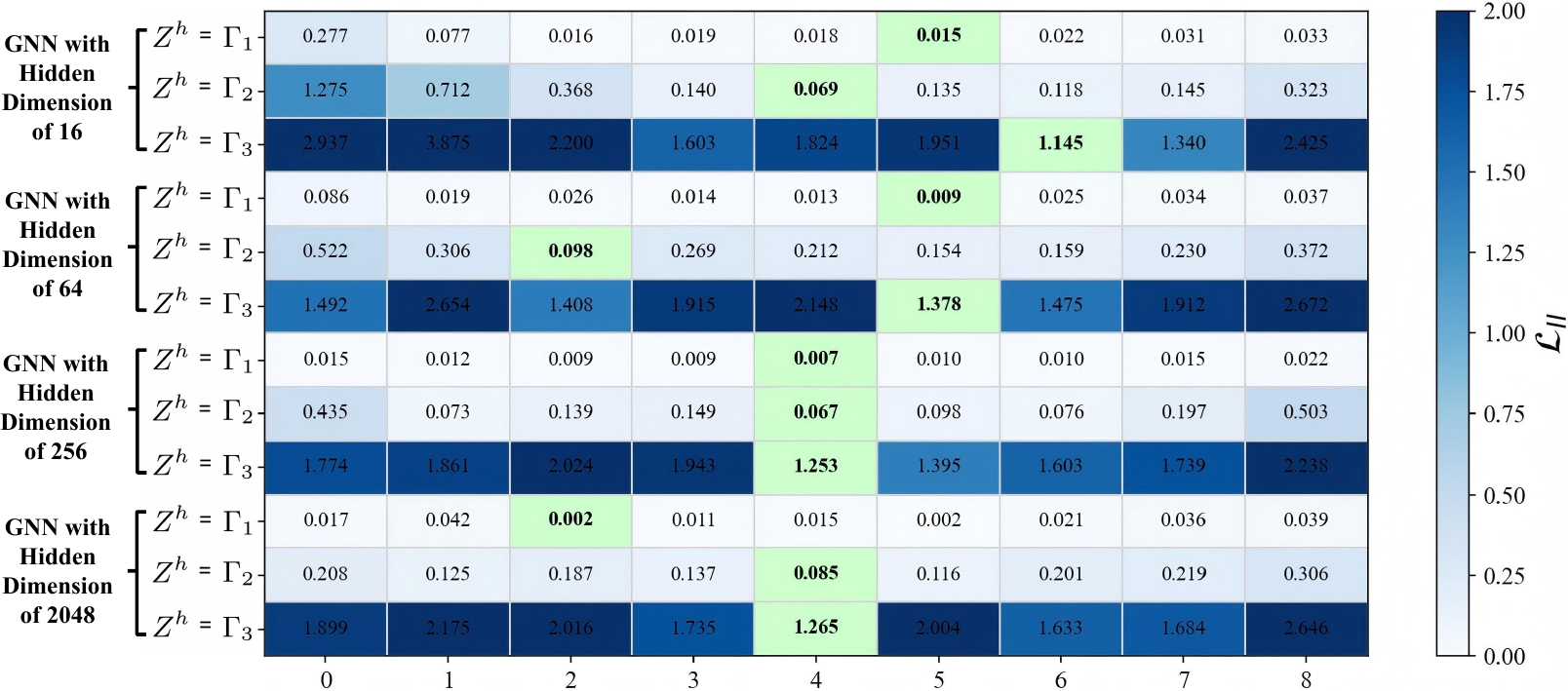}
        \label{fig:exp-Apx-6}
    }
    \caption{Alignment results of graph-level experiments with varying GNN layers and hidden dimensions.} 
	\label{fig:exp-Apx3-6}
\end{figure}

\begin{figure*}[h]
	\centering

    % 子图
    \subfigure[Graph-level task alignment results with the number of GNN layers set to 2 and varying hidden dimensions.]{
        \includegraphics[width=0.20\linewidth]{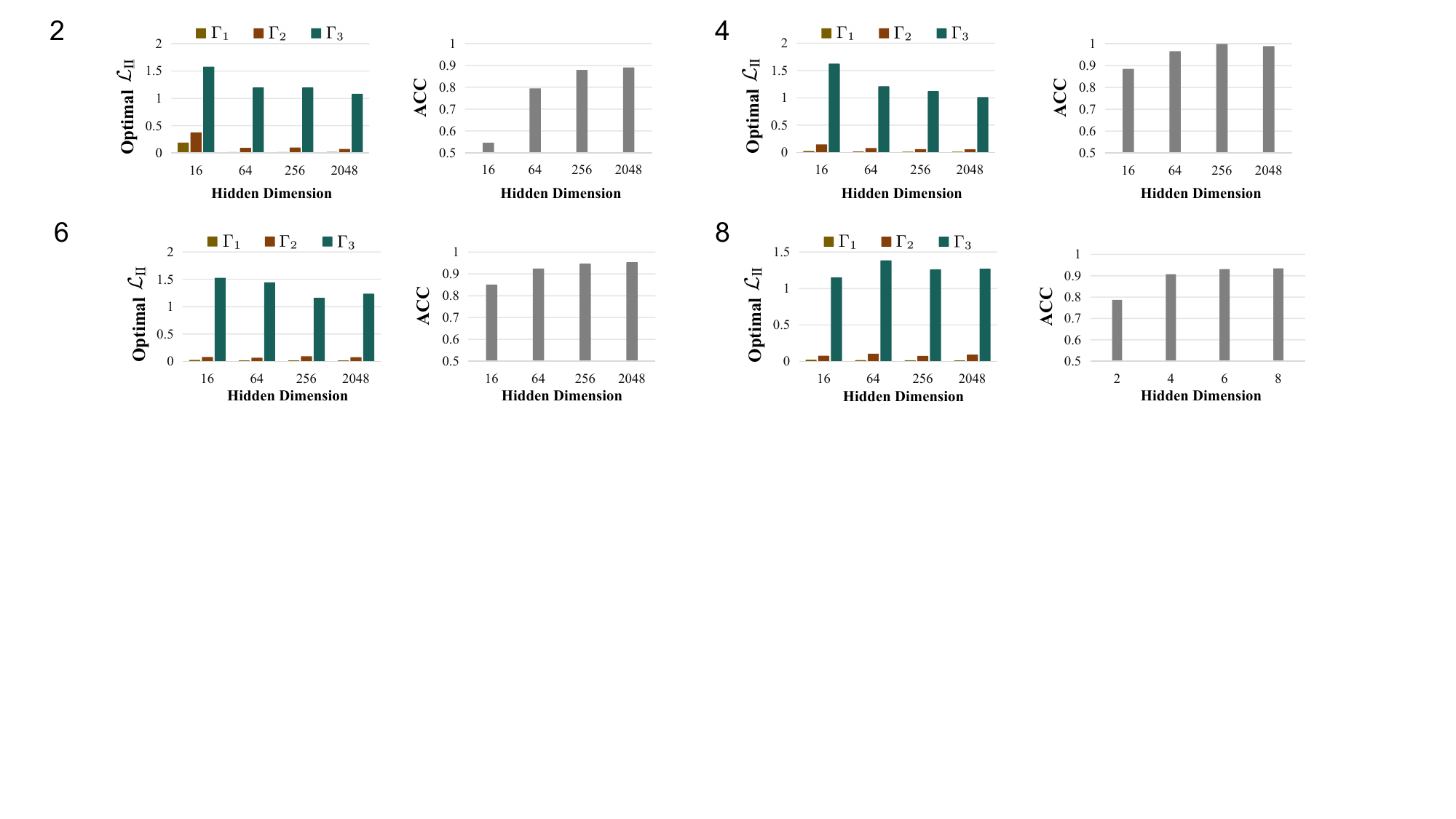}
        \label{fig:chart-apx-1} % 标签
    }
    \hspace{0.05mm}
    % 子图
    \subfigure[Graph-level task model accuracy with the number of GNN layers set to 2 and varying hidden dimensions.]{
        \includegraphics[width=0.20\linewidth]{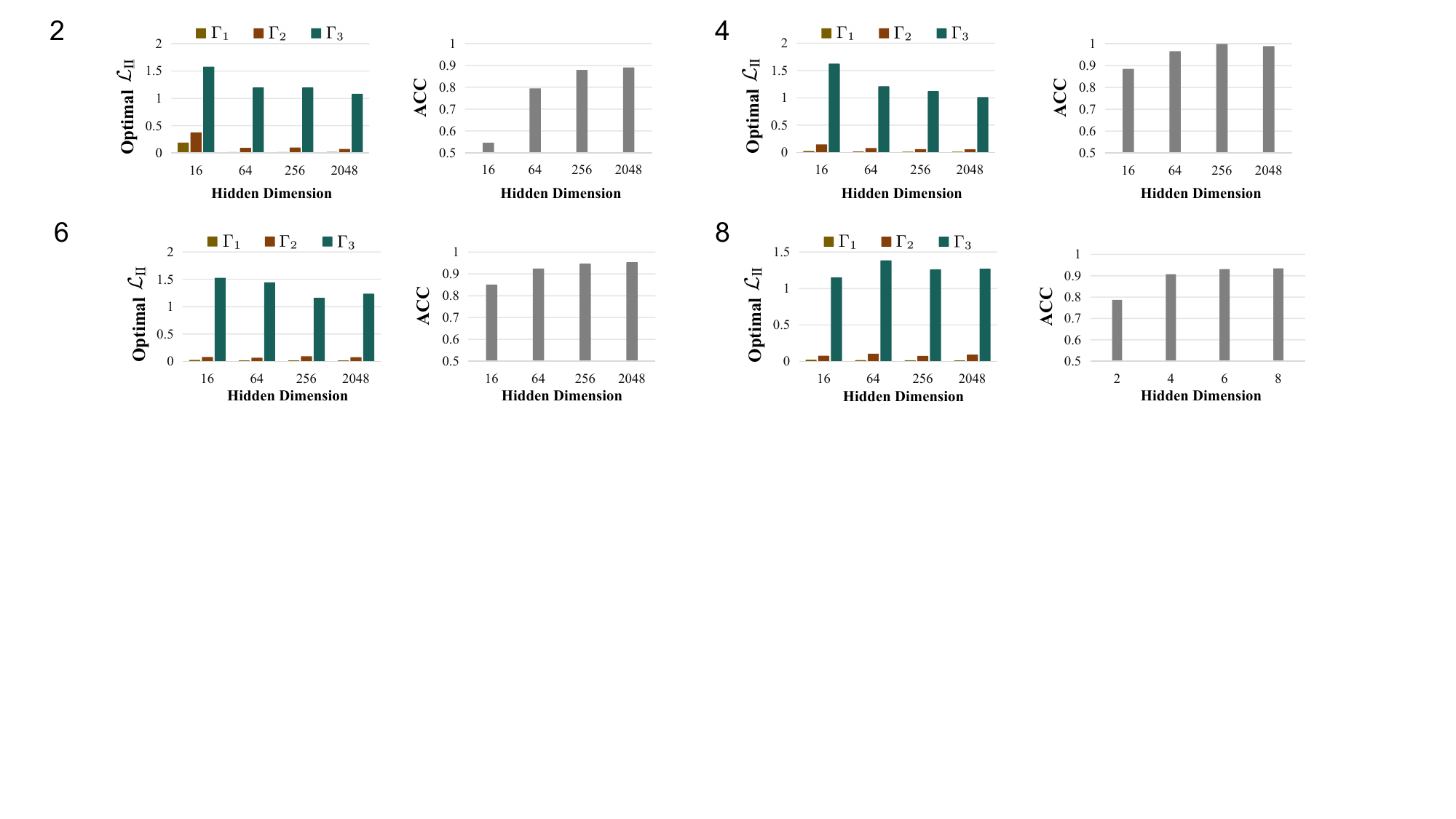}
        \label{fig:chart-apx-2} % 标签
    }
    \hspace{0.05mm}
    % 子图
    \subfigure[Graph-level task alignment results with the number of GNN layers set to 4 and varying hidden dimensions.]{
        \includegraphics[width=0.20\linewidth]{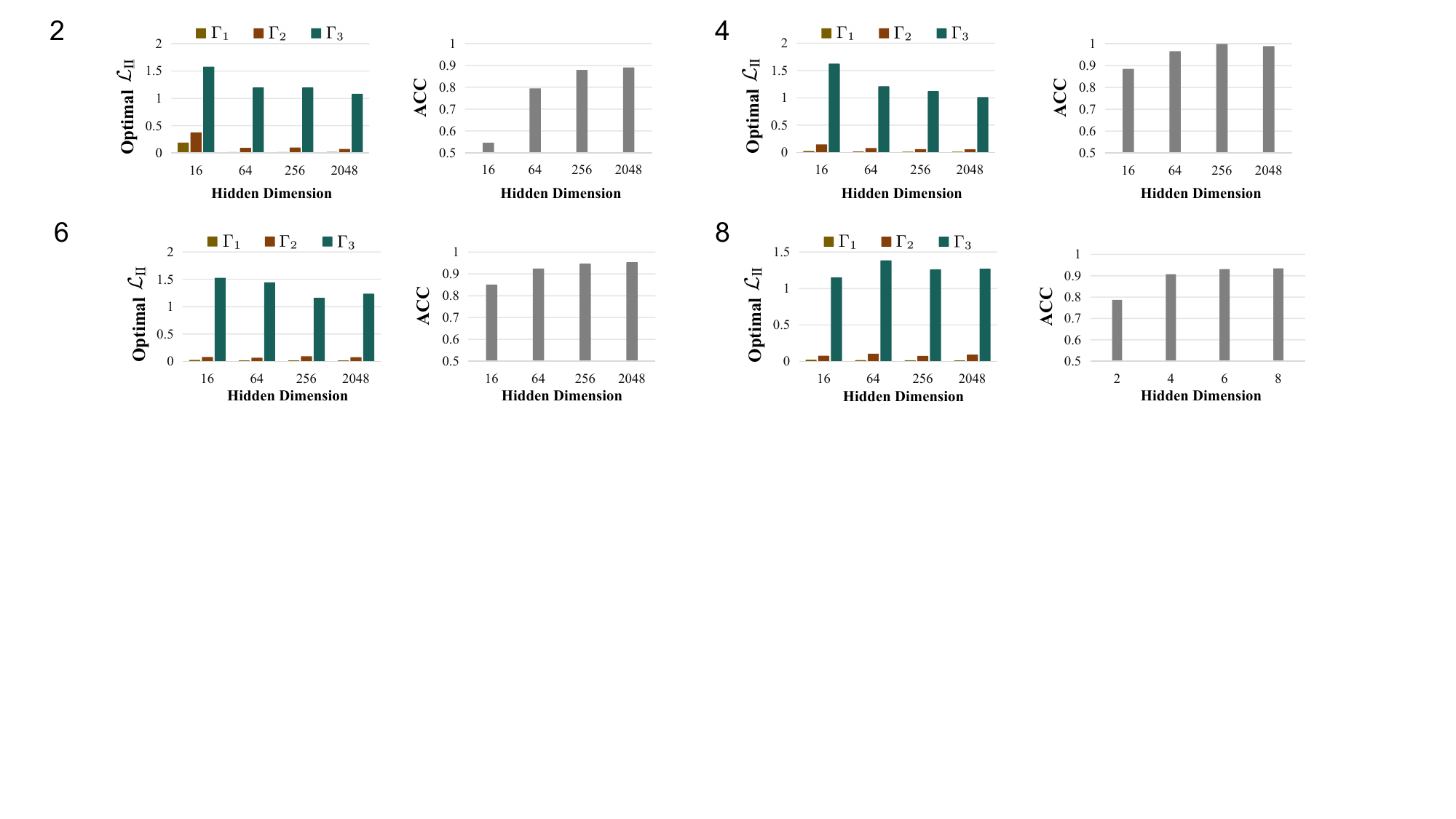}
        \label{fig:chart-apx-3} % 标签
    }
    \hspace{0.05mm}
    % 子图
    \subfigure[Graph-level task model accuracy with the number of GNN layers set to 4 and varying hidden dimensions.]{
        \includegraphics[width=0.20\linewidth]{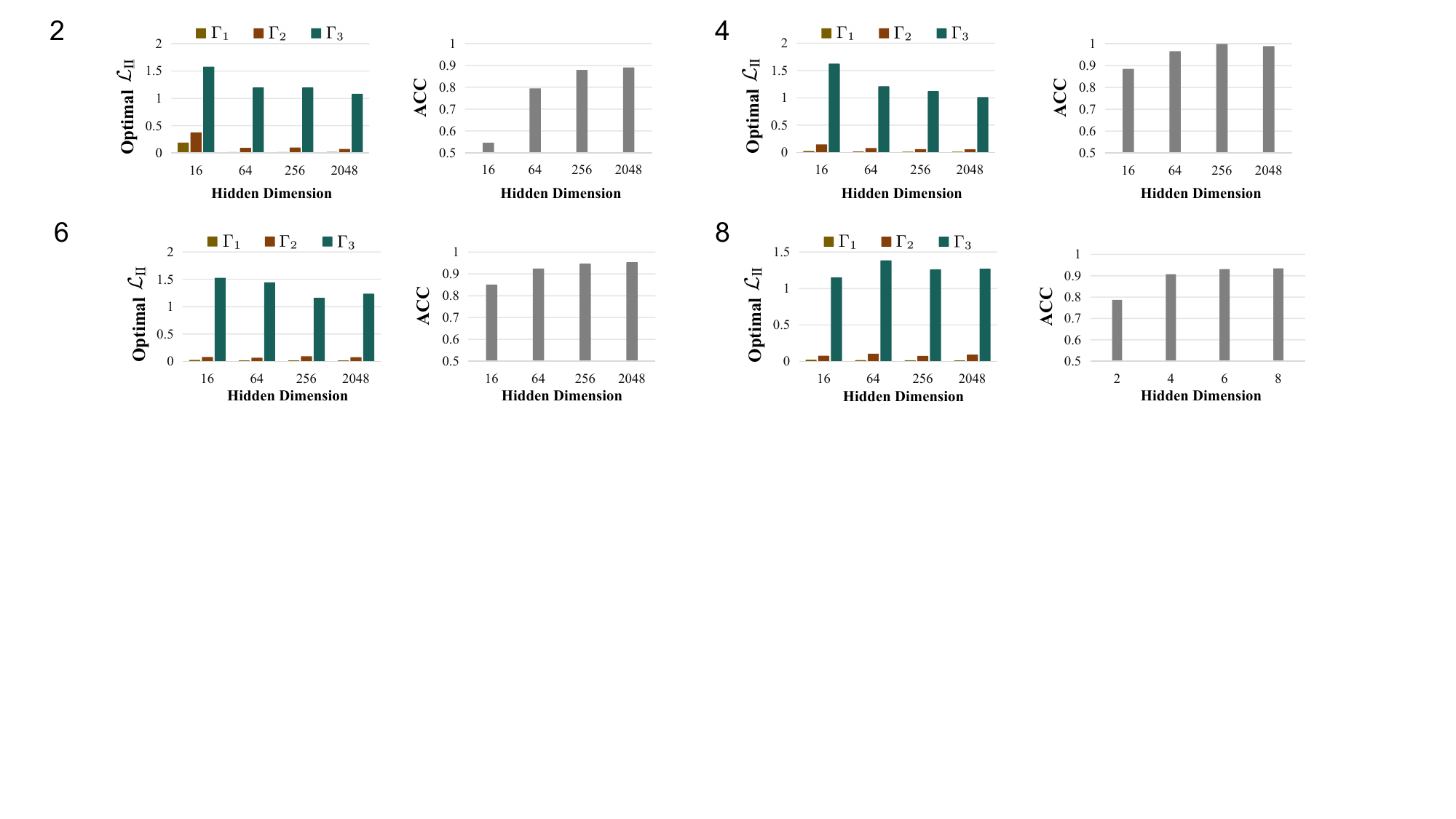}
        \label{fig:chart-apx-4} % 标签
    }

    % 子图
    \subfigure[Graph-level task alignment results with the number of GNN layers set to 6 and varying hidden dimensions.]{
        \includegraphics[width=0.20\linewidth]{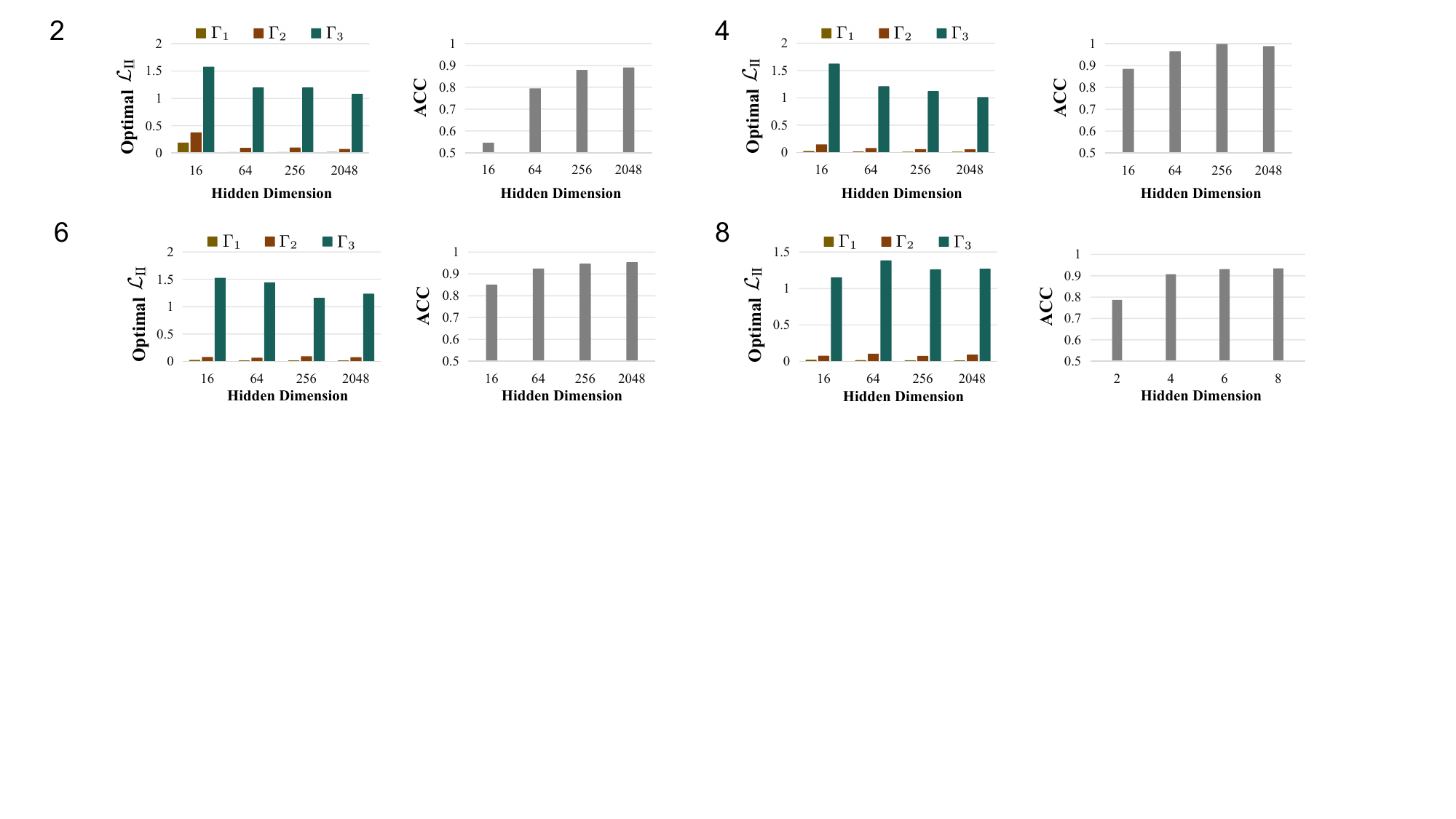}
        \label{fig:chart-apx-5} % 标签
    }
    \hspace{0.05mm}
    % 子图
    \subfigure[Graph-level task model accuracy with the number of GNN layers set to 6 and varying hidden dimensions.]{
        \includegraphics[width=0.20\linewidth]{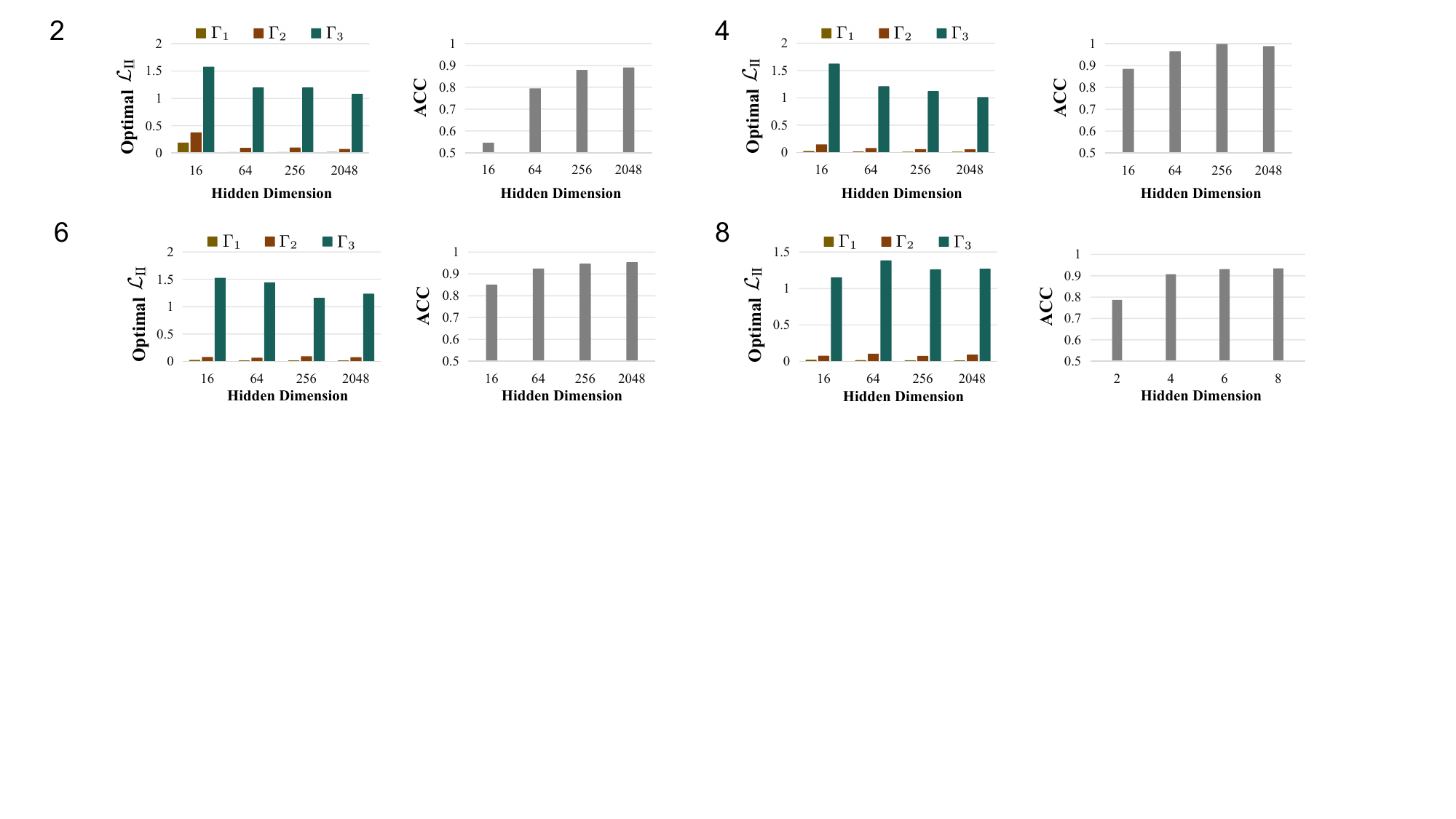}
        \label{fig:chart-apx-6} % 标签
    }
    \hspace{0.05mm}
    % 子图
    \subfigure[Graph-level task alignment results with the number of GNN layers set to 8 and varying hidden dimensions.]{
        \includegraphics[width=0.20\linewidth]{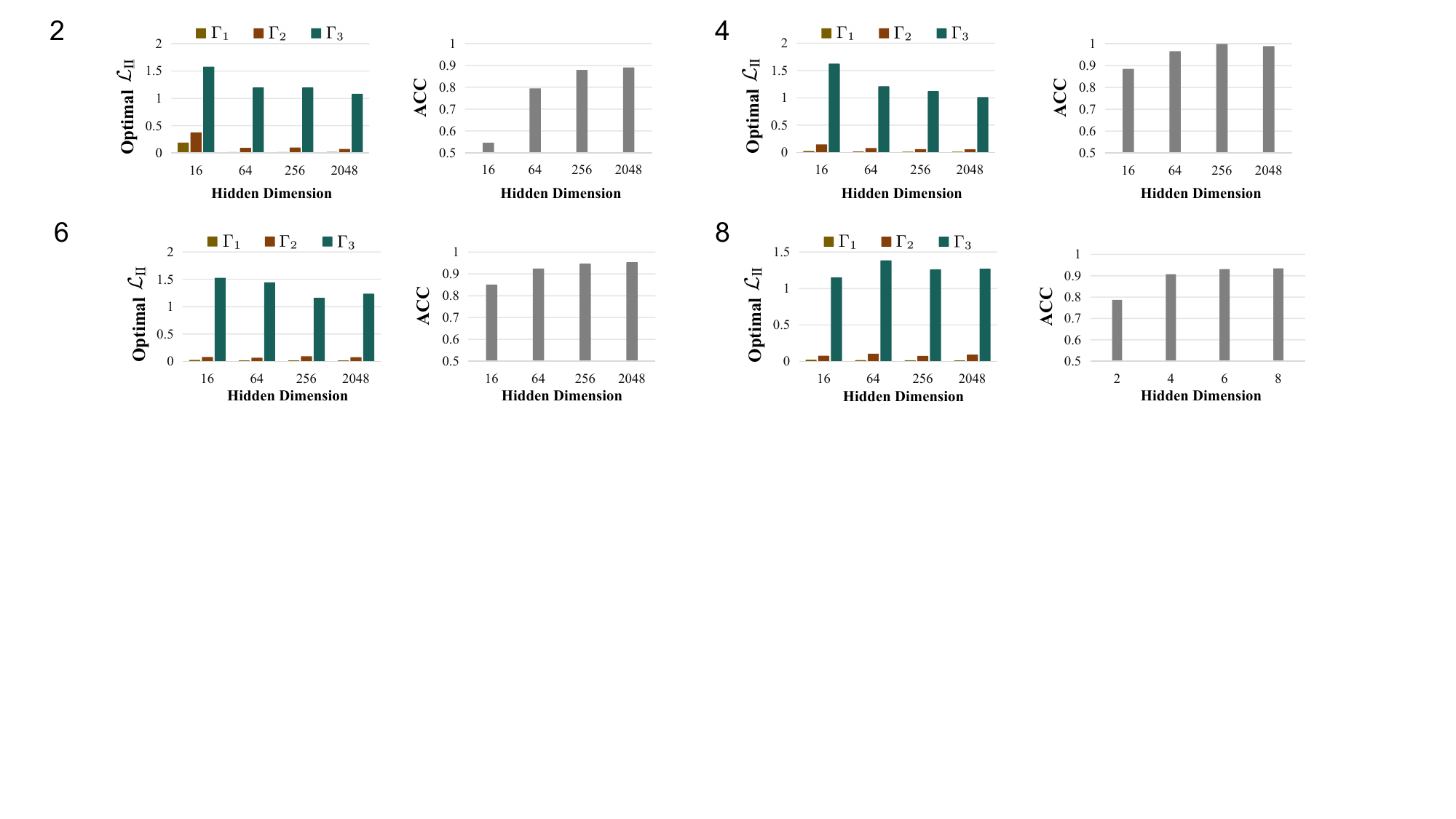}
        \label{fig:chart-apx-7} % 标签
    }
    \hspace{0.05mm}
    % 子图
    \subfigure[Graph-level task model accuracy with the number of GNN layers set to 8 and varying hidden dimensions.]{
        \includegraphics[width=0.20\linewidth]{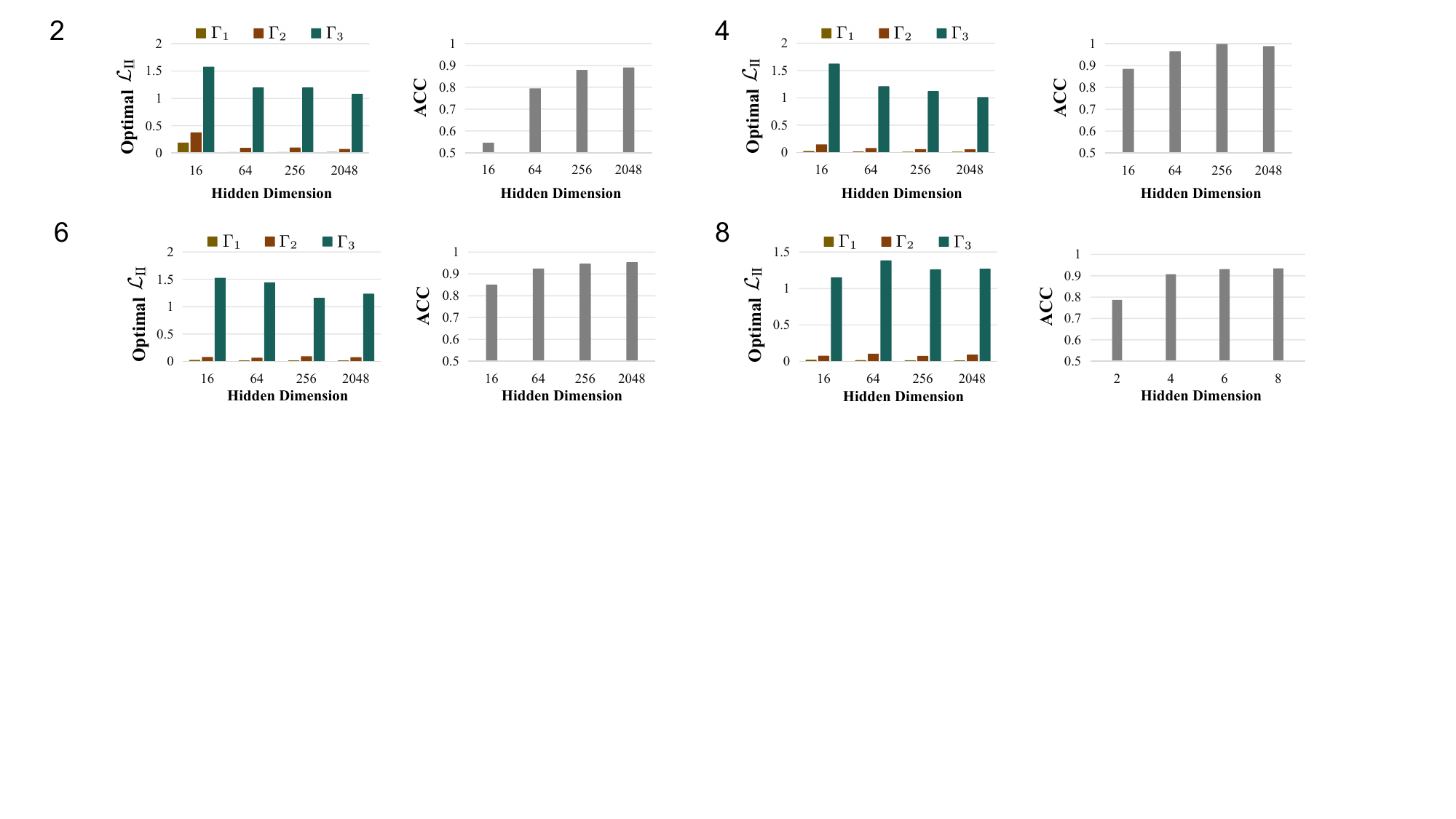}
        \label{fig:chart-apx-8} % 标签
    }

    \caption{Experimental results under different number of GNN layers and hidden dimensions.}
      
	\label{fig:chart-dim}
\end{figure*}

\begin{figure}[h]
	\centering
    \subfigure[The results of node-level experiments, where $Z^h$ corresponds to the output variables $\Psi_1$, $\Psi_2$ and $\Psi_3$ of $h^{\text{node},1}(\cdot)$, using OGBN-Arxiv data as node features, are presented. Bold values indicate the minimum $\mathcal{L}_\Pi$.]{
        \includegraphics[width=0.4\linewidth]{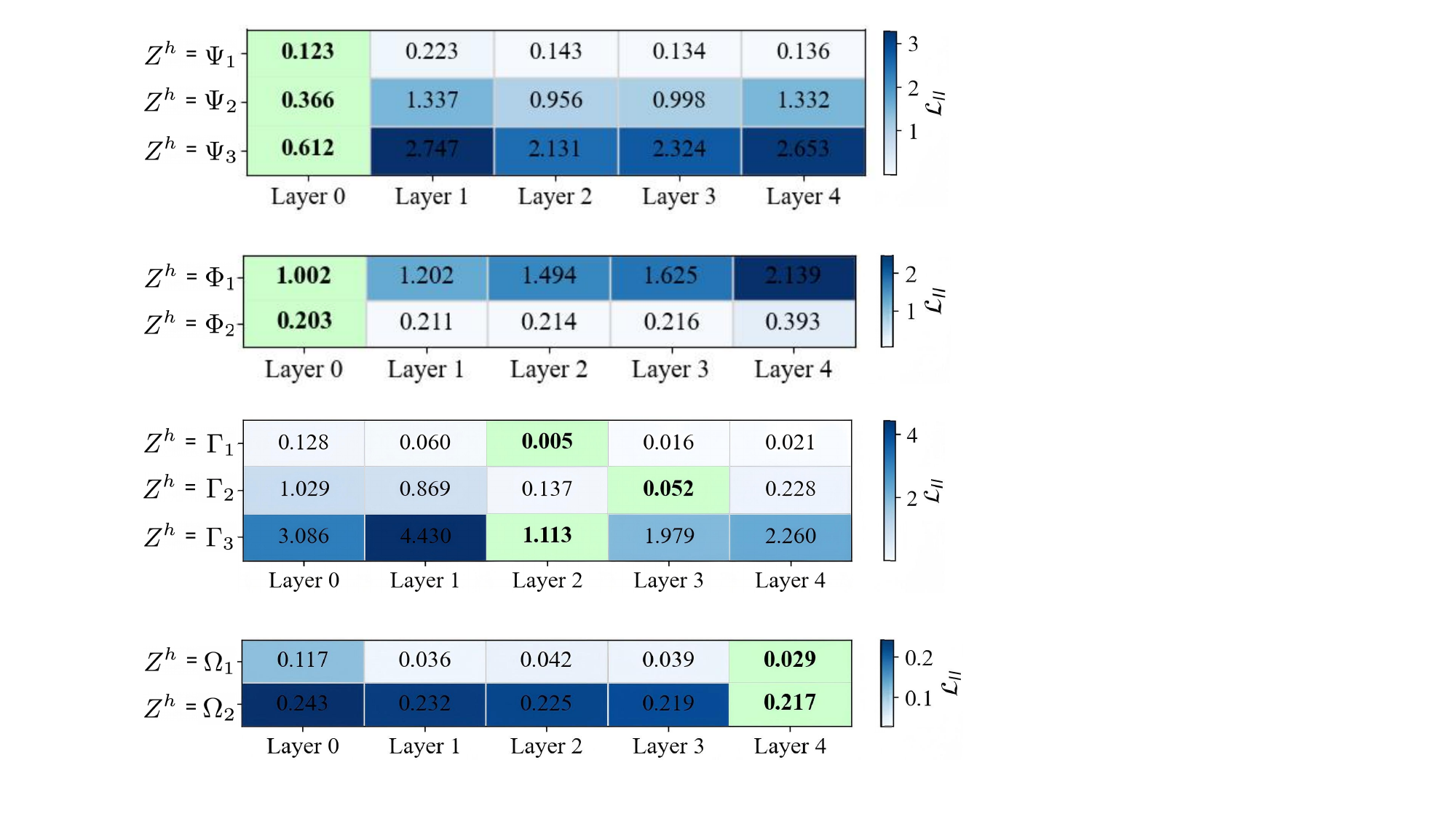}
        \label{fig:exp-Apx-ex1}
    }
    \hspace{1mm}
    \subfigure[The results of node-level experiments, where $Z^h$ corresponds to the output variables $\Phi_1$ and $\Phi_2$ of $h^{\text{node},2}(\cdot)$, using random data as node features, are presented. Bold values indicate the minimum $\mathcal{L}_\Pi$.]{
        \includegraphics[width=0.4\linewidth]{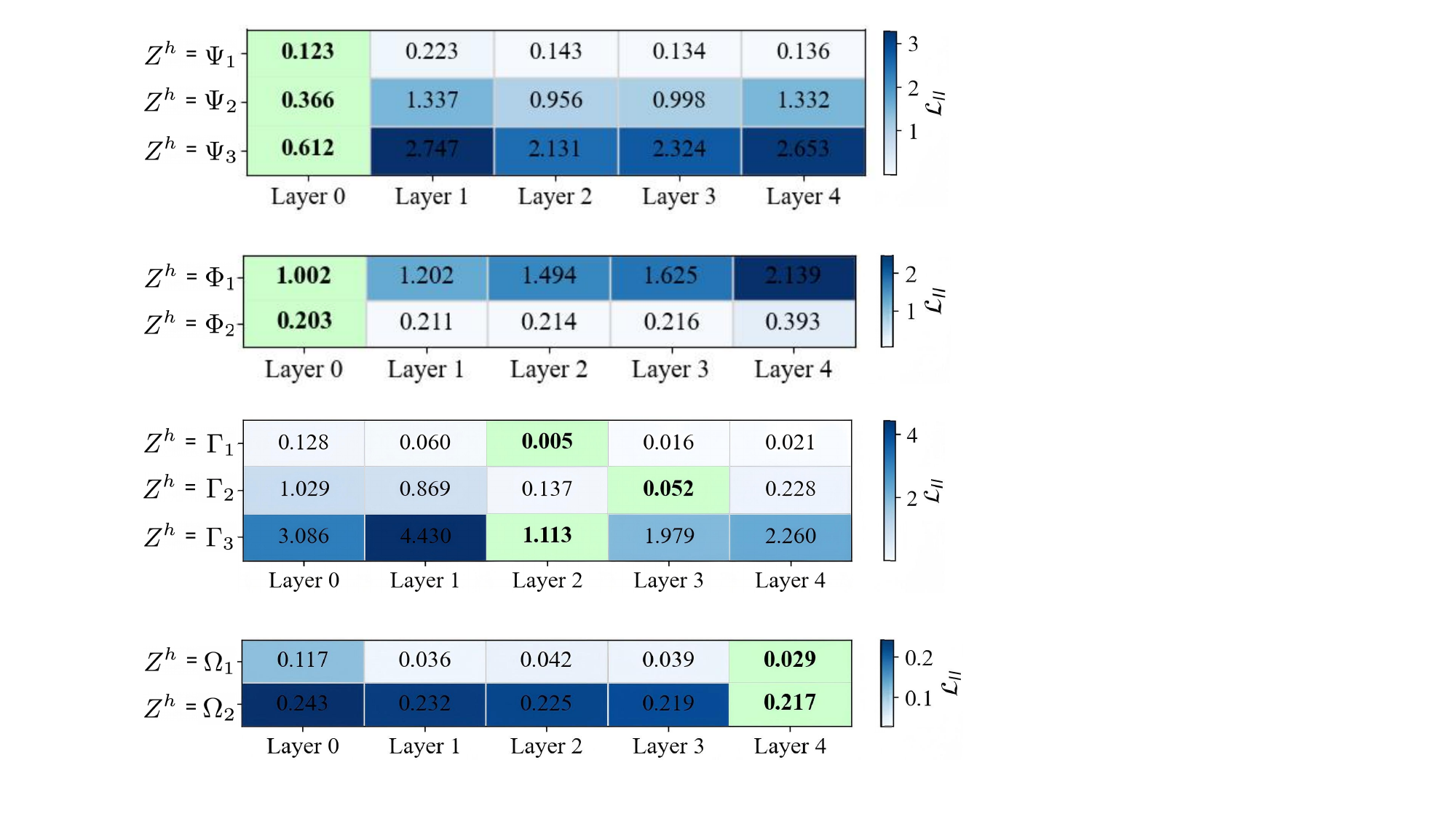}
        \label{fig:exp-Apx-ex2}
    }
    \caption{Extra experiments using OGBN-Arxiv and random data.} 
	\label{fig:exp-ex}
\end{figure}

\begin{table}[h]
\scriptsize
\centering
\caption{Accuracy corresponding to different $q$ values, using different numbers of prompts. Llama3 was used as the enhancer. The standard deviation of results over 10 trials.}
    \vskip 0.1in
\begin{tabular}{c|cccc}
\hline
Dataset & $q=1$ & $q=2$ & $q=3$ & $q=4$ \\ \hline
Cora    & 84.94$\pm$0.81 & 86.45$\pm$0.49 & 85.79$\pm$0.51 & 83.33$\pm$0.23 \\ \hline
Pubmed  & 83.84$\pm$0.77 & 84.32$\pm$1.06 & 83.98$\pm$0.39 & 82.58$\pm$0.44 \\ \hline
\end{tabular}
\label{tab:hyp}
\end{table}

\subsection{Node Correlations}
% 有了丰富的多种类节点特征后，对于构建图数据集而言，我们需要做的就是构建图的拓扑结构了。一种构建方法是基于节点之间的互相关系进行构建，通过节点间关联，类别简单关系完成节点连接。例如在正文中的分析实验中所使用的部分方法。在这里，我们使用从Wikipedia中获取的词条的相互关系，以及人工构建的节点特征来将这种节点间的关系进行确定，让其间因果关系变成确定的。

With a diverse set of rich, multi-category node features in place, the next step in constructing a graph dataset is to build its topological structure. One approach to this is to base the construction on the interrelationships between nodes, connecting them through associations and simple categorical relationships. For instance, this method is partially employed in the analytical experiments presented in the main text. In this work, we leverage the interrelationships extracted from Wikipedia entries, combined with manually crafted node features, to define these connections. This ensures that the causal relationships between nodes are explicitly established, resulting in a well-defined and interpretable graph topology.

\subsection{Topological Structures}
We constructed a total of 10 distinct topological structures, with their types and specific configurations detailed in Table \ref{tab:stc}. In addition to commonly used graph structures in causal representation learning analyses \cite{DBLP:conf/iclr/WuWZ0C22,DBLP:conf/aaai/GaoYLSJWZ024}, we incorporated a variety of graph topologies from graph theory, such as complete graphs and bipartite graphs. This was done to ensure comprehensive coverage of potential graph structures that could have causal effects on classification tasks. By diversifying the graph topologies, our approach aims to provide a robust framework for evaluating the influence of graph structure on causal representation learning and downstream analytical performance.

\subsection{Node-level Analytical Experiments Dataset Details}
\label{apx:node-exp-detail}
% 影响了多少节点，然后交换的数量是一样的，或者说，总之为了减少计算量，干了啥??????
% 对于节点层级的分类数据集，如Section 1所述，我们将待分类节点固定，一次保证其关联因果关系可控可分析。图？给出了一个具体例子。我们采用CCSG中所包含的子类别与相互关联信息，在相关联的节点之间构建边。同时，再将节点分类信息作为第？节中的$h^{\text{node},1}(\cdot)$输出。

For node-level classification datasets, as described in Section \ref{sec:node-level-analysis}, we fix the nodes to be classified and ensure that their associated causal relationships are controllable and analyzable. A specific example is provided in Figure \ref{fig:example-graph-node}. We further utilize the subcategories and interrelationship information included in CCSG to construct edges between the related nodes. At the same time, we partition the nodes into sets based on the shortest path lengths from each node to the target node $v$ and the associations between nodes. The attributes of each node set are processed by $h^{\text{node},1}$. The outputs of $h^{\text{node},1}$ serve as the inputs to $h^{\text{node},2}(\cdot)$. The function determines the output based on the categories of the node itself and its neighboring nodes. These outputs will serve as the input for \( h^{\text{node},3}(\cdot) \), which produces the final classification result for the node. \( h^{\text{node},3}(\cdot) \) takes as input the outputs of \( h^{\text{node},2}(\cdot) \) corresponding to the multi-order neighbors of the target node \( v \) and generates the final result.

Specifically corresponding to the experimental results in Figures 1 and 2, $\Psi_{1}$, $\Psi_{2}$, and $\Psi_{3}$ represent the categories of nodes at distances of 1, 2, and 3 from node $v$, respectively. These categories are determined by the predominant category of these nodes. Subsequently, $\Phi_{1}$ and $\Phi_{2}$ are features constructed based on the categories of nodes and the categories of their neighboring nodes. The difference lies in that $\Phi_{2}$ considers multi-order neighboring nodes.

For node-level analytical experiments, we construct up to 3,000 graph samples, each containing a single target node for classification, 3 classes. On average, each graph consists of 28.9 nodes. We select 500 samples for testing and all samples for training. The test set is utilized for interchange intervention-based analysis. To reduce computational complexity, we also exclude sample pairs that remain completely unchanged before and after the interchange intervention based on $h(\cdot)$.

\subsection{Graph-level Analytical Experiments Dataset Details}
\label{apx:graph-exp-detail}
For graph-level classification datasets, we utilize the topological structures and node features together. Specifically, we use certain substructures and their connections within the topological structures as the output of $h^{\text{graph},1}(\cdot)$. Subsequently, the output of $h^{\text{graph},2}(\cdot)$ represents the category of the entire topological structure, while $h^{\text{graph},3}(\cdot)$ produces the final result based on the category of the topological structure and its connection patterns.

In the results shown in Figures 1 and 2, $\Gamma_{1}$, $\Gamma_{2}$, and $\Gamma_{3}$ are defined as substructures of the topological structures, differing in the number of nodes they contain: $\Gamma_{1}$ consists of structures with 3 nodes, $\Gamma_{2}$ consists of structures with 6 nodes, and $\Gamma_{3}$ consists of structures with 9 nodes. $\Omega_{1}$ and $\Omega_{2}$ specifically refer to the types of graph structures commonly used in causal representation learning analyses, as well as the graph topologies derived from graph theory. Both types of structures are present in each sample.

For graph-level analytical experiments, we construct up to 1500 graph samples, use a total of 10 topological structures, each of which consists of 20.5 node. We select 300 samples for testing and all samples for training. We utilize these samples in Node-level Analytical Experiments.

\section{Experimental Details}
\label{apx:experimental details}
\subsection{Hyperparameters}
% 我们在一台包含一块A100的计算机上运行了我们所有的实验，系统采用了ubuntu20.0操作系统，同时模型的学习率被设置为0.001。训练的回合数量为200，大模型部分使用了开源模型的固定参数. 针对于分析实验，如无特别标注，则均使用了Llama3和GCN作为骨干网络。对于AT模块的相关实验，我们将m设置为2，同时将q设置为10. 
We conducted all our experiments on a computer equipped with a single A100 GPU, running the Ubuntu 20.04 operating system. The learning rate for the model was set to 0.001, and the total number of training epochs was 200. For the LLM components, we utilized pre-trained parameters from publicly available open-source models, keeping them fixed during our experiments. Unless explicitly stated otherwise, Llama3 and a GCN with 4 layers and 256 hidden dimensions (if not specifically declared) were employed as the backbone networks in all analytical experiments. For experiments related to the AT module, the hyperparameter $m$ was set to 2, $\delta$ is set to 10, and $q$ was set to 10. 

\subsection{Implementation Details of the AT module}
The AT module utilizes a Transformer with a 4-layer network and a hidden dimension of 512. It employs an LLM to automatically generate prompts, with the content of the prompt as follows:
\begin{tcolorbox}[colframe=black, colback=gray!20, coltitle=black, coltext=black, boxrule=0.5mm, title=\textcolor{white}{Prompt Generation}]
In order to summarize the content of \textbf{\textcolor{red}{\textless data information\textgreater}}, please provide \textbf{\textcolor{blue}{\textless q\textgreater}} different prompts. The prompts should be in the format of [Prompt Content] + [Content to be Summarized]. Only output the [Prompt Content]. 
\end{tcolorbox}
\textbf{\textcolor{red}{\textless data information\textgreater}} refers to the details regarding the dataset, whereas \textbf{\textcolor{blue}{\textless q\textgreater}} represents the hyperparameter $q$. It is worth noting that the AT module also supports the use of manually designed prompts, which can help in selecting more effective prompts.

\section{Extra Experiments.}
\label{apx:extra experiments}
In Figures \ref{fig:exp-Apx1-2} and \ref{fig:exp-Apx3-6}, we provide a more comprehensive presentation of the experimental results, including the alignment outcomes from several additional experiments. It is evident from all the experimental results that the phenomenon described in Empirical Finding 2 consistently emerges—indicating that, as the size of the model changes, it retains certain specific logic structures that remain unaffected by these variations. Furthermore, the results illustrated in Figure \ref{fig:chart-dim} also reveal the correlation between the optimal $\mathcal{L}_{\text{II}}$ and the model's accuracy. Figure \ref{fig:exp-ex} presents further experiments conducted with different datasets. Finally, Table \ref{tab:hyp} summarizes the hyperparameter experiment for $q$.

%%%%%%%%%%%%%%%%%%%%%%%%%%%%%%%%%%%%%%%%%%%%%%%%%%%%%%%%%%%%%%%%%%%%%%%%%%%%%%%
%%%%%%%%%%%%%%%%%%%%%%%%%%%%%%%%%%%%%%%%%%%%%%%%%%%%%%%%%%%%%%%%%%%%%%%%%%%%%%%

\end{document}